\documentclass{article}

\usepackage{microtype}
\usepackage{subcaption}

\usepackage{hyperref}       
\usepackage{url}            
\usepackage{booktabs}       
\usepackage{amsfonts}       
\usepackage{nicefrac}       
\usepackage{microtype}      
\usepackage{xcolor}         
\usepackage{adjustbox}

\usepackage{amsmath, amssymb, amscd, amsthm, amsfonts}
\usepackage{graphicx}
\usepackage{hyperref}
\usepackage{listings}
\usepackage{etoolbox}   
\usepackage{verbatim}   
\usepackage{fancyvrb}
\usepackage{menukeys}
\usepackage{titlesec}
\usepackage{float}
\usepackage{algorithm, algorithmic}
\usepackage{enumitem}
\usepackage{diagbox}
\usepackage{soul}
\usepackage{wrapfig}
\usepackage{multirow}
\usepackage{xurl}

\newcommand{\precond}{\textsf{\small Cond-DP}}
\newcommand{\switch}{\textsf{\small Switch-Cond-DP}}
\newcommand{\dpsgd}{\textsf{\small DPSGD}}
\newcommand{\wtdllp}{\textsf{\small Weighted-LLP}}
\newcommand{\rronbins}{\textsf{\small RR-on-Bins$_{\eps}^{\Phi}$}}

\allowdisplaybreaks

\usepackage[accepted]{icml2026}

\usepackage{amsmath,amsfonts,bm}
\usepackage{amsthm}
\usepackage{mathtools}

\newtheorem{lemma}{Lemma}[section]
\newtheorem{theorem}[lemma]{Theorem}
\newtheorem{assumption}[lemma]{Assumption}
\newtheorem{definition}[lemma]{Definition}

\newtheorem{remark}[lemma]{Remark}

        {\hspace*{\fill}$\Box$\par}

\newtheorem{innercustomgeneric}{\customgenericname}
\providecommand{\customgenericname}{}
\newcommand{\newcustomtheorem}[2]{%
  \newenvironment{#1}[1]
  {%
  \renewcommand\customgenericname{#2}%
  \renewcommand\theinnercustomgeneric{##1}%
  \innercustomgeneric
  }
  {\endinnercustomgeneric}
}

\newcustomtheorem{customthm}{Theorem}
\newcustomtheorem{customlemma}{Lemma}
\newcustomtheorem{customproblem}{Problem}

\DeclarePairedDelimiterX{\infdivx}[2]{(}{)}{%
  #1\;\delimsize\|\;#2%
}

\def\eqref#1{equation~\ref{#1}}

\def\1{\bm{1}}

\def\eps{{\epsilon}}

\def\rvb{{\mathbf{b}}}

\def\rvg{{\mathbf{g}}}

\def\rvu{{\mathbf{i}}}

\def\rvu{{\mathbf{u}}}
\def\rvv{{\mathbf{v}}}

\def\rvx{{\mathbf{x}}}
\def\rvy{{\mathbf{y}}}

\def\mA{{\bm{A}}}
\def\mB{{\bm{B}}}
\def\mC{{\bm{C}}}
\def\mD{{\bm{D}}}

\def\mG{{\bm{G}}}

\def\mS{{\bm{S}}}

\def\mU{{\bm{U}}}
\def\mV{{\bm{V}}}
\def\mW{{\bm{W}}}
\def\mX{{\bm{X}}}

\DeclareMathAlphabet{\mathsfit}{\encodingdefault}{\sfdefault}{m}{sl}
\SetMathAlphabet{\mathsfit}{bold}{\encodingdefault}{\sfdefault}{bx}{n}

\def\gA{{\mathcal{A}}}

\def\gD{{\mathcal{D}}}
\def\gE{{\mathcal{E}}}

\def\gK{{\mathcal{K}}}
\def\gL{{\mathcal{L}}}
\def\gM{{\mathcal{M}}}
\def\gN{{\mathcal{N}}}
\def\gO{{\mathcal{O}}}

\def\gX{{\mathcal{X}}}
\def\gY{{\mathcal{Y}}}

\def\sI{{\mathbb{I}}}

\newcommand{\E}{\mathbb{E}}

\newcommand{\R}{\mathbb{R}}

\DeclareMathOperator*{\argmin}{arg\,min}

\usepackage[capitalize,noabbrev]{cleveref}

\usepackage[textsize=tiny]{todonotes}

\newcommand{\update}[1]{\textcolor{black}{#1}}

\begin{document}

\twocolumn[
  \icmltitle{Private Learning with Public Feature Conditioning}



  \icmlsetsymbol{equal}{*}
  \icmlsetsymbol{note}{*}

  \begin{icmlauthorlist}
    \icmlauthor{Shuli Jiang}{comp1,note}
    \icmlauthor{Walid Krichene}{comp2,note}
    \icmlauthor{Nicolas Mayoraz}{comp3}
  \end{icmlauthorlist}

  \icmlaffiliation{comp1}{AWS Agentic AI}
  \icmlaffiliation{comp2}{Microsoft}
  \icmlaffiliation{comp3}{Google Research}

  \icmlcorrespondingauthor{Shuli Jiang}{shulij@amazon.com}
  \icmlcorrespondingauthor{Walid Krichene}{wkrichene@microsoft.com}
  \icmlcorrespondingauthor{Nicolas Mayoraz}{nmayoraz@google.com}

  \icmlkeywords{Machine Learning, ICML}

  \vskip 0.3in
]



\printAffiliationsAndNotice{* Part of the work was done while at Google Research. \break}  

\begin{abstract}
  We study differentially private (DP) regression in settings where each data sample includes public, non-sensitive features—common in applications like recommendation or advertising systems.
    While such label DP or DP with semi-sensitive features settings have been primarily explored in the context of classification, effective approaches for regression remain underexplored.
    We introduce \textsf{\small Cond-DP}, a conditioned variant of \textsf{\small DPSGD} that leverages the structure of public feature matrices to improve optimization under privacy constraints. 
    Motivated by the observation that these public features often exhibit rapidly decaying spectra, \textsf{\small Cond-DP} incorporates a data-driven conditioning matrix to reshape the optimization landscape and accelerate convergence.
    We provide convergence guarantees for convex, strongly convex and non-convex settings, and recover standard \textsf{\small DPSGD} as a special case when the conditioning matrix is the identity.
    We show how to construct an effective conditioning matrix for \textsf{\small Cond-DP} directly from public features, enabling provably faster convergence than \textsf{\small DPSGD} in private linear regression, without incurring additional privacy cost. 
    Empirically, \textsf{\small Cond-DP} with this conditioning matrix consistently outperforms state-of-the-art baselines across a wide range of datasets and model architectures under label DP, demonstrating strong and robust performance in practice. 
\end{abstract}

\section{Introduction}
Differential privacy (DP)~\cite{dwork2014dp} has become the leading framework for training machine learning models while providing formal guarantees on the privacy of sensitive training data. However, a fundamental challenge in DP is the inherent privacy-utility trade-off: models trained with privacy guarantees often suffer a drop in utility compared to their non-private counterparts. This loss in performance can be especially critical in utility-sensitive applications such as recommendation and advertising systems, where even a small increase in model's prediction error can have significant downstream impacts on engagement.

An effective approach to improve the privacy-utility trade-off in real-world applications is to leverage public, non-sensitive features that naturally occur in many datasets.
For example, in user shopping data, item descriptions are typically public, while individual shopping histories are sensitive. 
Such scenarios are formalized under label differential privacy (label DP), where features are public and labels are sensitive~\cite{ghazi2023regression_ldp, ghazi2021deep_learning_ldp}. More recently, a notion of DP with semi-sensitive features was proposed~\cite{krichene2023priv_learning_pub_features,chua2024adkdd}, where each sample includes a mix of public and private features, and the labels remain private.
The semi-sensitive setting interpolates between label DP (in which all features are public) and the classical DP setting (in which all features are private).

While prior work has leveraged public features to improve privacy-utility trade-offs, several limitations remain. In the better-studied label DP setting, most existing approaches focus on classification tasks with discrete labels~\cite{ghazi2021deep_learning_ldp, ghazi2024labeldppro, Busa-Fekete2023label_do_priv_data_release, esfandiari2021ldp_clustering, esmaeili2021antipodes_ldp}, and are not readily applicable to regression with continuous labels.
To the best of our knowledge, only two works address regression tasks under label DP. 
The first~\cite{ghazi2023regression_ldp} provides pure DP guarantees and considers a feature-oblivious setting, where the algorithm is based solely on private labels and is independent of public features. This approach has two main drawbacks: (1) approximate DP is more widely adopted in practice due to its more favorable privacy-utility trade-offs, and this difference is often fundamental; (2) ignoring the structure of public features misses opportunities to further improve model utility. 
The second work~\cite{brahmbhatt2023ldp_aggregation} investigates a specific type of aggregation-based algorithm for private linear regression under approximate label DP. However, its goal is to study whether a certain narrow class of aggregation algorithms used in practice can be made to satisfy label DP guarantees, leaving open the question of whether better algorithms exist for this purpose.

Few works~\cite{krichene2023priv_learning_pub_features,chua2024hybrid_dp_kdd} have studied the more recent setting of DP with semi-sensitive features.
\cite{krichene2023priv_learning_pub_features} is only applicable to a specific class of model architectures (consisting of dual-encoder models). \cite{chua2024hybrid_dp_kdd} is applicable more broadly but performs poorly in high privacy regimes. Their method consists of using Randomized Response (RR)~\cite{dwork2014dp} to privatize labels, using them to train an initial model on public features only, then using this model to warm-start the full model (which is trained using DP-SGD).
The poor performance was attributed to the rapid degradation of RR in high-privacy regimes: RR-privatized labels become too noisy, making the warm-start essentially useless.

These limitations of existing approaches motivate us to ask:

\fbox{%
\parbox{0.48\textwidth}{%
\textit{
Can we better leverage the structure of public features—without jointly using private labels—in regression tasks under approximate differential privacy, to improve utility in high-privacy regimes?
}
}%
}

In this work, we focus on a \update{general class of models with linear input transformations} that are widely used in recommendation and advertising systems, as illustrated in Figure~\ref{fig:model_arch}. The model comprises two main components: a bottom linear embedding layer that learns embedding vectors for input features, and an (optional) top prediction component, consisting of potentially nonlinear layers. 
Common choices for the prediction component include multi-layer perceptrons (MLPs)~\cite{Popescu2009mlp} and Factorization Machines (FM)~\cite{rendle2010FM}.
This model includes the important case of private linear regression, where the embedding layer is of output dimension one and there is no prediction layer.
The bottom linear embedding layer plays a critical role in our setting, as its parameters are directly tied to the input features. By using knowledge about the public features, one can hope to learn the embedding layer more effectively.

One natural idea to leverage public features without jointly using the private labels is based on the observation that, in some applications, the public feature matrix (i.e., that matrix formed by stacking the public features of all training samples) lies in a low-dimensional subspace. One can then factorize this matrix and use a compact low-rank representation to reduce the number of model parameters in the bottom layer. This dimensionality reduction would, in principle, lead to less noise being added by \dpsgd\ \cite{Abadi2016dpsgd} during training (where the noise dimension is proportional to the number of model parameters),
thereby improving utility under fixed privacy guarantees. The approaches of~\cite{kairouz2021nearly,krichene2023priv_learning_pub_features} fall under this category. 
However, dimensionality reduction does not always lead to utility improvement. Notably, for the class of \update{Generalized Linear Models (GLMs)}, \update{a special case of the models considered in this work}, ~\cite{song2021private_glm} shows that when model parameters are bounded, \dpsgd\ naturally adapts to the low-rank structure of the input without the need for explicit dimension reduction. That is, the excess risk of \dpsgd\ for GLMs naturally scales with the rank of the input features rather than the ambient dimension, rendering explicit dimensionality reduction unnecessary.

\begin{figure}[h]
  \centering
  \begin{minipage}[t]{0.29\textwidth}
  \vspace{0pt}
    \centering
    \includegraphics[width=\linewidth]{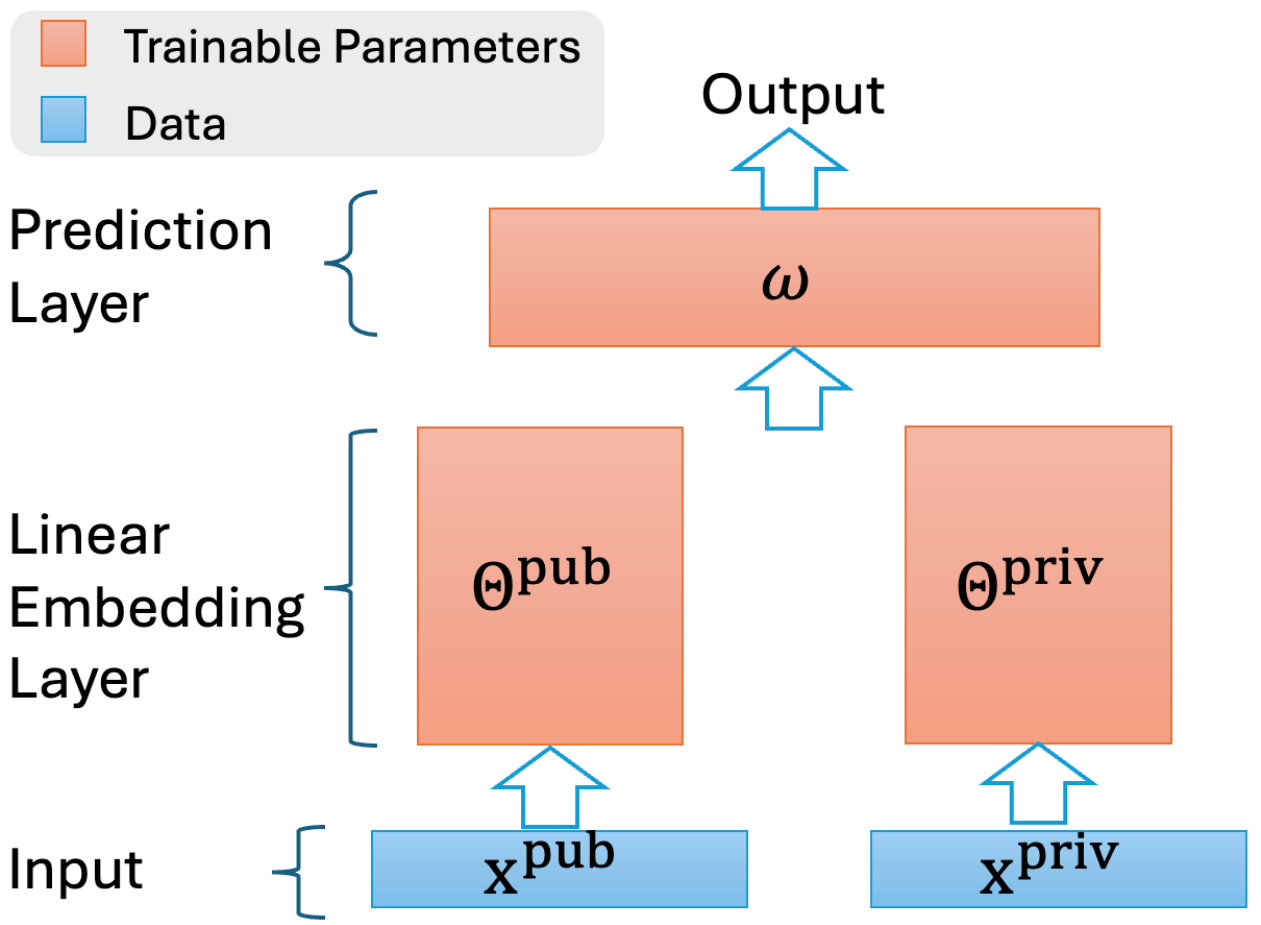}
    \caption{Model architecture.}
    \label{fig:model_arch}
  \end{minipage}\hfill
  \begin{minipage}[t]{0.18\textwidth}
  \vspace{0pt}
    \centering
    \includegraphics[width=0.84\linewidth]{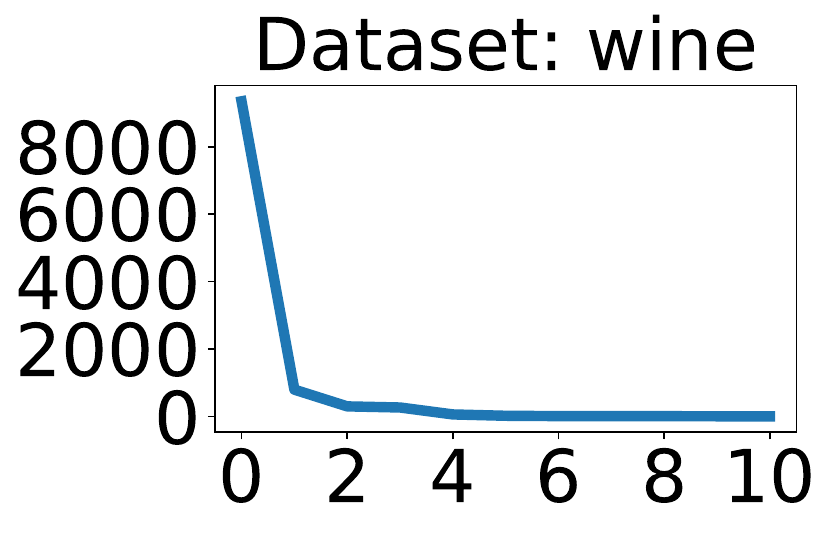}\\[2pt]
    \includegraphics[width=\linewidth]{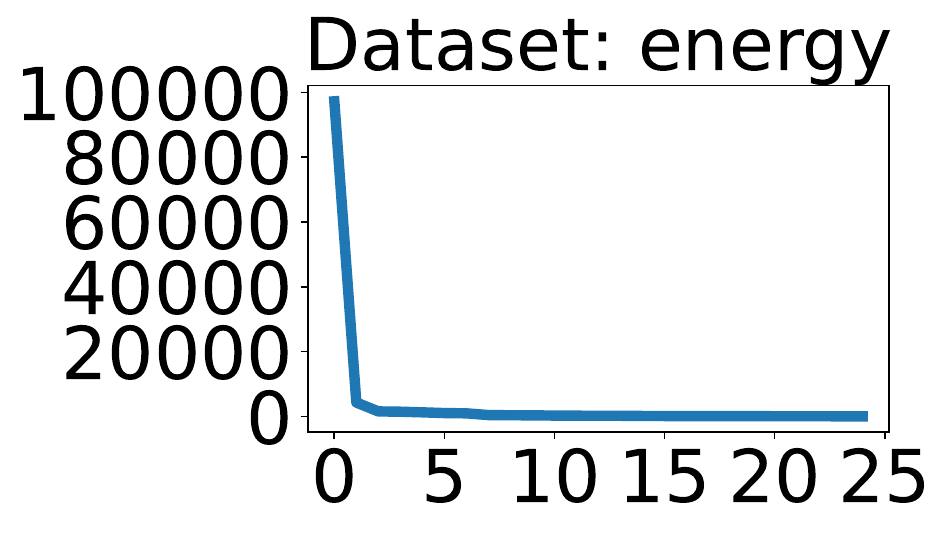}
    \caption{Example feature matrix spectra. }
    \label{fig:spectrum_datasets}
  \end{minipage}
\end{figure}

Building on the above ideas, we make a related observation: in many applications, the public feature matrix may not be strictly low-rank; but may exhibit a rapidly decaying spectrum. See
Figure~\ref{fig:spectrum_datasets} for an example taken from our numerical experiments.
Directions corresponding to the larger singular values naturally receive more weight during optimization than those associated with smaller ones.
Since \dpsgd\ perturbs gradients using isotropic noise, this results in a low signal-to-noise ratio for the low-spectrum directions, resulting in slower convergence.
We aim to address this by adapting the geometry of the problem, amplifying contributions from underrepresented directions during optimization.

Motivated by this, we propose to use knowledge about the public feature matrix to condition the parameters of the input layer, aiming to improve the signal-to-noise ratio along low-spectrum directions, and thus improving model utility—without incurring additional privacy loss.
We introduce \precond\ (Algorithm~\ref{alg:cond_dp}), which uses a conditioning matrix that transforms the parameter space associated with public feature embeddings, then trains the conditioned model using \dpsgd~\cite{Abadi2016dpsgd}.

We derive utility or convergence guarantees of $\precond$ using private \update{models with linear input transformations} under convex objectives (see Theorem~\ref{thm:convergence_glm}), strongly convex objectives (see Theorem~\ref{thm:convergence_strongly_convex}), and non-convex objectives (see Theorem~\ref{thm:convergence_non_convex}).
In the special case of private linear regression (see Lemma~\ref{lemma:improvement_C_lin_reg}), we can quantify the extent of utility improvement: we show that a carefully chosen conditioning matrix constructed from the public feature matrix provably leads to faster convergence compared to \dpsgd\, without additional privacy loss.

For more complex models, quantifying the improvement analytically would require access to the optimal solution of the underlying optimization problem, which is typically unavailable. We therefore complement our theoretical analysis with an empirical evaluation of the same conditioning strategy applied to \update{models with linear input transformations and} an MLP prediction head, as commonly used in advertising and recommendation systems.
Empirically, we observe that when using MLPs, conditioning accelerates convergence in the early stages of training but can hinder progress in later epochs. Motivated by this behavior, we propose \switch, a hybrid strategy that applies the conditioning matrix during the initial phase of training and then switches to \dpsgd\ in later epochs.

Experiments on a diverse set of datasets demonstrate that our approach consistently improves utility under fixed privacy budgets, especially in high privacy regimes. We compare against several baselines, including the state-of-the-art algorithm for regression under label DP guarantees~\cite{ghazi2023regression_ldp}, and demonstrate substantial gains in this setting.

In summary, we make the following contributions:
\begin{enumerate}[topsep=0pt]
\setlength{\itemsep}{0pt}
  \setlength{\parskip}{0pt}
  \setlength{\parsep}{0pt}
    \item We propose \precond\ (Algorithm~\ref{alg:cond_dp}), a framework that introduces a conditioning matrix to rescale the embedding space of public features.

    \item 
    We provide privacy (Theorem~\ref{thm:privacy}) and convergence guarantees for \precond\ in private \update{models with linear input transformations} under convex (Theorem~\ref{thm:convergence_glm}), strongly convex (Theorem~\ref{thm:convergence_strongly_convex}), and non-convex objectives (Theorem~\ref{thm:convergence_non_convex}). We further propose a concrete conditioning matrix constructed from public features.

    \item To effectively extend our approach to more complex, non-linear models, we introduce \switch.

    \item We empirically evaluate \precond\ and \switch\ across private linear and non-linear models in regression tasks. Experiments on both synthetic and diverse real-world datasets demonstrate consistent utility improvements over state-of-the-art baselines in the presence of public features.
\end{enumerate}
\vspace{-5pt}

\section{Related Work}

\textbf{Label Differential Privacy.}
Most prior work on label differential privacy (DP) focuses on classification and does not directly extend to regression. Early work by~\cite{Chaudhuri2011label_dp_seminal} studies sample complexity under label DP. Subsequent methods leverage Randomized Response (RR) and public information to improve utility in classification tasks, including RR with priors~\cite{ghazi2021deep_learning_ldp}, gradient projection onto low-dimensional subspaces derived from public features~\cite{ghazi2024labeldppro}, clustering using public features~\cite{esfandiari2021ldp_clustering}, modeling correlations between public features and private labels~\cite{Busa-Fekete2023label_do_priv_data_release}, and PATE-style teacher aggregation~\cite{esmaeili2021antipodes_ldp,papernot2017pate,papernot2018scalable_pate}.

Label DP for regression has received comparatively less attention. \cite{Wang2019sparse_lin_reg_local_dp} studies private linear regression under local DP by exploiting sparsity, while \cite{ghazi2023regression_ldp} proposes \rronbins, an RR-based method that bins private responses using prior label distributions, either known or privately estimated. \cite{brahmbhatt2023ldp_aggregation} explores random aggregation of samples and labels to achieve label DP guarantees in linear regression.

\textbf{Differential Privacy with Public Features.}
Recent work generalizes label DP to settings with both private and non-sensitive public features. \cite{krichene2023priv_learning_pub_features} proposes sufficient statistics perturbation, which adds noise only to parameters associated with private features, but is restricted to models with dot-product interactions between public and private embeddings. \cite{chua2024hybrid_dp_kdd} introduces a two-stage approach that first trains a model using privatized labels and public features, then fine-tunes a full model using \dpsgd, requiring a split privacy budget and showing degraded performance in high-privacy regimes.
We note that these approaches are supervised in nature. Our approach differs in that it leverages \emph{unsupervised} information from public features via conditioning, without incurring additional privacy cost, making it effective even under stringent privacy constraints.

A recent work~\cite{Saeed2025ml_privacy_protected_attr} introduces \emph{feature DP} by transforming the disclosable part of each sample and treating it as public, and adding an auxiliary public loss in a modified \dpsgd\ objective. Our approach is orthogonal, and focuses on leveraging unsupervised information. It can be readily combined with ~\cite{Saeed2025ml_privacy_protected_attr} whenever 
\update{the public component has a linear input layer.}

\vspace{-5pt}
\section{Preliminaries}

\textbf{Problem Formulation.}
We consider supervised regression tasks where data samples are drawn from an unknown distribution $\gD$ over $\gX \times \gY$, where $\gX$ is the feature space and $\gY$ are bounded real-valued labels. The feature space is decomposed as
$\gX = \gX^{\text{pub}} \times \gX^{\text{priv}}$, 
where $\gX^{\text{pub}} \subset \R^{d^{\text{pub}}}$ represents non-sensitive public features and $\gX^{\text{priv}} \subset \R^{d^{\text{priv}}}$ represents sensitive private features.
The dataset is $D = \{(\rvx_{i}^{\text{pub}}, \rvx_{i}^{\text{priv}}, y_i)\}_{i=1}^{n} \sim \gD$, where $\rvx_{i}^{\text{pub}} \in \gX^{\text{pub}}$ and $\rvx_{i}^{\text{priv}} \in \gX^{\text{priv}}$. 
The private feature dimension $d^{\text{priv}}$ may be zero, indicating the absence of private features.

\textbf{\update{Models with Linear Input Transformations.}}
Our goal is to learn a \update{general family of predictors} $g: \gX \rightarrow \R$ with the following structure.
\begin{enumerate}[leftmargin=6mm, topsep=0mm]
\setlength{\itemsep}{0pt}
  \setlength{\parskip}{0pt}
  \setlength{\parsep}{0pt}
    \item \textbf{Linear Embedding Layer.} Given an input sample $(\rvx^{\text{pub}}, \rvx^{\text{priv}}, y_i)$, the model first maps features into embeddings by computing: $\rvv^{\text{priv}} = \Theta^{\text{priv}}\rvx^{\text{priv}}$ and $\rvv^{\text{pub}} = \Theta^{\text{pub}} \rvx^{\text{pub}}$, 
    where $\Theta^{\text{pub}} \in \R^{p^{\text{pub}} \times d^{\text{pub}}}$
    and $\Theta^{\text{priv}} \in \R^{ p^{\text{priv}} \times d^{\text{priv}}}$ are learnable parameters, and $p^{\text{pub}}$, $p^{\text{priv}}$ are the embedding dimensions.
    \item \textbf{Prediction Component.}
    A function $f_{\omega}: \R^{ p^{\text{pub}}} \times \R^{ p^{\text{priv}}} \rightarrow \R$, parameterized by $\omega \in \R^{p}$, maps the embeddings to a scalar prediction. In practice, this function \update{can consist of several layers, and} can be non-linear, e.g., an MLP or a factorization machine~\cite{rendle2010FM}, commonly used in recommendation systems.
\end{enumerate}
\begin{remark}
\update{The main structural assumption we make is that the model starts with a linear layer (a common occurrence in practice) and that the private and public features can be separated at the input layer, another mild assumption -- note that any linear layer $\Theta x$ can be written as $\Theta^{\text{pub}}\rvx^{\text{pub}} + \Theta^{\text{priv}} \rvx^{\text{priv}}$ (where $x$ is the concatenation of $\rvx^{\text{pub}}, \rvx^{\text{priv}}$ and $\Theta$ is the concatenation of $\Theta^{\text{pub}}, \Theta^{\text{priv}}$).}
\end{remark}
\begin{remark}
\update{The model class considered here includes as a special case generalized linear models (GLMs)~\cite{song2021private_glm}. In classical GLMs, given an input feature vector $\rvx$, the model outputs $f_{\omega}(\langle \theta, \rvx \rangle)$ where $\theta$ is the parameter vector, whereas our model class outputs $f_{\omega}(\Theta \rvx)$, where $\Theta$ is a linear transformation parameter.}
\end{remark}

We train the model by minimizing the empirical loss,
{\small
\begin{align}
\label{eq:loss}
    \gL(\Theta^{\text{pub}}, \Theta^{\text{priv}}, \omega; &D) 
    = \frac{1}{n} \sum_{i=1}^{n} \gL_i(\Theta^{\text{pub}}, \Theta^{\text{priv}}, \omega)\\
    \nonumber
    &= \frac{1}{n}\sum_{i=1}^{n}
    \left[ l(f_{\omega}( \Theta^{\text{pub}} \rvx_i^{\text{pub}}, \Theta^{\text{priv}} \rvx_i^{\text{priv}}), y_i) \right]
\end{align}
}%
where $l(\cdot, \cdot)$ is a suitable function, for instance, $l(\widehat{y}, y) = (\widehat{y} - y)^2$.
When 
$p^{\text{priv}} = p^{\text{pub}} = 1$ and $f_{\omega}(\rvv^{\text{pub}}, \rvv^{\text{priv}}) = \rvv^{\text{pub}} + \rvv^{\text{priv}}$, the problem reduces to linear regression.

\textbf{Label DP and DP with Semi-Sensitive Features.}
Following prior work~\cite{Chaudhuri2011label_dp_seminal, chua2024hybrid_dp_kdd}, we define two datasets $D, D'$ as adjacent
if they differ in the private features (if present) and the label of a single sample
—specifically, by replacing
$(\rvx^{\text{pub}}, \rvx^{\text{priv}}, y)$ with $(\rvx^{\text{pub}}, \widetilde{\rvx}^{\text{priv}}, \widetilde{y})$.
When private features are absent, this definition naturally reduces to label differential privacy (Label DP~\cite{Chaudhuri2011label_dp_seminal}).

\begin{definition}[DP~\cite{dwork2014dp}]
\label{def:dp}
    For $\eps, \delta \geq 0$, a randomized mechanism $\gM$ satisfies $(\eps, \delta)$-DP, if for all pairs of adjacent datasets $D, D'$, and for all outcome events $S$, it holds that $\Pr[\gM(D) \in S] \leq e^{\eps}\cdot \Pr[\gM(D') \in S] + \delta$.
\end{definition}
When $\delta = 0$ (respectively $\delta > 0$), the above guarantee is referred to as pure DP (respectively approximate DP).

\textbf{Notation.}
We use $\|\rvx\|$ to denote the Euclidean norm of a vector $\rvx$ and $\|\mX\|_F$ to denote the Frobenius norm of a matrix~$\mX$. Let $\|\rvx\|_{\mA} \triangleq \sqrt{\rvx^{T}\mA \rvx}$. 
Let $\text{Tr}(\cdot)$ denote the matrix trace operator.
$\sI_d$ is the identity matrix of size $d\times d$.
Let $d = p^{\text{pub}} d^{\text{pub}} + p^{\text{priv}} d^{\text{priv}} + p$ denote the total number of model parameters.
Let $((\Theta^{\text{pub}})^{*}, (\Theta^{\text{priv}})^{*}, \omega^{*}) \in \argmin_{\Theta^{\text{pub}}, \Theta^{\text{priv}}, \omega} \gL(\Theta^{\text{pub}}, \Theta^{\text{priv}}, \omega;D)$ be optimal parameters and $\gL^*$ be the optimal loss.
Finally, $\gN_{m \times n}(0, \sigma^2)$ denotes the distribution of real $m \times n$ matrices with each entry sampled from the Gaussian distribution~$\gN(0, \sigma^2)$.

\section{$\precond$: Conditioning Model Parameters with Public Features}

We propose \precond\ (Algorithm~\ref{alg:cond_dp}), a variant of \dpsgd\ \cite{Abadi2016dpsgd} that incorporates a conditioning matrix
$\mC \in \R^{d^{\text{pub}} \times d^{\text{pub}}}$  to improve the optimization landscape and potentially accelerate convergence when learning public feature embeddings.

The matrix $\mC$ can be thought of as a hyperparameter, fixed throughout training, and does not incur any additional privacy cost, as it will be computed based only on public information. 
We discuss the choice of $\mC$ to improve convergence later in this section.
Specifically, in \precond, given an input sample $(\rvx^{\text{pub}}, \rvx^{\text{priv}}, y)$, 
we replace the standard public embedding computation
$\rvv^{\text{pub}} = \Theta^{\text{pub}} \rvx^{\text{pub}}$
with the conditioned version
$\rvv^{\text{pub}} = \Theta^{\text{pub}} \mC \rvx^{\text{pub}}$.
The model is then trained using \dpsgd\ by minimizing the loss function $\gL(\Theta^{\text{pub}} \mC, \Theta^{\text{priv}}, \omega; D)$. 
Figure~\ref{fig:model_C} illustrates the model with the conditioning matrix $\mC$ in \precond.

\begin{figure}[t]
    \centering
    \includegraphics[width=0.6\linewidth]{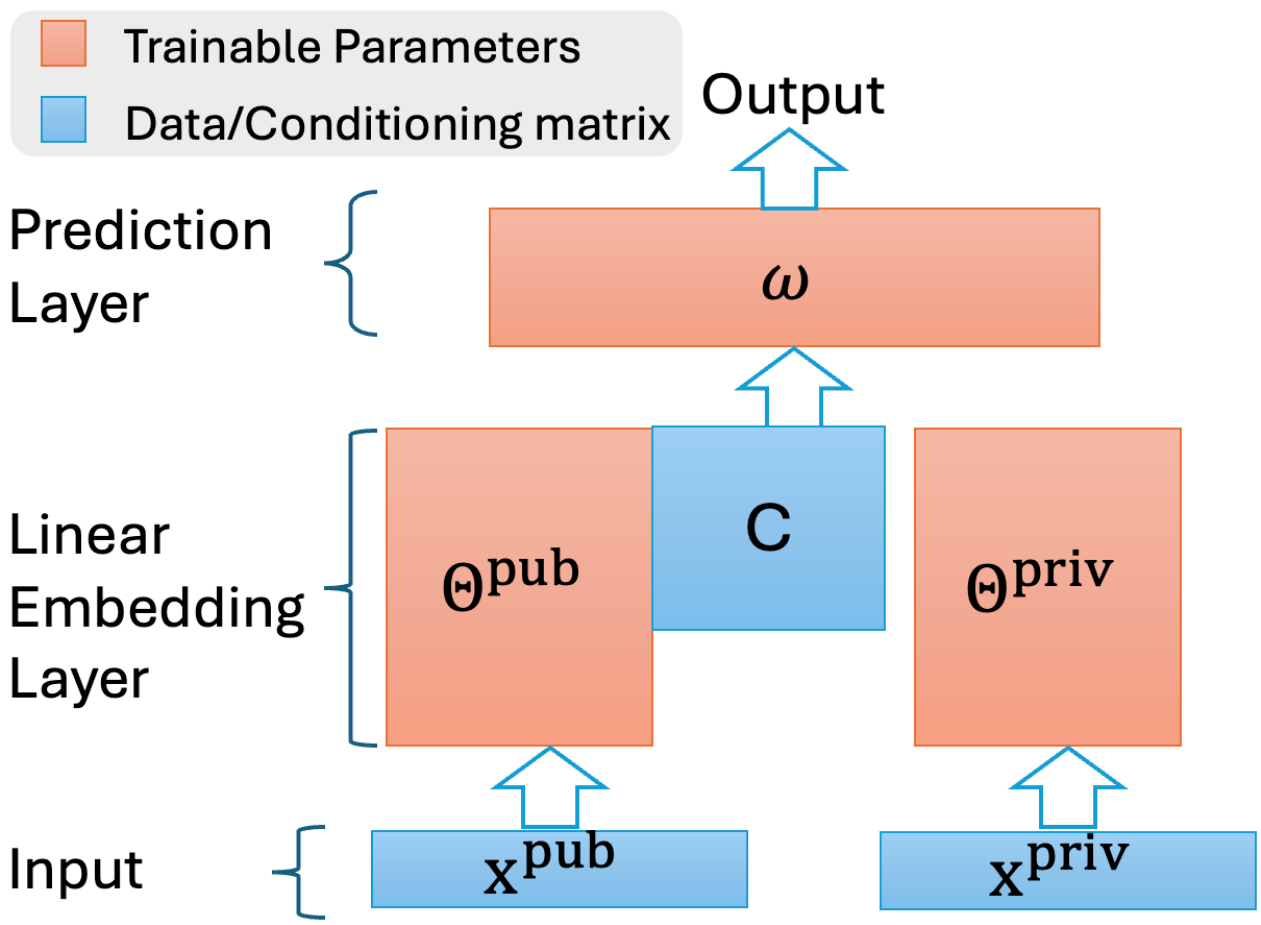}
    \caption{Model with conditioning matrix $\mathbf{C}$ trained by \precond.}
    \vspace{-10pt}
    \label{fig:model_C}
\end{figure}

  \begin{algorithm}[h]
    \caption{$\precond$}
    \label{alg:cond_dp}
    \begin{algorithmic}
        \STATE Input: dataset $D=\{(\rvx_i^{\text{pub}}, \rvx_i^{\text{priv}}, y_i)\}_{i=1}^{n}$, conditioning matrix $\mC$,
        initialization $\Theta_0^{\text{pub}}, \Theta_0^{\text{priv}}, \omega_0$,
        number of iterations $T$
        learning rate $\{\eta_t\}_{t=1}^{T}$,
        noise variance~$\sigma^2$
        \FOR{$t=0,1,\dots, T-1$}
            \STATE $l_{t, i}\leftarrow \gL_i(\Theta_t^{\text{pub}}\mC, \Theta_t^{\text{priv}}, y_i), \forall i\in [n]$
            \STATE Compute gradient: $\mG_{t}^{\text{pub}} \leftarrow \frac{1}{n}\sum_{i=1}^{n} \frac{\partial l_{t, i}}{\partial \Theta_t^{\text{pub}}}$,
            $\mG_t^{\text{priv}} \leftarrow \frac{\partial l_{t, i}}{\partial \Theta_t^{\text{priv}}}$,
            $\rvg_t^{\text{top}} \leftarrow \frac{1}{n}\sum_{i=1}^{n} \frac{\partial l_{t, i}}{\partial \omega_t}$
            \STATE Sample noise: $\mB_t^{\text{pub}} \sim \gN_{p^{\text{pub}} \times d^{\text{pub}}}(0, \sigma^2)$, $\mB_t^{\text{priv}} \sim \gN_{p^{\text{priv}} \times d^{\text{priv}}}(0, \sigma^2)$, $\rvb_t^{\text{top}} \sim \gN(0, \sigma^2 \sI_{p})$
            \STATE {\small $\Theta_{t+1}^{\text{pub}} \leftarrow \Theta_{t}^{\text{pub}} - \eta_t (\mG_t^{\text{pub}} + \mB_t^{\text{pub}}),
            \Theta_{t+1}^{\text{priv}} \leftarrow \Theta_t^{\text{priv}} - \eta_t (\mG_t^{\text{priv}} + \mB_t^{\text{priv}}),
            \omega_{t+1} \leftarrow \omega_t - \eta_t (\rvg_t^{\text{top}} + \rvb_t^{\text{top}})$
            }
        \ENDFOR
        \STATE Return $\Theta^{\text{pub-out}} = \frac{1}{T}\sum_{t=1}^{T} \Theta_t^{\text{pub}}$,
        $\Theta^{\text{priv-out}} = \frac{1}{T}\sum_{t=1}^{T} \Theta_t^{\text{priv}}$,
        $\omega^{\text{out}} = \frac{1}{T}\sum_{t=1}^{T} \omega_t$
    \end{algorithmic}
    \vspace{-1pt}
  \end{algorithm}

\subsection{Privacy and Utility Guarantees}

We first present the privacy guarantee in Theorem~\ref{thm:privacy} (full proof in Appendix~\ref{sec:appendix_privacy}), which applies to all types of losses. 
Then, we present the utility analysis for 
$\precond$ in terms of convergence in private \update{models with linear input transformations} as follows under different types of losses: 1) convex loss in Theorem~\ref{thm:convergence_glm} (full proof in Appendix~\ref{sec:appendix_convergence_glm}), 2) strongly convex loss in Theorem~\ref{thm:convergence_strongly_convex} (full proof in Appendix~\ref{sec:appendix_convergence_strongly_convex}), 
and 3) non-convex loss in Theorem~\ref{thm:convergence_non_convex} (full proof in Appendix~\ref{sec:appendix_convergence_non_convex}).

\begin{assumption}[Fine-grained Lipschitz condition
\footnote{This assumption is required solely for the privacy analysis (not for convergence) and is common in DP.}
]
\label{ass:lipschitzness}
    There exist constants $\widehat{G}, \overline{G}, G, \update{R} > 0$ such that
    $\|\frac{\partial \gL}{\partial \rvv^{\text{priv}}}\|^2 \leq \widehat{G}^2$,
    $\|\frac{\partial \gL}{\partial \omega}\|^2 \leq \overline{G}^2$
    and $\|\frac{\partial \gL}{\partial \rvv^{\text{pub}}}\|^2 \leq G^2$, for any $\rvv^{\text{priv}}$, $\omega$ and $\rvv^{\text{pub}}$\update{, and $\|\rvx^{\text{priv}}\|^2 \leq R^2$, for any private input features $\rvx^{\text{priv}}$}.
\end{assumption}

\begin{remark}[On the effect of conditioning on the fine-grained Lipschitz condition]
\update{In Assumption~\ref{ass:lipschitzness}, although the derivative $\partial \mathcal L / \partial \omega$ may appear to depend on $\mC$, it is natural to define a Lipschitz bound that is uniform in $\mC$.
Indeed, define $\tilde{\mathcal L}(\Theta^{\text{pub}}, \Theta^{\text{priv}}, \omega) = \mathcal L(\mC\Theta^{\text{pub}}, \Theta^{\text{priv}}, \omega)$, then $\frac{\partial \tilde {\mathcal L}}{\partial \omega}(\Theta^{\text{pub}}, \Theta^{\text{priv}}, \omega) = \frac{\partial \mathcal L}{\partial \omega}(\mC\Theta^{\text{pub}}, \Theta^{\text{priv}}, \omega)$. The key observation is that if $\|\frac{\partial \mathcal L}{\partial \omega}(\Theta^{\text{pub}}, \Theta^{\text{priv}}, \omega)\|$ is uniformly bounded over all $\Theta$, then $\|\frac{\partial \mathcal L}{\partial \omega}(\mC\Theta^{\text{pub}}, \Theta^{\text{priv}}, \omega)\|$ is bounded by the same constant over all $\Theta$ since $\Theta \mapsto \mC\Theta$ maps $\mathbb R^d$ to $\mathbb R^d$. 
We focus on the unconstrained $\Theta$ setting for analytical tractability. Nevertheless, the same argument extends to constrained settings (e.g., when $\Theta$ is restricted to a convex set $\gK$), provided the constraint set is transformed accordingly. Specifically, when optimizing the conditioned objective $\tilde{\mathcal L}$, the parameter $\Theta$ should be constrained to
$\gK_C = \{\Theta | \mC\Theta \in \gK\}$, which naturally preserves the geometry of the original constraint set. 
Under this modification, any bound on
$\|\frac{\partial \mathcal{L}}{\partial \omega}\|$ for $\Theta \in \gK$ directly yields the same bound on $\|\frac{\partial \tilde{\mathcal{L}}}{\partial \omega}\|$ for $\Theta \in \gK_C$.}
\end{remark}

\begin{theorem}[Privacy guarantee]
\label{thm:privacy}
    If the noise variance in Algorithm ~\ref{alg:cond_dp} is set to be $\sigma^2 = \widetilde{O}(\frac{M^2 T}{\eps^2 n^2})$, where $M^2 \triangleq G^2 \cdot \max_{i\in [n]} \|\mC\rvx_i^{\text{pub}}\|^2 
    + \widehat{G}^{2} \cdot \update{R^2}
    + \overline{G}^2$, then Algorithm~\ref{alg:cond_dp} is $(\eps, \delta)$-differentially private.
\end{theorem}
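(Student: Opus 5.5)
The plan is to view each iteration of Algorithm~\ref{alg:cond_dp} as a Gaussian mechanism applied to the full (unclipped) average gradient. We first bound its $\ell_2$-sensitivity under the adjacency relation, and then compose over the $T$ iterations with a standard accountant (advanced composition, or Rényi DP/moments accountant as in \cite{Abadi2016dpsgd}). The conditioning matrix $\mC$ depends only on public features, which are identical in $D$ and $D'$. Hence $\mC$ is a fixed, data-independent hyperparameter as far as privacy is concerned, and it enters only through the sensitivity bound.

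\textbf{Step 1 (per-sample gradient norm).} For sample $i$, the conditioned public embedding is $\rvv^{\text{pub}} = \Theta^{\text{pub}}\mC\rvx_i^{\text{pub}}$. By the chain rule,
\[
\frac{\partial \gL_i}{\partial \Theta^{\text{pub}}} = \frac{\partial \gL_i}{\partial \rvv^{\text{pub}}}(\mC\rvx_i^{\text{pub}})^T .
\]
This is a rank-one matrix, so its Frobenius norm is at most $G\|\mC\rvx_i^{\text{pub}}\|$. Similarly,
\[
\frac{\partial \gL_i}{\partial \Theta^{\text{priv}}} = \frac{\partial \gL_i}{\partial \rvv^{\text{priv}}}(\rvx_i^{\text{priv}})^T ,
\]
with norm at most $\widehat{G} R$, and the gradient with respect to $\omega$ has norm at most $\overline{G}$. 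Here we use Assumption~\ref{ass:lipschitzness}, together with the preceding remark that these bounds hold uniformly in $\mC$. Summing the squares, the concatenated per-sample gradient has squared norm at most $G^2\max_i\|\mC\rvx_i^{\text{pub}}\|^2 + \widehat{G}^2R^2 + \overline{G}^2 = M^2$.

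\textbf{Step 2 (sensitivity).} Adjacent datasets differ only in the private features and label of one sample $j$; its public features, and hence $\mC\rvx_j^{\text{pub}}$, are unchanged. The averaged gradient therefore changes by $\frac{1}{n}(\nabla\gL_j - \nabla\gL_j')$, whose norm is at most $2M/n$ by Step 1. So each step is a Gaussian mechanism with sensitivity $\Delta = 2M/n$ and per-coordinate noise $\sigma$, applied to the gradient at the current iterate. The iterate is a function of earlier outputs, so this is an adaptive composition.

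\textbf{Step 3 (composition).} A single step is $(\alpha, \alpha\Delta^2/(2\sigma^2))$-RDP. After $T$ adaptive steps we get $(\alpha, \alpha T\Delta^2/(2\sigma^2))$-RDP. Converting to $(\eps,\delta)$-DP and optimizing over $\alpha$ gives
\[
\eps = O\!\left(\sqrt{T\Delta^2\log(1/\delta)}/\sigma\right),
\]
when $\sigma^2 = \Theta\!\left(T\Delta^2\log(1/\delta)/\eps^2\right)$. Substituting $\Delta = 2M/n$ yields $\sigma^2 = \widetilde{O}(M^2T/(\eps^2n^2))$, with the $\log(1/\delta)$ factor hidden in $\widetilde{O}$. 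The output averaging is post-processing.

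The main obstacle is Step 1's uniformity. The Lipschitz constants must hold at every iterate, which is guaranteed because they are assumed for all embeddings and parameters, independently of $\mC$. The only other point to check is that $\mC$ is data-independent with respect to private data, so it needs no accounting. Everything else is standard Gaussian-mechanism bookkeeping.
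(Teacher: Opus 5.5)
Your proposal is correct and follows essentially the same route as the paper: bound the per-sample gradient norm by $M$ (the paper simply asserts this bound, while you derive it from the rank-one chain-rule factorization), treat each iteration as a Gaussian mechanism with sensitivity $O(M/n)$, and compose over the $T$ iterations. The paper composes with per-step budget $(O(\eps/\sqrt{T}), O(\delta/T))$ and uses sensitivity $M/n$, whereas you use Rényi-DP accounting and the replace-one sensitivity $2M/n$; both differences are constant or logarithmic factors absorbed into the $\widetilde{O}$.
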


\begin{remark}
Since $\mC$ affects the bound on gradient norms, note that the noise variance $\sigma^2$, which scales with the gradient norm, also depends on $\mC$ via $M$.
\end{remark}

\begin{assumption}
\label{ass:loss_glm}
    The loss $\gL_i$ is convex $\forall i\in [n]$.
\end{assumption}

\begin{theorem}[Convergence, Convex]
\label{thm:convergence_glm}
    Let $\mC$ be a conditioning matrix of full rank. Under Assumptions~\ref{ass:lipschitzness} and~\ref{ass:loss_glm}, if Algorithm~\ref{alg:cond_dp} is run with the learning rate $\eta_t = \eta \propto \frac{1}{\sqrt{TM^2(1 + \frac{dT}{n^2\eps^2})}}$, $\forall t$, and noise variance $\sigma^2 = \widetilde{O}(\frac{M^2 T}{\eps^2 n^2})$ for $M$ defined in Theorem~\ref{thm:privacy} 
    , then it guarantees
    {\setlength{\abovedisplayskip}{4pt}
 \setlength{\belowdisplayskip}{4pt}
    {\small
    \begin{align}
    \label{eq:convergence_convex_losses}
        &\E\big[\gL(\Theta^{\text{pub-out}} \mC, \Theta^{\text{priv-out}}, \omega^{\text{out}}; D)\big]
        - \gL^*\\
        \nonumber
        &\leq M \sqrt{\frac{1}{T} + \frac{d}{n^2\eps^2} } 
        \cdot \Big(  \Big\| \Theta_0^{\text{pub}} \mC - {(\Theta^{\text{pub}}})^{*} \Big\|^2_{F, \mC}\\
        \nonumber
        &\quad + \|\Theta^{\text{priv}}_0 - (\Theta^{\text{priv}})^*\|_F^2
        + \|\omega_0 - \omega^*\|^2 \Big)^{1/2},
    \end{align}
    }%
    }%
    where $\|\Theta\|_{F,\mC}^2 \coloneq Tr(\Theta^\top (\mC^\top \mC)^{-1} \Theta)$, and the expectation is taken w.r.t. the randomness of the algorithm. 
\end{theorem}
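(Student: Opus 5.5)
The plan is the standard noisy (sub)gradient analysis for convex objectives, applied to the conditioned objective $\tilde{\gL}(\Theta^{\text{pub}},\Theta^{\text{priv}},\omega) = \gL(\Theta^{\text{pub}}\mC,\Theta^{\text{priv}},\omega;D)$ with the full parameter vector $z = (\Theta^{\text{pub}},\Theta^{\text{priv}},\omega)$. Two facts are needed first. Since $\mC$ is full rank, $\Theta^{\text{pub}}\mapsto\Theta^{\text{pub}}\mC$ is a linear bijection, so $\tilde{\gL}$ is convex (a convex function composed with a linear map) and $\min \tilde{\gL} = \gL^*$. This minimum is attained at $\tilde z^* = ((\Theta^{\text{pub}})^*\mC^{-1}, (\Theta^{\text{priv}})^*, \omega^*)$. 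Moreover, Algorithm~\ref{alg:cond_dp} is exactly DP-(full-batch) GD on $\tilde{\gL}$ with isotropic Gaussian noise $\mathcal N(0,\sigma^2 \sI_d)$.

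Step one is the one-step recursion. Expand $\|z_{t+1}-\tilde z^*\|^2 = \|z_t-\tilde z^*\|^2 - 2\eta\langle g_t + b_t, z_t-\tilde z^*\rangle + \eta^2\|g_t+b_t\|^2$ and take conditional expectation. The cross term with $b_t$ vanishes, and $\E\|b_t\|^2 = d\sigma^2$. For $\|g_t\|^2$, use the chain rule: the gradient with respect to $\Theta^{\text{pub}}$ of sample $i$ is $\frac{\partial \gL_i}{\partial \rvv^{\text{pub}}}(\mC\rvx_i^{\text{pub}})^\top$, with squared Frobenius norm at most $G^2\|\mC\rvx_i^{\text{pub}}\|^2$. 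The private block is bounded by $\widehat G^2 R^2$ and the top block by $\overline G^2$. Jensen over the average then gives $\|g_t\|^2\le M^2$. Convexity gives $\langle g_t, z_t-\tilde z^*\rangle \ge \tilde{\gL}(z_t)-\gL^*$.

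Step two is telescoping. Summing over $t$ and averaging, then using Jensen on the averaged iterate (convexity of $\tilde{\gL}$; the index shift between $t=0..T-1$ and $t=1..T$ is handled by standard bookkeeping or absorbed in constants), yields
\[
\E[\tilde{\gL}(\bar z)] - \gL^* \le \frac{\|z_0-\tilde z^*\|^2}{2\eta T} + \frac{\eta}{2}\big(M^2 + d\sigma^2\big).
\]
Substituting $\sigma^2 = \widetilde O(M^2T/(\eps^2 n^2))$ gives $M^2 + d\sigma^2 \approx M^2(1 + dT/(n^2\eps^2))$. Optimizing with $\eta \propto \|z_0-\tilde z^*\|/\sqrt{T M^2(1+dT/(n^2\eps^2))}$ gives
\[
\|z_0-\tilde z^*\|\, M\sqrt{1/T + d/(n^2\eps^2)}.
\]
The stated rate drops the distance factor from $\eta$, which only changes constants. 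Alternatively, one can keep $\eta$ as stated and bound $D^2/\eta + \eta$-type terms by $(D^2+1)$, matching the statement up to the square-root form.

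Step three rewrites the distance in the original coordinates. The private and top blocks are unchanged. For the public block,
\[
\|\Theta_0^{\text{pub}} - (\Theta^{\text{pub}})^*\mC^{-1}\|_F^2 = \|(\Theta_0^{\text{pub}}\mC - (\Theta^{\text{pub}})^*)\mC^{-1}\|_F^2 = \mathrm{Tr}\big(\Delta (\mC^\top\mC)^{-1}\Delta^\top\big),
\]
where $\Delta = \Theta_0^{\text{pub}}\mC - (\Theta^{\text{pub}})^*$. This is the $\|\cdot\|_{F,\mC}^2$ norm in the theorem, up to the transpose convention in its definition. The main subtlety is this bookkeeping, together with checking that the gradient bound $M$ correctly absorbs $\mC$ through $\|\mC\rvx_i^{\text{pub}}\|$ rather than picking up a separate operator-norm factor. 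The rest is routine.
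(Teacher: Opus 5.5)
Your proposal is correct and is essentially the paper's proof written in unconditioned coordinates. The paper tracks the potential $\|\mW\theta_t-\theta^*\|^2_{(\mW\mW^\top)^{-1}}$, with $\mW$ block-diagonal in copies of $\mC^\top$ and identities; this equals your $\|z_t-\tilde z^*\|^2$. It bounds the gradient term by $\|\mW^\top\nabla\gL\|^2\le M^2$ and the noise term by $d\sigma^2$ exactly as you do, and it picks $\eta = N/\sqrt{TM^2(1+dT/(n^2\eps^2))}$ with $N$ the initial distance.

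So the ``$\propto$'' in the statement hides precisely your distance factor. Your aside that dropping it ``only changes constants'' is inaccurate: a universal constant in $\eta$ gives an $(N^2+1)$-type bound instead of $N$. Your main line does not rely on that aside, so the proof stands.
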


\begin{assumption}
\label{ass:loss_strongly_convex}
    The loss function $\gL_i$ is $\mu$-strongly convex and $\beta$-smooth, $\forall i\in [n]$.
\end{assumption}

\begin{theorem}[Convergence, Smooth and Strongly Convex]
\label{thm:convergence_strongly_convex}
    Let $\mC$ be a conditioning matrix of full rank. 
    Under Assumptions~\ref{ass:lipschitzness} and~\ref{ass:loss_strongly_convex},
    if Algorithm~\ref{alg:cond_dp} is run with learning rate $\eta_t \propto \frac{1}{\mu t}$, $\forall t$,
    and noise variance $\sigma^2 = \widetilde{O}(\frac{M^2 T}{\eps^2 n^2 })$ for $M$ defined in Theorem~\ref{thm:privacy}
    , then it guarantees
    {\setlength{\abovedisplayskip}{4pt}
 \setlength{\belowdisplayskip}{4pt}
    {\small
    \begin{align}
        &\E\left[\gL(\Theta^{\text{pub-out}}\mC, \Theta^{\text{priv-out}}, \omega^{\text{out}} ; D)\right]
        -\gL^{*}
        \leq \frac{16 \beta^2 }{\mu^2}M^2
       \\
        \nonumber
        &\quad \cdot  \left(\frac{1}{T} + \frac{d}{n^2\eps^2}\right) \left(\frac{\sigma_{max}(\mC)}{\sigma_{min}(\mC)}\right)^2
        \max\{\sigma_{max}^2(\mC), \sigma^{-2}_{min}(\mC)\}
    \end{align}
    }%
    }
    where $\sigma_{min}, \sigma_{max}$ denote the minimum and maximum singular value of $\mC$, and the expectation is taken w.r.t. the randomness of the algorithm.
\end{theorem}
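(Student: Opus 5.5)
The plan is to treat \precond\ as plain noisy SGD on the reparametrized objective $\tilde{\gL}(\Theta^{\text{pub}}, \Theta^{\text{priv}}, \omega) \coloneq \gL(\Theta^{\text{pub}}\mC, \Theta^{\text{priv}}, \omega)$, apply the standard strongly convex analysis with step size $\eta_t \propto 1/(\mu' t)$, and then convert back to the original coordinates. Throughout, write $\theta = (\Theta^{\text{pub}}, \Theta^{\text{priv}}, \omega)$ for the concatenated parameter, and $\tilde\theta^* = ((\Theta^{\text{pub}})^*\mC^{-1}, (\Theta^{\text{priv}})^*, \omega^*)$ for the minimizer of $\tilde{\gL}$, which exists because $\mC$ has full rank.

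\textbf{Step 1 (curvature of $\tilde{\gL}$).} The map $A:\Theta^{\text{pub}} \mapsto \Theta^{\text{pub}}\mC$, extended by the identity on the other blocks, is linear and invertible. Its singular values lie in $[\min(1,\sigma_{min}(\mC)), \max(1,\sigma_{max}(\mC))]$. Since each $\gL_i$ is $\mu$-strongly convex and $\beta$-smooth, $\tilde{\gL}_i = \gL_i\circ A$ is $\tilde\mu$-strongly convex and $\tilde\beta$-smooth with
\[
\tilde\mu \geq \mu \min\{1,\sigma_{min}^2(\mC)\}, \qquad \tilde\beta \leq \beta\max\{1,\sigma_{max}^2(\mC)\}.
\]
Strong convexity of the full loss in $\theta$ is the assumption as stated.

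\textbf{Step 2 (noisy SGD recursion).} Take the update $\theta_{t+1} = \theta_t - \eta_t(\nabla\tilde{\gL}(\theta_t) + \mB_t)$. Expanding $\E\|\theta_{t+1}-\tilde\theta^*\|^2$ and using $\tilde\mu$-strong convexity gives
\[
\E\|\theta_{t+1}-\tilde\theta^*\|^2 \leq (1-2\eta_t\tilde\mu)\,\E\|\theta_t-\tilde\theta^*\|^2 + \eta_t^2\big(\E\|\nabla\tilde{\gL}(\theta_t)\|^2 + d\sigma^2\big).
\]
By Assumption~\ref{ass:lipschitzness} and the chain rule, $\|\nabla\tilde{\gL}\|^2 \leq M^2$, with $M$ as in Theorem~\ref{thm:privacy}. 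Choosing $\eta_t = 1/(\tilde\mu t)$ and inducting yields
\[
\E\|\theta_T-\tilde\theta^*\|^2 \lesssim \frac{M^2 + d\sigma^2}{\tilde\mu^2 T}.
\]
Substituting $\sigma^2 = \widetilde{O}(M^2T/(\eps^2n^2))$ gives
\[
\E\|\theta_t-\tilde\theta^*\|^2 \lesssim \frac{M^2}{\tilde\mu^2}\Big(\frac1t + \frac{d}{n^2\eps^2}\Big).
\]

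\textbf{Step 3 (from distance to suboptimality).} By $\tilde\beta$-smoothness and $\nabla\tilde{\gL}(\tilde\theta^*)=0$,
\[
\tilde{\gL}(\bar\theta)-\gL^* \leq \frac{\tilde\beta}{2}\|\bar\theta-\tilde\theta^*\|^2.
\]
Convexity of $\|\cdot\|^2$ lets us pass from the averaged output $\bar\theta$ to the iterates, at the cost of at most a log factor that is absorbed into $\widetilde{O}$, or removed by suffix averaging. Combining,
\[
\E[\tilde{\gL}(\bar\theta)]-\gL^* \lesssim \frac{\tilde\beta M^2}{\tilde\mu^2}\Big(\frac1T + \frac{d}{n^2\eps^2}\Big).
\]

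\textbf{Step 4 (matching the stated constant).} Plugging in Step 1 gives the ratio $\tilde\beta/\tilde\mu^2$. To reach the form $\frac{\beta^2}{\mu^2}\kappa(\mC)^2\max\{\sigma_{max}^2,\sigma_{min}^{-2}\}$, I would bound $\beta\le\beta^2/\mu$ up to normalization, or alternatively bound the gradient term by smoothness, $\|\nabla\tilde{\gL}(\theta_t)\|\le\tilde\beta\|\theta_t-\tilde\theta^*\|$, in place of the Lipschitz bound. The resulting products $\max\{1,\sigma_{max}^2\}/\min\{1,\sigma_{min}^2\}$ are each dominated by $\kappa(\mC)^2\max\{\sigma_{max}^2,\sigma_{min}^{-2}\}$ through a short case analysis on whether $\sigma_{min}$ and $\sigma_{max}$ lie above or below $1$.

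\textbf{Main obstacle.} I expect this constant bookkeeping in Step 4 to be the main difficulty: reconciling exactly how the $\beta$ and $\mu$ factors and the singular-value factors of $\mC$ combine into the stated expression, including the numerical factor $16$. The privacy-noise substitution and the recursion in Step 2 are standard.
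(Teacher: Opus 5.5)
Your Steps 1--2 are the paper's argument in reparametrized coordinates. Since $\mW$ is invertible, the paper's potential $\|\mW\theta_t-\theta^*\|^2_{(\mW\mW^\top)^{-1}}$ is exactly your $\|\theta_t-\tilde\theta^*\|^2$, and its step size $1/(\mu t\,\lambda_{\min}(\mW\mW^\top))$ is your $1/(\tilde\mu t)$. Your Step 1 is in fact more careful, because it accounts for the identity blocks of $\mW$.

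The first gap is the averaging in Step 3. The noise $\sigma^2$ is calibrated to the horizon $T$, so the per-iterate bound is $\E\|\theta_t-\tilde\theta^*\|^2\lesssim\frac{M^2}{\tilde\mu^2 t}\bigl(1+\frac{dT}{n^2\eps^2}\bigr)$, not $\frac{M^2}{\tilde\mu^2}\bigl(\frac1t+\frac{d}{n^2\eps^2}\bigr)$. Jensen on $\|\cdot\|^2$ therefore puts a $\log T$ in front of the whole bound, including the dominant privacy term. The theorem is an explicit inequality for the uniform average that Algorithm~\ref{alg:cond_dp} returns; $\widetilde O$ only qualifies the choice of $\sigma^2$. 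So neither absorbing $\log T$ into $\widetilde O$ nor switching to suffix averaging proves it.

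The missing ingredient is the one the paper borrows from Rakhlin et al. Expand $\bar\theta_{t+1}=\frac{t}{t+1}\bar\theta_t+\frac{1}{t+1}\theta_{t+1}$, bound the cross term by Cauchy--Schwarz, and induct to get $\E\|\bar\theta_T-\tilde\theta^*\|^2\le\frac{32(M^2+d\sigma^2)}{\tilde\mu^2T}$ with no logarithm. Minkowski's inequality in $L^2$, with $\sum_{t\le T}t^{-1/2}\le2\sqrt T$, does the same. Multiplying by $\tilde\beta/2$ is where the $16$ comes from.

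The second gap is Step 4. With the fix above, your route gives $\frac{16\tilde\beta}{\tilde\mu^2}M^2\bigl(\frac1T+\frac{d}{n^2\eps^2}\bigr)\le\frac{16\beta}{\mu^2}\cdot\frac{\lambda_{\max}}{\lambda_{\min}^2}\,M^2\bigl(\frac1T+\frac{d}{n^2\eps^2}\bigr)$, where $\lambda_{\max}=\max\{1,\sigma_{\max}^2(\mC)\}$ and $\lambda_{\min}=\min\{1,\sigma_{\min}^2(\mC)\}$.

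Neither of your maneuvers turns $\beta$ into $\beta^2$:
\begin{itemize}
\item $\beta\le\beta^2/\mu$ yields $\beta^2/\mu^3$.
\item Going from $\beta/\mu^2$ to $\beta^2/\mu^2$ needs $\beta\ge1$.
\item Replacing the Lipschitz bound by $\|\nabla\tilde\gL\|\le\tilde\beta\|\theta-\tilde\theta^*\|$ makes the recursion factor $1-\frac2t+\frac{\tilde\beta^2}{\tilde\mu^2t^2}$, which does not contract for $t\lesssim\tilde\beta^2/\tilde\mu^2$.
\end{itemize}
The paper's own proof ends with $\frac{16\beta}{\mu^2}$, so there is nothing to engineer there.

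Your promised case analysis also fails for the factor you actually obtain; note the squared denominator. If $\sigma_{\max}(\mC)<1$ and private features or $\omega$ are present, then $\lambda_{\max}/\lambda_{\min}^2=\sigma_{\min}^{-4}(\mC)$. This exceeds the stated factor, which in that regime equals $\sigma_{\max}^2(\mC)\,\sigma_{\min}^{-4}(\mC)$. For $\mC=c\,\sI$ with $c<1$ it is $c^{-4}$ versus $c^{-2}$.

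The paper reaches its factor only by setting $\lambda_{\max}(\mW\mW^\top)=\sigma^2_{\max}(\mC)$ and $\lambda_{\min}(\mW\mW^\top)=\sigma^2_{\min}(\mC)$. That is exact only when $\mW$ has no identity blocks, or when $\sigma_{\min}(\mC)\le1\le\sigma_{\max}(\mC)$. In exactly those cases your $\sigma^2_{\max}/\sigma^4_{\min}$ is already bounded by the stated factor, since $\sigma_{\min}^{-2}\le\max\{\sigma_{\max}^2,\sigma_{\min}^{-2}\}$.

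So what your argument actually supports is the $\lambda_{\max}/\lambda_{\min}^2$ bound, with $\beta$ rather than $\beta^2$. It is at least as sharp as what the paper's argument gives once the eigenvalues of $\mW\mW^\top$ are computed correctly. It implies the stated $\mC$-dependence only in that regime.
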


\begin{assumption}[Smoothness w.r.t. a Semi-norm]
\label{ass:smoothness_semi_norm}
For all $i \in [n]$, the loss function $\gL_i$ is $\beta_0$-smooth with respect to the semi-norm induced by $(\mW \mW^{\top})^{-1}$. Specifically, for any $\theta_1, \theta_2 \in \R^{d}$,
\begin{align*}
    \| \nabla \gL_i(\theta_1) - \nabla \gL_i(\theta_2) \|_{\mW \mW^{\top}}
    \leq \beta_0 \, \| \theta_1 - \theta_2 \|_{(\mW \mW^{\top})^{-1}}
\end{align*}
Here, $\theta_1$ and $\theta_2$ denote the vectorized model parameters, and $\mW$ is a positive semidefinite matrix determined by the conditioning matrix $\mC$ (see Appendix~\ref{sec:appendix_convergence_glm} for definition).
\end{assumption}

\begin{theorem}[Convergence, Non-convex]
\label{thm:convergence_non_convex}
    Let $\mC$ be a conditioning matrix of full rank.
    Under Assumptions~\ref{ass:lipschitzness} and~\ref{ass:smoothness_semi_norm}, if Algorithm~\ref{alg:cond_dp} is run with $\eta_t = \eta = \frac{1}{\sqrt{T} \beta_0}, \forall t$, and noise variance $\sigma^2 = \widetilde{O}(\frac{M^2 T}{\eps^2 n^2 })$, then it guarantees
    {\small
    \begin{align}
    \label{eq:convergence_non_convex_losses}
        &\mathbb{E}[\|\nabla \gL(\Theta^{\text{pub-out}}\mC, \Theta^{\text{priv-out}}, \omega^{\text{out}} ; D)\|^2]\\
        \nonumber
        &\leq \frac{\beta_0}{\sqrt{T}} \Big(\gL(\Theta_0^{\text{pub}}\mC, \Theta_0^{\text{priv}}, \omega_0; D) - \gL^{*} \Big)\\
        \nonumber
        &\quad + M \Big(\frac{\beta_0}{2\sqrt{T}} + \frac{d}{2n^2\epsilon^2 \sqrt{T}} \Big)
    \end{align}
    }%
    where the expectation is taken w.r.t. randomness of the algorithm.
\end{theorem}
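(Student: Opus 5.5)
We will prove Theorem~\ref{thm:convergence_non_convex} by the standard descent-lemma analysis of noisy gradient descent, carried out in the geometry induced by the conditioning. Collect all parameters into one vector $\theta = (\mathrm{vec}(\Theta^{\text{pub}}), \mathrm{vec}(\Theta^{\text{priv}}), \omega) \in \R^d$. Let $\tilde\gL(\theta) = \gL(\Theta^{\text{pub}}\mC, \Theta^{\text{priv}}, \omega; D)$ be the conditioned objective that Algorithm~\ref{alg:cond_dp} actually differentiates. By the chain rule, $\nabla_{\Theta^{\text{pub}}} \tilde\gL = (\nabla_{\Theta^{\text{pub}}\mC}\gL)\,\mC^\top$. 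Hence $\nabla\tilde\gL(\theta) = \mW^\top \nabla \gL(\phi)$, where $\phi$ is the reparametrized point ($\Theta^{\text{pub}}\mC$ in place of $\Theta^{\text{pub}}$) and $\mW$ is block-diagonal, acting as $\mC\otimes \sI$ on the public block and as the identity elsewhere. This is the $\mW$ of Assumption~\ref{ass:smoothness_semi_norm}.

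The algorithm in $\phi$-coordinates reads $\phi_{t+1} = \phi_t - \eta\,\mW(\mW^\top\nabla\gL(\phi_t) + \rvb_t)$, i.e., preconditioned SGD with preconditioner $\mW\mW^\top$ and noise $\mW\rvb_t$. We apply the descent lemma using Assumption~\ref{ass:smoothness_semi_norm}: smoothness with constant $\beta_0$ in the $(\mW\mW^\top)^{-1}$ norm gives
\begin{align*}
\gL(\phi_{t+1}) \le \gL(\phi_t) - \eta\langle \nabla\gL(\phi_t), \mW\mW^\top\nabla\gL(\phi_t) + \mW\rvb_t\rangle + \tfrac{\beta_0\eta^2}{2}\|\mW^\top\nabla\gL(\phi_t)+\rvb_t\|^2 .
\end{align*}
Here $\|\mW v\|^2_{(\mW\mW^\top)^{-1}} = \|v\|^2$ for the relevant $v$, using full rank of $\mC$.

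We then take expectations over the Gaussian noise. The cross term vanishes, and $\E\|\rvb_t\|^2 = d\sigma^2$. The squared gradient norm is bounded by $M^2$ using Assumption~\ref{ass:lipschitzness}, exactly as in the sensitivity computation of Theorem~\ref{thm:privacy}. This yields
\begin{align*}
\eta\,\E\|\nabla\tilde\gL(\theta_t)\|^2 \le \E[\gL(\phi_t) - \gL(\phi_{t+1})] + \tfrac{\beta_0\eta^2}{2}\big(M^2 + d\sigma^2\big).
\end{align*}
Summing over $t$ telescopes to $\gL(\phi_0) - \gL^*$. Dividing by $\eta T$ with $\eta = 1/(\sqrt T\beta_0)$ gives a first term of $\beta_0(\gL_0 - \gL^*)/\sqrt T$. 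Substituting $\sigma^2 = \widetilde O(M^2T/(\eps^2n^2))$ makes the second term of order $\beta_0^{-1}\cdot(M^2/\sqrt T + M^2 d/(n^2\eps^2\sqrt T)\cdot\ldots)$, which must then be matched to the stated form. This gives a bound on $\frac1T\sum_t \E\|\nabla\tilde\gL(\theta_t)\|^2$. The guarantee is then transferred to the output in the usual way for nonconvex SGD: either by interpreting the output as a uniformly random iterate, or via the same averaging convention the paper uses.

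\textbf{Main obstacle.} The hard part is the bookkeeping of constants, not the structure of the argument. Three things must be made consistent with the stated bound: (i) the sensitivity $M$, which enters the gradient bound, versus $M^2$; (ii) how $\beta_0$ and $\eta$ combine on the noise term to give exactly $M(\beta_0/(2\sqrt T) + d/(2n^2\eps^2\sqrt T))$, which may require absorbing logarithmic factors and normalizing constants into $\widetilde O$; and (iii) relating $\|\nabla\tilde\gL\|$ to the unconditioned gradient norm appearing in the statement. For (iii), I would read the statement's gradient as that of the conditioned loss in $\theta$; otherwise a factor of $\sigma_{\min}(\mC)$ would appear. Passing from the averaged gradient norms to the gradient at the averaged output is not valid in general for nonconvex losses. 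I would therefore state it for a uniformly sampled iterate, noting this is how such bounds are conventionally read.
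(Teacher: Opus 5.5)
Your skeleton is the paper's own: a descent inequality in the $(\mW\mW^\top)^{-1}$-norm, zero-mean noise removing the cross term, $\E\|\rvb_t\|^2=d\sigma^2$, the bound $\|\mW^\top\nabla\gL\|^2\le M^2$ from the sensitivity computation, telescoping, and $\eta=1/(\sqrt{T}\beta_0)$.

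The gap is the step you defer. The stated form cannot be ``matched'' by absorbing logarithmic factors into $\widetilde O$. With $\beta_0\eta=1/\sqrt{T}$ and $\sigma^2=M^2T/(n^2\eps^2)$, the second term is
\[
\frac{\beta_0\eta}{2}\bigl(M^2+d\sigma^2\bigr)=\frac{M^2}{2\sqrt{T}}+\frac{dM^2\sqrt{T}}{2n^2\eps^2}.
\]
So your intermediate $\beta_0^{-1}(\cdots)$ is miscomputed, and the privacy term has $\sqrt{T}$ in the numerator. This differs from the stated $M\bigl(\frac{\beta_0}{2\sqrt{T}}+\frac{d}{2n^2\eps^2\sqrt{T}}\bigr)$ by the polynomial factors $T$ and $\beta_0$, not by logarithmic factors. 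The $M$ versus $M^2$ discrepancy is a typo, since the appendix restatement has $M^2$. The paper's proof reaches the displayed bound only through this same substitution, which is an arithmetic slip. A privacy term that decays in $T$ while $\sigma^2$ grows linearly in $T$ would make the stationarity gap vanish at fixed $n$ and $\eps$, which signals that the displayed form cannot be right. What your argument actually proves is
\[
\frac{1}{T}\sum_{t=0}^{T-1}\E\bigl\|\mW^\top\nabla\gL(\mW\theta_t;D)\bigr\|^2\le\frac{\beta_0}{\sqrt{T}}\bigl(\gL(\mW\theta_0;D)-\gL^*\bigr)+\frac{M^2}{2\sqrt{T}}+\frac{dM^2\sqrt{T}}{2n^2\eps^2}.
\]
Choosing $T\asymp n^2\eps^2/d$ then gives the usual $O(\sqrt{d}/(n\eps))$ rate for the squared gradient norm, up to the logarithmic factors in $\sigma^2$. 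You should state this bound rather than promise the displayed inequality.

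Your caveat (iii) and your remark about the averaged output are both correct, and each points to an unjustified step in the paper.
\begin{itemize}
\item The paper gets the unconditioned norm $\|\nabla\gL(\mW\theta_t;D)\|^2$ by writing the first-order term as $\langle\nabla\gL(\mW\theta_t),\mW\theta_{t+1}-\mW\theta_t\rangle_{(\mW\mW^\top)^{-1}}$. The cross term then collapses to $\nabla\gL^\top(\mW\mW^\top)^{-1}\mW\mW^\top\nabla\gL=\|\nabla\gL\|^2$. But the first-order term of a descent inequality is always the Euclidean pairing; the weighted version already fails for a linear loss. So your $\|\mW^\top\nabla\gL\|^2=\|\nabla\tilde{\gL}(\theta_t)\|^2$ is what Assumption~\ref{ass:smoothness_semi_norm} actually yields. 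Converting to the literal statement costs a factor $1/\min\{1,\sigma_{\min}^2(\mC)\}$.
\item The paper asserts that some $\widehat{t}$ satisfies $\E\|\nabla\gL(\mW\theta^{\mathrm{out}})\|^2=\E\|\nabla\gL(\mW\theta_{\widehat{t}})\|^2$ for the averaged output. This has no justification for non-convex losses. Outputting a uniformly random (or best) iterate, as you propose, is the right repair.
\end{itemize}
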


\begin{remark}
    Setting the conditioning matrix to the identity, i.e., $\mC = \sI_{p^{\text{pub}}}$,  in Algorithm~\ref{alg:cond_dp} recovers the standard baseline that applies \dpsgd\ directly to both public and private features, along with private labels.
     In this case, all bounds in Theorem~\ref{thm:convergence_glm}, ~\ref{thm:convergence_strongly_convex} and~\ref{thm:convergence_non_convex} reduce to the standard bounds for \dpsgd, \update{e.g., see~\cite{song2021private_glm} and~\cite{Bassily2014private_erm}}.
\end{remark}

\begin{remark}[Full DP and Label DP]
    At one extreme, when there are no public features, the setting reduces to the standard full DP scenario, and there is no conditioning.
    At the other extreme, when there are no private features, the setting reduces to label DP. In this case,
    $\mC$ can be effectively leveraged to minimize the bound on excess risk.
\end{remark}

How does the conditioning matrix $\mC$ affect the bound on excess risk in (\ref{eq:convergence_convex_losses})? There are two factors that depend on $\mC$. The first factor $$T_1\coloneq M = \sqrt{G^2 \cdot \max_{i\in [n]} \|\mC\rvx_i^{\text{pub}}\|^2 
    + \widehat{G}^{2} \cdot 
    \update{R^2}
    + \overline{G}^2}$$
captures a bound on the gradient norm, while the second factor $$T_2 \coloneq \|(\Theta_0^{\text{pub}} \mC - (\Theta^{\text{pub}})^{*})\|_{F,\mC}^2$$ captures a bound on the initial distance to optimum, measured in a scaled Frobenius norm. 
It is instructive to consider the simpler case of label DP, in which the constants $\widehat{G}, \overline{G}$ in $T_1$ are both zero, and the terms involving $\Theta_0^{\text{priv}}, \omega_0$ in (\ref{eq:convergence_convex_losses}) are also 0. In this case, we observe that the \emph{scale} of $\mC$ has no effect on the final bound, as scaling $\mC$ by a constant $k$ scales $T_1$ by $k$ and $T_2$ by $1/k$. The \emph{spectrum} of $\mC$, however, does affect $T_1, T_2$.
In particular, $T_1$ becomes smaller whenever the public feature vectors $\rvx_i^{\text{pub}}$ are aligned with the lower singular directions of $\mC$, while $T_2$ becomes smaller whenever the matrix $(\Theta_0^{\text{pub}} \mC - {(\Theta^{\text{pub}}})^{*})^\top (\Theta_0^{\text{pub}} \mC - {(\Theta^{\text{pub}}})^{*})$ is nearly orthogonal to $(\mC^\top \mC)^{-1}$.

In principle, one could treat this as an optimization problem and attempt to choose a $\mC$ that minimizes the combined bound, though this cannot be done in general without additional assumptions, due to the dependence on the unknown ${(\Theta^{\text{pub}}})^*$.
However, for certain classes of problems, it is possible to choose a $\mC$ that is guaranteed to improve the bound (even though it may not be necessarily optimal). One such class is private linear regression, in which  $(\Theta^{\text{pub}})^{*}$ admits a closed-form solution. Building on this strategy, we show, in the next section how to construct $\mC$ from the public features in a way that yields a provably better bound than \dpsgd.

\subsection{Private Linear Regression}

In this setting, the embedding dimension is $p^{\text{pub}} = p^{\text{priv}} = 1$. 
We write $\theta^{\text{pub}} = (\Theta^{\text{pub}})^T \in \R^{d^{\text{pub}}}$ and $\theta^{\text{priv}} = (\Theta^{\text{priv}})^T \in \R^{d^{\text{priv}}}$, as the parameters associated with public and private features, respectively.
The top layer is simply the sum $f_{\omega}(\Theta^{\text{pub}}\rvx^{\text{pub}}, \Theta^{\text{priv}}\rvx^{\text{priv}}) = \langle \theta^{\text{pub}}, \rvx^{\text{pub}}\rangle + \langle \theta^{\text{priv}}, \rvx^{\text{priv}} \rangle$ without additional parameters, i.e., $\omega = \emptyset$.
Let the public feature matrix be $\mX^{\text{pub}} = [(\rvx_1^{\text{pub}})^T, \dots, (\rvx_{n}^{\text{pub}})^{T}]^T$, the private feature matrix be $\mX^{\text{priv}} = [(\rvx_1^{\text{priv}})^T, \dots, (\rvx_n^{\text{priv}})^T]^T$ and the label vector be $\rvy = [y_1,\dots, y_n]^T$. The objective is then
{\setlength{\abovedisplayskip}{4pt}
 \setlength{\belowdisplayskip}{4pt}
\begin{align}
\label{eq:lin_reg_problem}
    \min_{\theta^{\text{pub}}, \theta^{\text{priv}}} \left\| \left( \mX^{\text{pub}}\theta^{\text{pub}} + \mX^{\text{priv}}\theta^{\text{priv}} \right) - \rvy \right\|^2.
\end{align}
}

Let $\mU\Sigma \mV^T$ be the singular value decomposition (SVD) of $\mX^{\text{pub}}$. We propose to set the conditioning matrix as
{\setlength{\abovedisplayskip}{4pt}
 \setlength{\belowdisplayskip}{4pt}
\begin{align}
\label{eq:cond_matrix}
    \widehat{\mC} \coloneq \mV \Sigma^{-1} \mV^T \in \R^{d^\text{pub} \times d^\text{pub}}.
\end{align}
}%
The following lemma presents the utility bound of $\precond$ with the proposed conditioning matrix $\widehat{\mC}$, as well as the \dpsgd\ baseline  (which corresponds to $\mC = \sI_{d^{\text{pub}}}$).
See Appendix~\ref{sec:appendix_priv_lin_reg} for a full proof.

\begin{lemma}
\label{lemma:improvement_C_lin_reg}
   Consider the linear regression problem defined in (\ref{eq:lin_reg_problem}), and let the singular value decomposition (SVD) of the public feature matrix be $\mX^{\text{pub}} = \mU \Sigma \mV^{T}$.
    Suppose $\precond$ (Algorithm~\ref{alg:cond_dp}) is run with $\eta_t, \sigma^2$ as in Theorem~\ref{thm:convergence_glm}, then its excess risk is bounded as follows. If $\mC = \widehat{\mC} = \mV \Sigma^{-1}\mV^{T}$, then
    {\setlength{\abovedisplayskip}{4pt}
 \setlength{\belowdisplayskip}{4pt}
    {\small
    \begin{align}
    \label{eq:convergence_precond_lin_reg}
        &\E\big[\gL(\mC^{T} \theta^{\text{pub-out}}, \theta^{\text{priv-out}}; D)\big]
        - \gL^*
        \leq \sqrt{\frac{1}{T} + \frac{d}{n^2 \eps^2}}\\
        \nonumber
        &\quad \cdot \sqrt{ 2 \| \widehat{\rvy} \|^2 + F^2}
        \cdot
        \sqrt{
            G^2 + E^2
        },
    \end{align}
    }%
    }
    and if $\mC = \sI_{d^{\text{pub}}}$ (corresponding to \dpsgd), then
    {\setlength{\abovedisplayskip}{4pt}
 \setlength{\belowdisplayskip}{4pt}
    {\small \begin{align}
    \label{eq:convergence_dpsgd_lin_reg}
        &\E\left[\gL(\theta^{\text{pub-out}}, \theta^{\text{priv-out}}; D)\right]
        - \gL^*
        \leq 
        \sqrt{\frac{1}{T} + \frac{d}{n^2 \eps^2}}
        \\
        \nonumber
        &\quad \cdot \sqrt{2 \|\Sigma^{-1} \widehat{\rvy}\|^2 + F^2
        }
        \cdot \sqrt{G^2 \sigma_{\text{max}}^2(\mX^{\text{pub}}) + E^2},
    \end{align}  }%
    }%
    where {\small $\sigma_{\text{max}}(\mX^{\text{pub}})$} is the largest singular value of $\mX^{\text{pub}}$,
    \begin{align*}
        F^2 &= 2\|\theta_0^{\text{pub}}\|^2 
        + \|\theta_0^{\text{priv}} - (\theta^{\text{priv}})^{*}\|^2,\\
        E^2 &= \widehat{G}^2\cdot \update{R^2},\\
        \widehat{\rvy} &= \Big( \sI_n
        + \mU^T \mX^{\text{priv}} \big[ (\mX^{\text{priv}})^T (\sI_n - \mU \mU^T) \mX^{\text{priv}} \big]^{-1} \\
        &\quad (\mX^{\text{priv}})^T \mU \Big)
        \mU^T \rvy
    \end{align*}
    \vspace{-20pt}
\end{lemma}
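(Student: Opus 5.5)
This lemma is a direct specialization of Theorem~\ref{thm:convergence_glm}. The linear regression loss is convex in $(\theta^{\text{pub}},\theta^{\text{priv}})$, $\omega=\emptyset$, and $p^{\text{pub}}=p^{\text{priv}}=1$, so that theorem bounds the excess risk by
\[
\sqrt{\tfrac{1}{T}+\tfrac{d}{n^2\eps^2}}\cdot M\cdot \Big(\|\mC^T\theta^{\text{pub}}_0-(\theta^{\text{pub}})^*\|^2_{(\mC\mC^T)^{-1}}+\|\theta^{\text{priv}}_0-(\theta^{\text{priv}})^*\|^2\Big)^{1/2},
\]
with $M^2=G^2\max_i\|\mC\rvx_i^{\text{pub}}\|^2+E^2$. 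The proof therefore reduces to evaluating two quantities for $\mC=\widehat{\mC}$ and for $\mC=\sI$: the Lipschitz factor $M$ and the initial distance term.

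\textbf{Step 1: the Lipschitz factor.} For $\mC=\sI$ we use $\max_i\|\rvx_i^{\text{pub}}\|\le\|\mX^{\text{pub}}\|_{op}=\sigma_{\max}(\mX^{\text{pub}})$, since each row norm is at most the operator norm. For $\widehat{\mC}=\mV\Sigma^{-1}\mV^T$ we have
\[
\mX^{\text{pub}}\widehat{\mC}^T=\mU\Sigma\mV^T\mV\Sigma^{-1}\mV^T=\mU\mV^T,
\]
whose rows have norm at most $1$ (this is the whitening effect). Hence $M^2\le G^2+E^2$. If $\Sigma$ is singular, we work on the range of $\mV$ and use a pseudo-inverse, taking $\mC$ full rank there.

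\textbf{Step 2: the closed-form optimum.} We minimize over $\theta^{\text{priv}}$ with $\theta^{\text{pub}}$ free. Writing $\mX^{\text{pub}}\theta^{\text{pub}}=\mU\rvu$ with $\rvu=\Sigma\mV^T\theta^{\text{pub}}$, the normal equations give
\[
\rvu^*=\mU^T(\rvy-\mX^{\text{priv}}\theta^{\text{priv}*}),\qquad
\theta^{\text{priv}*}=\big[(\mX^{\text{priv}})^T(\sI-\mU\mU^T)\mX^{\text{priv}}\big]^{-1}(\mX^{\text{priv}})^T(\sI-\mU\mU^T)\rvy .
\]
Substituting, we aim to express $\rvu^*$ through $\widehat{\rvy}$ as defined in the lemma, and then obtain $(\theta^{\text{pub}})^*=\mV\Sigma^{-1}\rvu^*$.

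\textbf{Step 3: the distance terms.} Using $\|a-b\|^2\le 2\|a\|^2+2\|b\|^2$, the public part of the distance is bounded by $2\|\mC^T\theta_0^{\text{pub}}\|^2_{(\mC\mC^T)^{-1}}+2\|(\theta^{\text{pub}})^*\|^2_{(\mC\mC^T)^{-1}}$. The first term equals $2\|\theta_0^{\text{pub}}\|^2$ for either choice of $\mC$, by the identity $\mC(\mC\mC^T)^{-1}\mC^T=\sI$. For the second term:
\begin{itemize}
\item with $\mC=\sI$ it is $\|\mV\Sigma^{-1}\rvu^*\|^2=\|\Sigma^{-1}\rvu^*\|^2$;
\item with $\widehat{\mC}$, we have $(\widehat{\mC}\widehat{\mC}^T)^{-1}=\mV\Sigma^2\mV^T$, so it becomes $\|\Sigma\Sigma^{-1}\rvu^*\|^2=\|\rvu^*\|^2$.
\end{itemize}
Identifying $\rvu^*$ with $\widehat{\rvy}$ and collecting the terms into $F^2$ yields both stated bounds.

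\textbf{Main obstacle.} The delicate step is the algebra of Step 2: showing that $\mU^T(\rvy-\mX^{\text{priv}}\theta^{\text{priv}*})$ coincides with (or is bounded in norm by) the stated $\widehat{\rvy}$. I would first try a Woodbury / block-inverse manipulation of the projected normal equations. If an exact identity fails, the fallback is an inequality $\|\rvu^*\|\le\|\widehat{\rvy}\|$, together with the corresponding bound $\|\Sigma^{-1}\rvu^*\|\le\|\Sigma^{-1}\widehat{\rvy}\|$ for the \dpsgd\ case.
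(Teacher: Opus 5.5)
Your route is the same as the paper's: specialize Theorem~\ref{thm:convergence_glm} with $\overline{G}=0$, bound $\max_i\|\mC\rvx_i^{\text{pub}}\|$ by $\sigma_{\max}(\mX^{\text{pub}})$ or by $1$, split the initial distance with $\|a-b\|^2\le 2\|a\|^2+2\|b\|^2$, and evaluate $\|(\theta^{\text{pub}})^*\|$ in the two norms using the closed-form optimum. Steps~1 and~3 are fine. One small correction: the theorem's norm is $(\mC^T\mC)^{-1}$, not $(\mC\mC^T)^{-1}$. The identity $\mC(\mC\mC^T)^{-1}\mC^T=\sI$ fails in general, since it needs $\mC\mC^T=\mC^T\mC$, but $\widehat{\mC}$ and $\sI$ are symmetric, so this costs nothing here.

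The genuine gap is the step you flagged as the main obstacle. It cannot be closed, because the target vector is wrong. Write $\mS=(\mX^{\text{priv}})^T(\sI_n-\mU\mU^T)\mX^{\text{priv}}$. Your expression is correct: $\rvu^*=\mU^T\rvy-\mU^T\mX^{\text{priv}}\mS^{-1}(\mX^{\text{priv}})^T(\sI_n-\mU\mU^T)\rvy$. The stated vector is $\widehat{\rvy}=\mU^T\rvy+\mU^T\mX^{\text{priv}}\mS^{-1}(\mX^{\text{priv}})^T\mU\mU^T\rvy$. The two differ by $-\mU^T\mX^{\text{priv}}\mS^{-1}(\mX^{\text{priv}})^T\rvy$.

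Neither the identity nor your fallback $\|\rvu^*\|\le\|\widehat{\rvy}\|$ holds. Take $n=2$, $\mX^{\text{pub}}=(1,0)^T$, $\mX^{\text{priv}}=(1,1)^T$, $\rvy=(0,1)^T$. Then $\mU=(1,0)^T$, $\Sigma=\mV=1$ and $\mS=1$. The exact fit gives $(\theta^{\text{priv}})^*=1$ and $(\theta^{\text{pub}})^*=\rvu^*=-1$, while $\widehat{\rvy}=0$ because $\mU^T\rvy=0$. With $\theta_0^{\text{pub}}=0$ and $\theta_0^{\text{priv}}=1$, the stated right-hand side is $0$, whereas the bound Theorem~\ref{thm:convergence_glm} actually gives is strictly positive.

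The paper's proof reaches the stated $\widehat{\rvy}$ only through an algebra slip in Lemma~\ref{lemma:lin_reg_optimum}. In the notation of that proof, it computes both block contributions $\mA(\mX^{\text{pub}})^T\rvy$ and $\mB(\mX^{\text{priv}})^T\rvy$, then keeps only the first. The repair is to define $\widehat{\rvy}:=\mU^T\big(\rvy-\mX^{\text{priv}}(\theta^{\text{priv}})^*\big)$, which is exactly your $\rvu^*$. With that definition, your Steps~1--3 prove both bounds verbatim. In the label-DP case (no private features) it reduces to $\mU^T\rvy$, so the paper's comparison paragraph is unaffected. Separately, the $\sI_n$ inside $\widehat{\rvy}$ should be $\sI_{d^{\text{pub}}}$ for the dimensions to match.
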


\textbf{Comparison.} 
We compare the convergence bounds of \precond\ and the \dpsgd\ baseline in (\ref{eq:convergence_precond_lin_reg}) and (\ref{eq:convergence_dpsgd_lin_reg}), under the label DP setting with no private features.
In this case, $F^2 = 2\| \theta_0^{\text{pub}}\|^2$, $E^2 = 0$, and $\widehat{\rvy} = \mU^T \rvy$.
Let the diagonal entries of $\Sigma$ be $\sigma_{max} = \sigma_1 \geq \sigma_2 \geq \dots \geq \sigma_{d^{\text{pub}}} = \sigma_{min}$. 

We first consider the zero-initialization case, i.e. $F^2=\|\theta_{0}^{\text{pub}}\|^2 =~0$. Besides the common factors $G$ and $\sqrt{\frac{1}{T} + \frac{d}{n^2\eps^2}}$ in both convergence bounds,
\dpsgd's bound scales with 
$\sqrt{2 \sum_{i=1}^{d^{\text{pub}}} (\widehat{y}_i \frac{\sigma_{max}}{\sigma_i})^2 }$; while the bound for \precond\ scales as $\sqrt{2\sum_{i=1}^{d^{\text{pub}}} \widehat{y}_i^2}$.
Since $\frac{\sigma_{max}^2}{\sigma_i^2} \geq 1, \forall i \in [d^{\text{pub}}]$, \precond\ achieves a provably tighter bound. The improvement, $$\sqrt{\sum_{i=1}^{d^{\text{pub}}} (\widehat{y}_i \frac{\sigma_{max}}{\sigma_i})^2  / \sum_{i=1}^{d^{\text{pub}}} \widehat{y}_i^2 },$$ 
is maximized when $\rvy$ aligns with the singular direction corresponding to $\sigma_{min}$ and minimized (equal to 1) when $\rvy$ aligns with the top singular direction $\sigma_{max}$.
More generally, larger gains arise when $\rvy$ aligns with lower singular directions of $\mX^{\text{pub}}$.

We next consider non-zero initialization. Let $\theta_0^{\text{DP-SGD}}$ and $\theta_0^{\text{Cond-DP}}$ denote the initialization of \dpsgd\ and \precond, respectively.
Up to shared factors, the \dpsgd\ bound scales as
$$\sqrt{2 \sum_{i=1}^{d^{\text{pub}}} (\widehat{y}_i \frac{\sigma_{max}}{\sigma_i})^2
+  \|\theta_0^{\text{DP-SGD}}\|^2\sigma_{max}^2},$$
while the \precond\ bound scales as $$\sqrt{2 \|\widehat{\rvy}\|^2 + \|\theta_0^{\text{Cond-DP}}\|^2}.$$ 
Since the initialization is an algorithmic choice, selecting
$\|\theta_0^{\text{Cond-DP}}\| \leq \|\theta_0^{\text{DP-SGD}}\|\sigma_{\max}$
is sufficient to ensure that \precond\ again achieves a strictly tighter convergence bound, with maximal improvement when $\rvy$ aligns with the smallest singular direction.

\subsection{More Complex Models and \switch}

The analysis of the private linear regression case relied on a relatively tractable dependence of the solution on the public feature matrix $\mX^{\text{pub}}$ (see Lemma~\ref{lemma:lin_reg_optimum} in Appendix~\ref{sec:appendix_priv_lin_reg}), enabling a provable improvement. For more complex models, such characterizations are typically unavailable, making a similar theoretical analysis intractable. 
We hence turn to empirical evaluation, using the conditioning matrix $\widehat{\mC}$ from (\ref{eq:cond_matrix}) under more complex settings.

A key empirical observation is that while conditioning can substantially accelerate convergence in the early stages of training, it can hinder further loss reduction in later epochs. Motivated by this behavior, we propose \switch, a hybrid training strategy that applies \precond\ during the initial phase of optimization and then switches to \dpsgd\ to facilitate continued progress.
The switching epoch is treated as a tunable hyperparameter.

\section{Experiments}
\label{sec:exp}

We empirically evaluate \precond\ under label DP across three \update{regression} settings: (i) private linear \update{models} on synthetic and real-world datasets (Sections~\ref{subsec:private_lin_reg_synthetic_ds} and~\ref{subsec:private_lin_reg_real_world_ds}); (ii) private \update{models with linear input transformation and} MLP-based prediction layers in more general non-linear settings (Section~\ref{subsec:general_glms}); and (iii) \update{private linear models, as well as private models with linear input transformation and MLP-based prediction layers} on the \texttt{Criteo Sponsored Search Conversion} benchmark (Section~\ref{subsec:criteo_search}).

\textbf{Baselines.} We compare \precond\ and \switch\ with the conditioning matrix $\widehat{\mC}$ against the following baselines: (i) \textbf{\textsf{\dpsgd}}~\cite{Abadi2016dpsgd}, which directly trains on both public and private features using a DP optimizer; and (ii) the state-of-the-art label-DP method for regression, \textbf{\rronbins}~\cite{ghazi2023regression_ldp}. See Appendix~\ref{sec:appendix_comparison_rronbins} for a detailed algorithmic comparison with \rronbins.
We also evaluate \textbf{\wtdllp}~\cite{brahmbhatt2023ldp_aggregation}, but omit its results on real-world datasets from the main text due to limited applicability and consistently inferior performance; full results are reported in Appendix~\ref{subsec:appendix_full_exp_res}. Detailed descriptions of all baselines are in Appendix~\ref{subsec:appendix_baselines}.

\update{
\textbf{Implementation.} Similar to standard implementations of DPSGD~\cite{Abadi2016dpsgd}, the boundedness assumption (Assumption~\ref{ass:lipschitzness}) used in our analysis are enforced in practice through clipping, with clipping thresholds treated as hyperparameters. Quantities such as the noise scale and learning rate are calibrated accordingly. In particular, although the gradient sensitivity depends on the bound $M$ in Assumption~\ref{ass:lipschitzness}, which itself depends on $\|\rvx^{\mathrm{pub}}\|$, we do not estimate $M$ directly. Rather, the dependence on $\rvx^{\mathrm{pub}}$ is used only in the analysis to motivate the choice of conditioning matrix $\mC$ and derive utility guarantees.
}

\begin{figure*}[t]
  \centering
  \begin{minipage}[t]{0.75\textwidth}
    \centering
    \includegraphics[width=\linewidth]{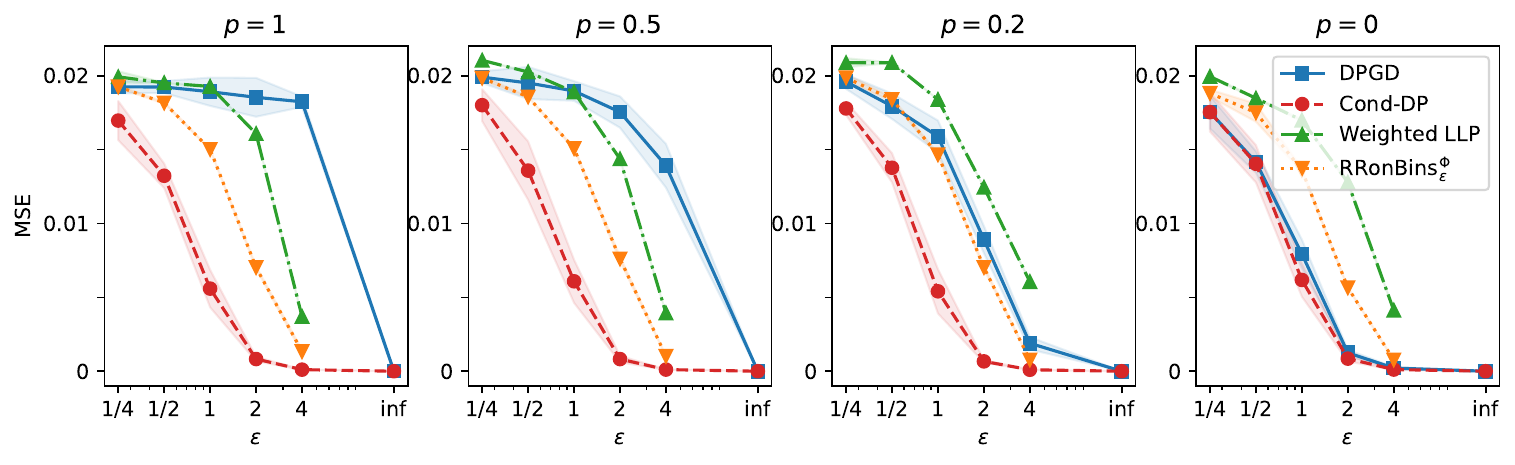}
    \caption{Validation MSEs of different algorithms on synthetic data with varying spectrum power $p$.}
    \label{fig:synthetic_mse}
  \end{minipage}\hfill
  \begin{minipage}[t]{0.2\textwidth}
  \vspace{-10em}
    \centering
    \includegraphics[width=\linewidth]{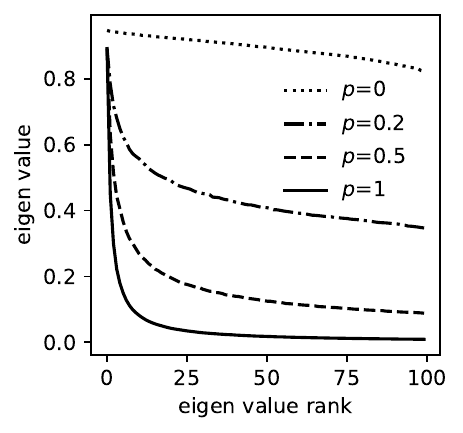}
    \caption{Spectra of singular values in \texttt{synthetic}.}
    \label{fig:spectra_synthetic}
  \end{minipage}
  \vspace{-10pt}
\end{figure*}

\textbf{Hyperparameters.} 
Across all experiments, we evaluate models under $\eps \in \{0.25, 0.5, 1, 2, 4, \infty\}$ (with $\eps = \infty$ denoting non-private training). \update{We fix $\delta = 10^{-6}$ on all datasets except \texttt{Criteo Sponsored Search Conversion} and used $1/(\text{\# train})$ on that dataset instead.}
We extensively tune the following key hyperparameters:
1) \textit{Learning rate} $\eta \in [10^{-5}, 15]$.
2) \textit{Gradient clipping norm} $c \in [0.05, 40]$ which controls the sensitivity of gradients via clipping~\cite{Abadi2016dpsgd}, and
3) \textit{Initialization standard deviation} $\sigma^{\text{init}} \in [10^{-5}, 5]$ used in Gaussian initialization of each model parameter. 
For the optimizer, we use \texttt{Opacus}~\cite{opacus} to implement a noisy version of Adam~\cite{kingma2015adam} with default \texttt{PyTorch} hyperparameters.

\begin{table}[t]
\begin{adjustbox}{width=0.48\textwidth}
    \begin{tabular}{|c|c|c|c|c|}
    \hline
        Dataset & \# train & \# test & $d^{\text{pub}}$ & Model\\
    \hline
        \texttt{synthetic} & 4000 & 1000 & 100 & Linear \\
    \hline
        \texttt{Boston\_housing}
        & 404 & 102 & 12 & Linear \\
        \texttt{wine} & 3918 & 980 & 11 & Linear \\
        \texttt{energy} & 15788 & 3947 & 25 & Linear \\
        \texttt{CA\_housing} & 16512 & 4128 & 8 & Linear \\
    \hline
        \texttt{supercond} & 17010 & 4253 & 81 & MLP \\
        \texttt{TomsHardware} & 22543 & 5636 & 96 & MLP \\
        \texttt{wave} & 28834 & 7209 & 98 & MLP\\
    \hline
        \shortstack{\texttt{Criteo Sponsored} \\ \texttt{Search Conversion}} & 1386177 & 346544 & 4003 & Linear / MLP \\
    \hline
    \end{tabular}
\end{adjustbox}
    \caption{A Summary of datasets.}
    \vspace{-20pt}
    \label{tab:ds_summary}
\end{table}

\textbf{Datasets and Common Setup.}
Table~\ref{tab:ds_summary} summarizes all datasets used, including the number of training and test samples, feature dimensions, and model architectures. 
For each experiment setting, we report the mean (solid lines) and one standard deviation (shaded areas) over 5 runs.

\subsection{Private Linear Regression on Synthetic Datasets}
\label{subsec:private_lin_reg_synthetic_ds}

We first evaluate all algorithms on synthetic data. 
The public feature matrix is generated as $\mX^{\text{pub}} = \mU \Sigma \mV^{\top}$, where $\mU \in \R^{n \times d^{\text{pub}}}$ has orthogonal columns, 
$\mV \in \R^{d^{\text{pub}} \times d^{\text{pub}}}$ is orthonormal, and $\Sigma$ is diagonal with $\Sigma_{ii} = 1/i^p$ for $p \ge 0$. Varying $p$ controls the spectral decay of $\mX^{\text{pub}}$, with larger $p$ corresponding to faster decay. We consider $p \in \{0, \frac{1}{5}, \frac{1}{2}, 1\}$ and train all models for 1024 epochs using full-batch optimization. Figure~\ref{fig:spectra_synthetic} shows the resulting singular value spectra\footnote{For $p=0$, eigenvalues are not exactly 1 because the full dataset ($n=5000$) is generated jointly, while the plot reflects the training subset ($n=4000$).}, and Figure~\ref{fig:synthetic_mse} reports validation losses under varying privacy budgets $\epsilon$.

\textbf{Discussion.}
The proposed \precond\ consistently achieves the lowest validation loss among all algorithms, including the state-of-the-art \rronbins, across different privacy budgets $\eps$, supporting our theoretical findings.
As the power $p$ increases and the spectrum decays more rapidly, the performance gap between \precond\ and other baselines widens, indicating that \precond\ effectively leverages spectral information and excels in ill-conditioned settings.
When $p=0$, where the singular values are uniform, \precond\ performs similarly to \dpsgd, further validating our intuition.

\subsection{Private Linear Regression on Real-World Datasets}
\label{subsec:private_lin_reg_real_world_ds}

Moving beyond synthetic data, we train private linear models on four real-world datasets: \texttt{wine}, \texttt{Boston\_housing}, \texttt{energy}, and \texttt{CA\_housing}. Figure~\ref{fig:spectrum_datasets} presents the singular value spectra for two of these datasets, with the remaining spectra reported in Appendix~\ref{subsec:appendix_dataset_spectra}. Due to the relatively small dataset sizes, all models are trained for 128 epochs using full-batch optimization. The corresponding results are presented in Figure~\ref{fig:mse_linear_model_real_datasets}.

\begin{figure}[h]
    \centering
    \includegraphics[width=\linewidth]{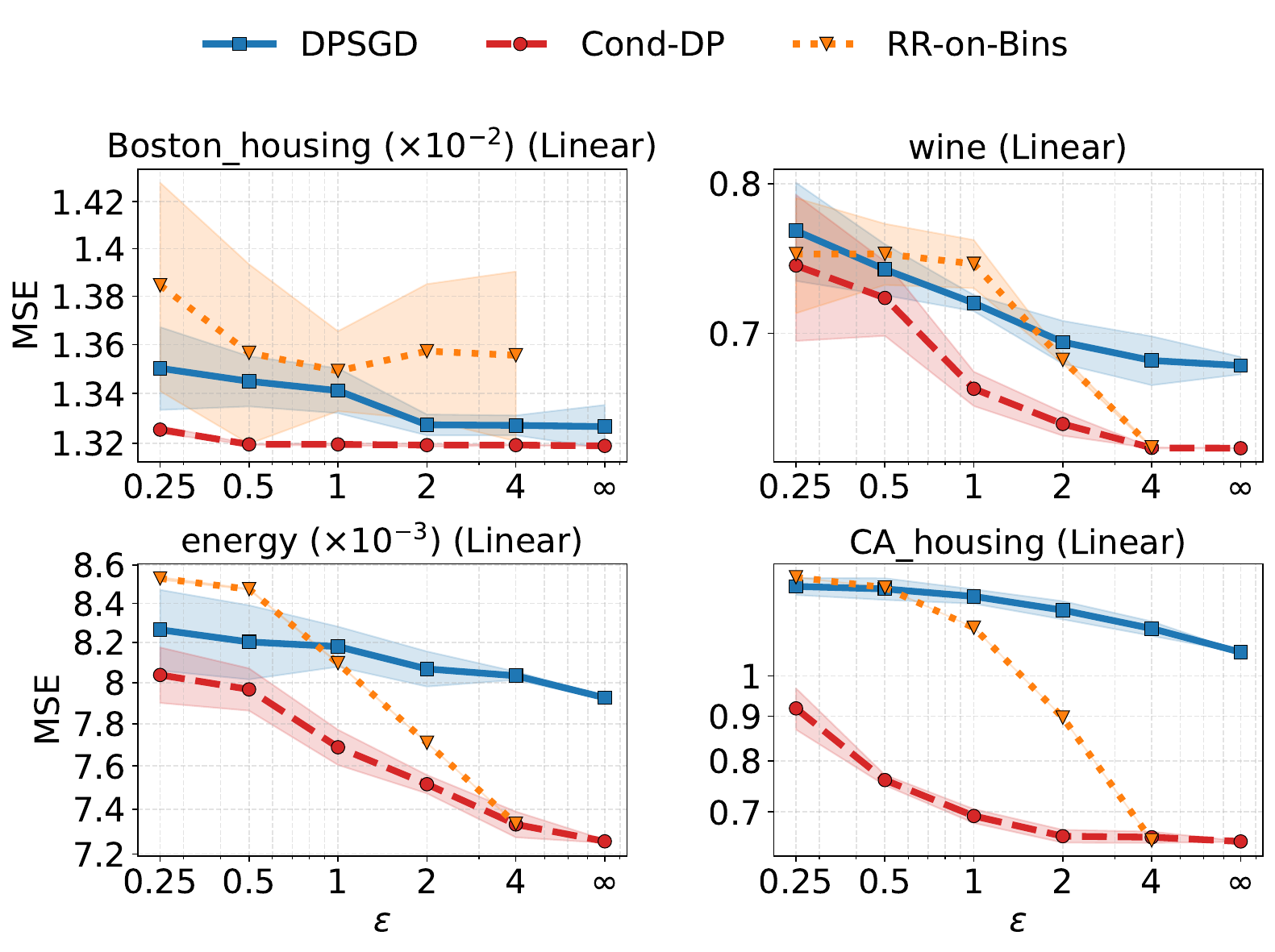}
    \caption{Validation MSEs of different algorithms for private linear models on real-world datasets across varying privacy budgets $\epsilon$.}
    \vspace{-15pt}
    \label{fig:mse_linear_model_real_datasets}
\end{figure}

\textbf{Discussion.} 
On real-world datasets with rapidly decaying feature spectra, we observe that \precond\ consistently outperforms other methods in private linear regression, corroborating our theory.

\begin{figure*}[t]
    \centering
    \includegraphics[width=0.6\linewidth]{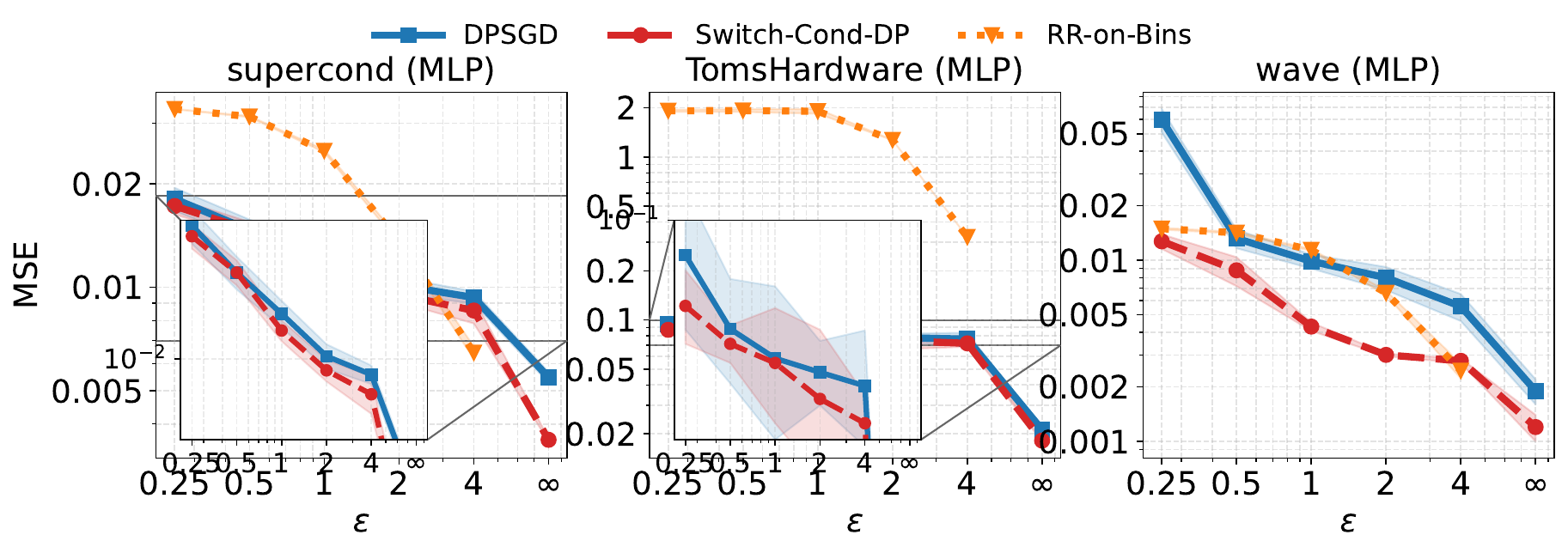}
    \vspace{-5pt}
    \caption{Validation MSEs of different algorithms for private \update{linear models with linear input transformations and} a two-layer MLP prediction layer (16, 8 hidden units) on real-world datasets across varying privacy budgets 
    $\epsilon$.
    }
    \vspace{-10pt}
    \label{fig:mse_glm_real_datasets}
\end{figure*}

\subsection{\update{Private Two-Layer MLP Regression}}
\label{subsec:general_glms}

We extend our evaluation beyond private linear regression by using a model with
$p^{\text{pub}} = 16$, and the prediction component $f_{\omega}$ on top of the linear embedding layers is a 2-layer MLP with 16 and 8 hidden units, followed by a scalar output.
We evaluate this model trained by different algorithms on three datasets: \texttt{supercond}, \texttt{TomsHardware} and \texttt{wave}.
Owing to the larger dataset sizes, training is performed for 1000 epochs using batch size 100.
The spectra of the datasets can be found in Appendix~\ref{subsec:appendix_dataset_spectra}. The results are presented in Figure~\ref{fig:mse_glm_real_datasets}.

\textbf{Discussion.}
Across all datasets with decaying spectra, \switch\ consistently achieves lower MSE than \dpsgd, and substantially outperforms \rronbins\ in high-privacy regimes. These results indicate that the benefits of \precond\ extend beyond linear regression, where they are theoretically justified, to more complex model architectures.

\subsection{Private Linear and \update{Two-layer MLP} Regression on Criteo Sponsored Search Conversion Log Dataset}
\label{subsec:criteo_search}

\texttt{Criteo Sponsored Search Conversion}
\footnote{\url{https://ailab.criteo.com/criteo-sponsored-search-conversion-log-dataset/}}
is a widely used benchmark for advertising prediction tasks. We evaluate both private linear models and private \update{models with linear input transformations and} MLP-based prediction layers on this dataset.

\update{
\textbf{Data Preprocessing.} Following~\cite{ghazi2023regression_ldp}, we remove all examples with no recorded conversion (i.e., \texttt{SalesAmountInEuro} = -1), yielding a dataset of 1,732,721 examples, and clip the conversion value at 400 Euro, which corresponds to the
95th percentile of the value distribution. We randomly split the data into 80\% for training and 20\% for testing.
We retain three numerical features (time\_delay\_for\_conversion, nb\_clicks\_1week, product\_price) and process the 17 categorical features by hashing each feature–value pair into one of 4,000 bins, resulting in a total of 4,003 input features. For private linear regression, we directly use these 4,003 features. For private models with an MLP prediction layer, we learn an embedding of dimension 8 for each hashed categorical bin, and concatenate the resulting categorical embeddings, along with the three numerical features. This yields an embedding of size $17\times 8 + 3$ for each sample.
}

\textbf{Adapting to Categorical Features.}
Due to the natural sparsity of categorical features, directly conditioning the input vector would destroy sparsity, making embedding+concatenation no longer possible. To extend our approach to this sparse setting, we adopt an analogous construction to $\widehat{\mC}$, but use it as an optional \emph{initialization}, rather than a conditioning factor. Specifically, we compute the singular value decomposition of $\mX^{\text{pub}} = \mU \Sigma \mV^{T}$, and use $\Sigma^{-1} \mV^{T}_{:d^{\text{emb}}}$ —the top $d^{\text{emb}}$ right singular vectors scaled by the inverse singular values—to initialize the learnable embedding table. The intuition is that with such initialization, the outputs of the embedding layer would be approximately spherical, potentially facilitating learning during early phases of training.

\textbf{Setup.}
Following~\cite{ghazi2023regression_ldp}, for \update{non-linear models},
we use a two-layer MLP with 128, 64 hidden units in the prediction layer, followed by a scalar output.
Both linear and \update{the two-layer MLP} models are trained for 50 epochs using a batch size of 8,192.

\begin{figure}
    \centering
    \vspace{-5pt}
    \includegraphics[width=0.48\linewidth]{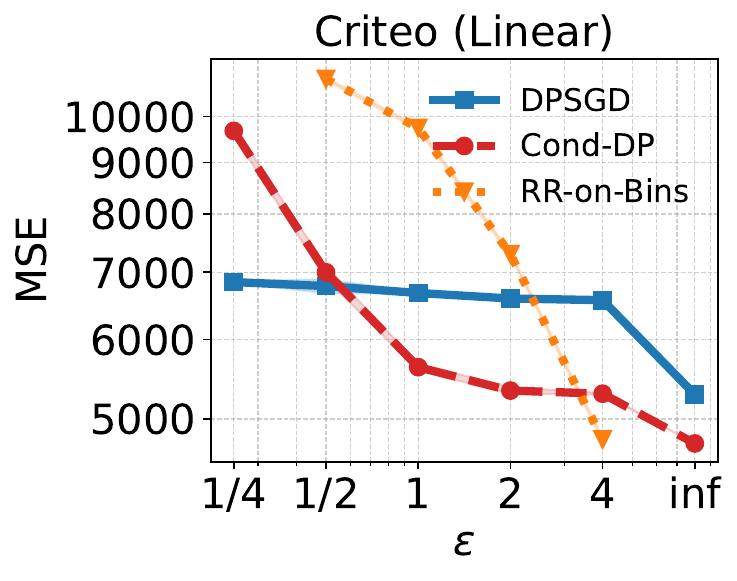}
    \includegraphics[width=0.48\linewidth]{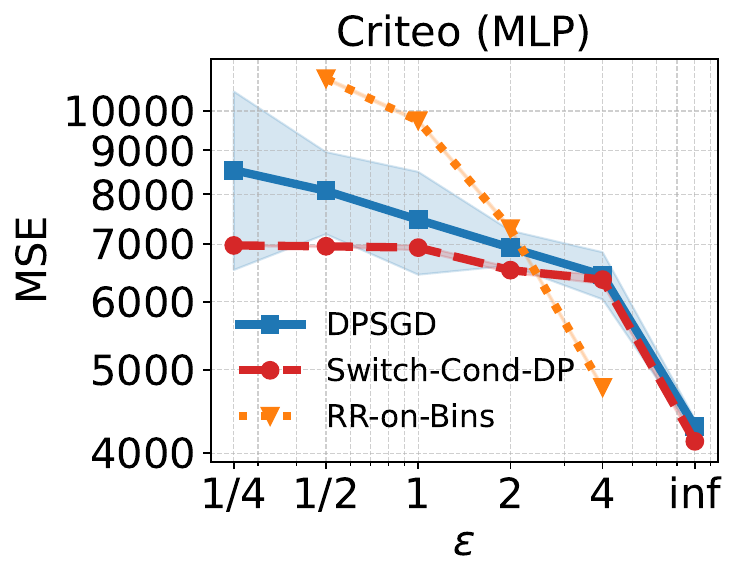}
    \caption{Validation MSEs of different algorithms on \texttt{Criteo Sponsored Search Conversion} across varying privacy budgets $\epsilon$. (left) linear models; (right) \update{models with linear input transformations and} a two-layer MLP prediction layer (128, 64 hidden units).}
    \vspace{-20pt}
    \label{fig:mse_criteo_search}
\end{figure}

\textbf{Results.}
\rronbins\ incurs a computational cost of $O((n^{\text{pub}})^2)$, which scales poorly with dataset size, making it difficult to run on this dataset containing over 1.3 million training examples, even when using label discretization techniques proposed in~\cite{ghazi2023regression_ldp}. We therefore report the results for \rronbins\ as published in~\cite{ghazi2023regression_ldp} and compare against these numbers in both the linear and \update{non-linear} settings.
In non-private settings, it is well known that nonlinear models, such as \update{models} with MLP-based prediction layers, substantially outperform linear models on this dataset~\cite{rendle2010FM}, and are therefore the standard choice. Figure~\ref{fig:mse_criteo_search} reports the validation MSEs for private linear models (left) and private MLP (right).

\textbf{Discussion.}
Interestingly, we observe that linear models can be competitive—and in some regimes superior—in private settings on this dataset, particularly under high privacy regimes. For linear models, \precond\ consistently improves upon \dpsgd\ in low-to-moderate privacy regimes, with the largest gains observed for $\eps \in (0.5, 2]$, and substantially outperforms \rronbins\ (the previous SOTA on this benchmark) across most privacy budgets (except $\epsilon = 4$).
For the more expressive \update{non-linear} setting with a two-layer MLP prediction layer, the results indicate that \switch\ achieves lower MSE than both \dpsgd\ and \rronbins\ for small privacy budgets $(\epsilon < 4)$.
\vspace{-5pt}

\section{Conclusion}
\label{sec:conclusion}

In this work, we develop new methods for leveraging non-sensitive public features in differentially private learning, focusing on underexplored regression settings. We propose \precond, which use conditioning matrices derived from public features to improve privacy–utility trade-offs, and provide convergence guarantees for convex, strongly convex, and non-convex objectives. We show that \precond\ provably outperforms \dpsgd\ in private linear regression and extend these gains to nonlinear models via \switch. Experiments on diverse synthetic and real-world datasets confirm consistent improvements over state-of-the-art baselines.

\section*{Acknowledgements}
\update{We thank the reviewers and area chair for the insightful discussion and suggestions.}

\section*{Impact Statement}
This paper presents work whose goal is to advance the field of Machine
Learning. There are many potential societal consequences of our work, none
which we feel must be specifically highlighted here.

\bibliography{mybib}
\bibliographystyle{icml2026}

\newpage
\appendix
\onecolumn
\section{Privacy Analysis}
\label{sec:appendix_privacy}

\begin{theorem}[Privacy guarantee, Restatement of Theorem~\ref{thm:privacy}]
\label{thm:appendix_privacy}
    If the noise variance in Algorithm ~\ref{alg:cond_dp} is set to be $\sigma^2 = \widetilde{O}(\frac{M^2 T}{\eps^2 n^2})$, where $M^2 \triangleq G^2 \cdot
    \max_{i\in [n]} \|\mC\rvx_i^{\text{pub}}\|^2 
    + \widehat{G}^{2} \cdot \update{R^2}
    + \overline{G}^2$, then Algorithm~\ref{alg:cond_dp} is $(\eps, \delta)$-differentially private.
\end{theorem}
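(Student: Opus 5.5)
We treat Algorithm~\ref{alg:cond_dp} as $T$ adaptive compositions of the Gaussian mechanism and bound the $\ell_2$-sensitivity of each step's full-gradient query. The analysis uses only the model's chain-rule structure and Assumption~\ref{ass:lipschitzness}. The conditioning matrix $\mC$ enters only through the gradient of the public block. Since $\mC$ is computed from public features alone, it is a fixed, data-independent (with respect to adjacency) part of the mechanism and costs no privacy.

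\textbf{Per-sample gradient norm.} Fix iterate $(\Theta^{\text{pub}}_t,\Theta^{\text{priv}}_t,\omega_t)$ and sample $i$, and write $\rvv^{\text{pub}}=\Theta^{\text{pub}}\mC\rvx_i^{\text{pub}}$. By the chain rule:
\begin{itemize}
\item $\partial \gL_i/\partial\Theta^{\text{pub}} = (\partial \gL_i/\partial \rvv^{\text{pub}})(\mC\rvx_i^{\text{pub}})^{T}$. This is a rank-one matrix with Frobenius norm at most $G\|\mC\rvx_i^{\text{pub}}\|$.
\item $\partial \gL_i/\partial\Theta^{\text{priv}} = (\partial \gL_i/\partial \rvv^{\text{priv}})(\rvx_i^{\text{priv}})^{T}$, with norm at most $\widehat G R$.
\item $\|\partial\gL_i/\partial\omega\|\le\overline G$. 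Per the remark after Assumption~\ref{ass:lipschitzness}, this bound is uniform in $\mC$.
\end{itemize}
Summing squares, the concatenated per-sample gradient has norm at most $M$, with $M^2$ exactly as in the statement (taking the max over $i$ for the public term).

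\textbf{Sensitivity.} Adjacent datasets $D,D'$ differ only in the private features and label of one index $j$; the public feature $\rvx_j^{\text{pub}}$ is unchanged. Hence all terms $i\neq j$ in the averaged gradient coincide. The difference of averaged gradients is $\frac1n(\nabla\gL_j-\nabla\gL_j')$, with norm at most $2M/n$. The noise is isotropic Gaussian across all blocks, so each iteration is a Gaussian mechanism with sensitivity $\Delta=2M/n$ and noise $\sigma$. In Rényi DP terms, that is $(\alpha,\alpha\Delta^2/(2\sigma^2))$-RDP.

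\textbf{Composition and conversion.} The iterates are chosen adaptively, so we use adaptive composition: after $T$ steps we have $(\alpha,\alpha T\Delta^2/(2\sigma^2))$-RDP. Converting to $(\eps,\delta)$-DP via $\eps = \alpha T\Delta^2/(2\sigma^2)+\log(1/\delta)/(\alpha-1)$ and optimizing $\alpha$ shows that $\sigma^2 = O(M^2T\log(1/\delta)/(n^2\eps^2))$ suffices for $\eps\lesssim\log(1/\delta)$. Equivalently, one can invoke advanced composition directly. The $\widetilde O$ absorbs the $\log(1/\delta)$ factor. The output averages are post-processing, so they preserve the guarantee.

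\textbf{Main subtlety.} The delicate point is not the composition but the sensitivity argument. One must use that public features are identical across neighbors, so that $\mC\rvx_j^{\text{pub}}$ is the same in $D$ and $D'$ and can be bounded by the max over $i$. One must also ensure the $\omega$- and $\Theta^{\text{priv}}$-derivative bounds hold at every conditioned iterate, which the remark's reparametrization argument gives. A second point is that the algorithm as written uses the full gradient rather than subsampling. So no amplification is needed, and the stated $T$-dependence follows from plain composition.
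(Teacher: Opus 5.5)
Your proposal is correct and takes essentially the same route as the paper: a Gaussian mechanism at each iteration with per-step sensitivity $O(M/n)$, composed over $T$ full-batch steps. The paper budgets $(O(\eps/\sqrt{T}), O(\delta/T))$ per step via advanced composition, and your R\'enyi-DP accounting is an equivalent alternative. Your version is also more careful than the paper's in two places: it uses $2M/n$ for replace-one adjacency where the paper writes $M/n$, and it derives the per-sample bound $M$ explicitly through the chain rule, which the paper only asserts.
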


\begin{proof}
    The privacy analysis follows directly from the Gaussian mechanism. To ensure that the final output satisfies $(\eps, \delta)$-DP, it suffices by standard composition~\cite{dwork2014dp} that each of the $T$ iterations incurs a privacy loss of 
    $(\gO(\frac{\eps}{\sqrt{T}}), \gO(\frac{\delta}{T}))$. 
    By the Gaussian mechanism~\cite{dwork2014dp}, this is achieved by adding noise with variance $\sigma^2 = \widetilde{\gO}(\frac{ S \sqrt{T}}{\eps})$,
    where $S$ denotes the $\ell_2$ sensitivity of the per-sample gradient. 
    The $\ell_2$ sensitivity of the per-sample gradient can be characterized by $M^2 \triangleq G^2 \cdot \max_{i\in [n]} \|\mC\rvx_i^{\text{pub}}\|^2 
    + \widehat{G}^{2} \cdot \update{R^2}
    + \overline{G}^2$.
    Since $S \leq \frac{M}{n}$, it suffices to set $\sigma^2 = \widetilde{O}(\frac{S^2 T}{\eps^2}) = \widetilde{\gO}(\frac{M^2 T}{n^2 \eps^2})$.
\end{proof}

\section{Convergence of Private \update{Models with Linear Input Transformations} 
Under Convex Losses}
\label{sec:appendix_convergence_glm}

\begin{theorem}[Restatement of Theorem~\ref{thm:convergence_glm}]
\label{thm:appendix_convergence_glm}
    Let $\mC$ be a conditioning matrix of full rank.  Under Assumptions~\ref{ass:loss_glm} and~\ref{ass:lipschitzness},
    if Algorithm~\ref{alg:cond_dp} is run with
    \begin{align}
        &\eta = \frac{N}{\sqrt{TM^2(1 + \frac{dT}{n^2\eps^2})}}, \quad \sigma^2 = \widetilde{O}(\frac{M^2 T}{n^2\eps^2})\\
        &\text{where } M^2 \triangleq G^2 \cdot \max_{i\in [n]} \|\mC\rvx_i^{\text{pub}}\|^2 
        + \widehat{G}^{2} \cdot \update{R^2} 
        + \overline{G}^2, \\
        \nonumber
        &\text{and } N^2 \triangleq \text{Tr}\Big( (\Theta_0^{\text{pub}} \mC - (\Theta^{\text{pub}})^{*})  (\mC^T \mC)^{-1}(\Theta_0^{\text{pub}} \mC - (\Theta^{\text{pub}})^{*})^{T} \Big) + \|\Theta^{\text{priv}}_0 - (\Theta^{\text{priv}})^*\|_F^2
        + \|\omega_0 - \omega^*\|^2
    \end{align}
    then it guarantees
    \begin{align}
        &\E\left[\gL(\Theta^{\text{pub-out}}, \Theta^{\text{priv-out}}, \omega^{\text{out}}; D)\right]
        - \gL^*\\
        \nonumber
        &\leq \sqrt{\frac{1}{T} + \frac{d}{n^2\eps^2} } \cdot \underbrace{ M}_{\text{Gradient bound factor}}
        \cdot \underbrace{N}_{\text{Initialization factor}}
    \end{align}
    where the expectation is taken w.r.t. the randomness of the algorithm. 
\end{theorem}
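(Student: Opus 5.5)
The plan is to reduce \precond\ to plain noisy (sub)gradient descent on the reparametrized objective, and then run the standard convex \dpsgd\ potential argument. What makes this work is tracking the distance to the optimum in the geometry induced by $\mC$.

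\textbf{Step 1: reparametrize.} Define $\tilde{\gL}(\Theta^{\text{pub}}, \Theta^{\text{priv}}, \omega) = \gL(\Theta^{\text{pub}}\mC, \Theta^{\text{priv}}, \omega)$. This is convex, being the composition of the convex $\gL$ (Assumption~\ref{ass:loss_glm}) with a linear map. Its per-sample gradient with respect to $\Theta^{\text{pub}}$ is $\frac{\partial \gL_i}{\partial \rvv^{\text{pub}}}(\mC\rvx_i^{\text{pub}})^T$, with norm at most $G\|\mC\rvx_i^{\text{pub}}\|$. Together with the bounds $\widehat{G}R$ and $\overline{G}$ from Assumption~\ref{ass:lipschitzness}, every (sub)gradient of $\tilde{\gL}$ has squared norm at most $M^2$.

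Because $\mC$ has full rank, the point $\tilde\Theta^{*} := (\Theta^{\text{pub}})^{*}\mC^{-1}$, together with $(\Theta^{\text{priv}})^*$ and $\omega^*$, minimizes $\tilde{\gL}$ with value $\gL^*$. A direct computation gives
\[
\|(\Theta_0^{\text{pub}} - \tilde\Theta^{*})\|_F^2 = \mathrm{Tr}\big((\Theta_0^{\text{pub}}\mC - (\Theta^{\text{pub}})^*)\mC^{-1}\mC^{-T}(\cdot)^T\big),
\]
and $\mC^{-1}\mC^{-T} = (\mC^T\mC)^{-1}$. So the Euclidean initial distance in the new coordinates is exactly $N^2$. (If the trace is meant in the other ordering, I would adjust by transposes; the point is only that $N$ equals the initial distance in reparametrized coordinates.) Averaging the iterates commutes with multiplication by $\mC$, so the output loss equals $\tilde{\gL}$ at the averaged iterate.

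\textbf{Step 2: one-step inequality.} Let $\theta_t$ denote all parameters stacked and $\theta^*$ the reparametrized optimum. Expanding the update gives
\[
\E\|\theta_{t+1}-\theta^*\|^2 \le \E\|\theta_t-\theta^*\|^2 - 2\eta\,\E\langle g_t,\theta_t-\theta^*\rangle + \eta^2(M^2 + d\sigma^2).
\]
The cross term with the noise vanishes because the noise is zero-mean and independent of $\theta_t$. Convexity gives $\langle g_t,\theta_t-\theta^*\rangle \ge \tilde{\gL}(\theta_t)-\gL^*$.

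\textbf{Step 3: telescope and tune.} Telescoping over $t$ and applying Jensen to the averaged output yields
\[
\E\tilde{\gL}(\bar\theta)-\gL^* \le \frac{N^2}{2\eta T} + \frac{\eta}{2}(M^2+d\sigma^2).
\]
Substituting $\sigma^2 = \widetilde O(M^2T/(n^2\eps^2))$ gives $M^2 + d\sigma^2 \lesssim M^2(1 + dT/(n^2\eps^2))$. The stated $\eta = N/\sqrt{TM^2(1+dT/(n^2\eps^2))}$ balances the two terms, leaving
\[
N M \sqrt{(1+dT/(n^2\eps^2))/T} = MN\sqrt{1/T + d/(n^2\eps^2)},
\]
with logarithmic factors absorbed into $\widetilde O$.

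\textbf{Main obstacle.} I expect the main difficulty to be Step 1: correctly identifying the conditioned distance $N$ and ensuring the gradient bound $M$ holds uniformly under the reparametrization. This is where full rank of $\mC$ and the remark on uniform Lipschitz bounds are used. Steps 2 and 3 are the textbook noisy-SGD analysis for convex Lipschitz losses.
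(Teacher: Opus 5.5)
Your proposal is correct and is essentially the paper's proof. The paper tracks the potential $\|\mW\theta_t-\theta^*\|^2_{(\mW\mW^T)^{-1}}$, where $\mW$ is block-diagonal with $\mC^T$ blocks; since $\mW$ is invertible, this is exactly your Euclidean distance $\|\theta_t-\mW^{-1}\theta^*\|^2$ to the pulled-back optimum, and the rest matches your argument (gradient term bounded by $M^2$, noise term by $d\sigma^2$, then telescoping, Jensen, and balancing $\eta$), so working directly in reparametrized coordinates is just a cleaner way to write it.
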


\begin{proof}
    For notational simplicity, we represent all model parameters as vectors. Specifically, let $\theta^{\text{pub}, j}$ denote the $j$-th row of $\Theta^{\text{pub}} \in \R^{p^{\text{pub}} \times d^{\text{pub}}}$ for each $j \in [p^{\text{pub}}]$, and similarly, let $\theta^{\text{priv}, j}$ denote the $j$-th row of $\Theta^{\text{priv}} \in \R^{p^{\text{priv}} \times d^{\text{priv}}}$ for each $j \in [p^{\text{priv}}]$.

    Let the vector of parameters be $\theta_t \in \R^{d}$, $\forall t\in [T]$, and the corresponding vector of parameters with conditioning matrix $\mC \in \R^{\text{pub}\times \text{pub}}$ be $\mW \theta_t \in \R^{d}$, for $\mW \in \R^{d\times d}$, then
    \begin{align}
    \label{eq:def_vector_param}
        \theta_t = \begin{bmatrix}
        \theta_t^{\text{pub}, 1}\\
        \theta_t^{\text{pub}, 2}\\
        \dots\\
        \theta_t^{\text{pub}, p^{\text{pub}}}\\
        \theta_t^{\text{priv}, 1}\\
        \dots\\
        \theta_t^{\text{priv}, p^{\text{priv}}}\\
        \omega_t
    \end{bmatrix},
    \theta^{\text{C}}  = \begin{bmatrix}
        \mC^T \theta_t^{\text{pub}, 1}\\
        \mC^T \theta_t^{\text{pub}, 2}\\
        \dots\\
        \mC^T \theta_t^{\text{pub}, p^{\text{pub}}}\\
        \theta_t^{\text{priv}, 1}\\
        \dots\\
        \theta_t^{\text{priv}, p^{\text{priv}}}\\
        \omega_t
    \end{bmatrix} = \mW \theta_t, \quad 
    \text{where } \mW = \begin{bmatrix}
        \mC^{T} & \mathbf{0} & \dots & \mathbf{0} & \dots & \mathbf{0} & \mathbf{0}\\
        \mathbf{0} & \mC^{T} & \dots & \mathbf{0} & \dots & \mathbf{0} & \mathbf{0} \\
        \mathbf{0} & \cdots & \mC^{T} & \mathbf{0} & \dots & \mathbf{0} & \mathbf{0}\\
        \mathbf{0} & \mathbf{0} & \dots & \sI_{d^{\text{priv}}} & \dots & \mathbf{0} & \mathbf{0}\\
        \dots\\
        \mathbf{0} & \mathbf{0} & \dots & \mathbf{0} & \dots & \sI_{d^{\text{priv}}} & \mathbf{0}\\
        \mathbf{0} & \mathbf{0} & \dots & \mathbf{0} & \dots & \mathbf{0} & \sI_{p}\\
    \end{bmatrix}
    \end{align}
    and the inverse map is $\mW^{-1}$ such that
    \begin{align}
    \label{eq:def_W_dag}
        \mW^{-1} = \begin{bmatrix}
        (\mC^{T})^{-1} & \mathbf{0} & \dots & \mathbf{0} & \dots & \mathbf{0} & \mathbf{0}\\
        \mathbf{0} & (\mC^{T})^{-1} & \dots & \mathbf{0} & \dots & \mathbf{0} & \mathbf{0} \\
        \mathbf{0} & \cdots & (\mC^{T})^{-1} & \mathbf{0} & \dots & \mathbf{0} & \mathbf{0}\\
        \mathbf{0} & \mathbf{0} & \dots & \sI_{d^{\text{priv}}} & \dots & \mathbf{0} & \mathbf{0}\\
        \dots\\
        \mathbf{0} & \mathbf{0} & \dots & \mathbf{0} & \dots & \sI_{d^{\text{priv}}} & \mathbf{0}\\
        \mathbf{0} & \mathbf{0} & \dots & \mathbf{0} & \dots & \mathbf{0} & \sI_{p}\\
    \end{bmatrix}
    \end{align}

    Let the vector version of the output of Algorithm~\ref{alg:cond_dp}, representing $(\Theta^{\text{pub-out}}, \Theta^{\text{priv-out}}, \omega^{\text{out}})$, be 
    \begin{align}
    \label{eq:def_theta_C_out}
        \theta^{\text{out}} = \frac{1}{T}\sum_{t=1}^{T} \theta_t
    \end{align}
    Let $\theta^*$ denote the optimal parameter vector, corresponding to $((\Theta^{\text{pub}})^{*}, (\Theta^{\text{priv}})^{*}, \omega^{*})$.
    Let $\rvb_t$ and $\rvg_t$ be the noise and gradient vector at iteration $t\in [T]$, corresponding to $(\mB_t^{\text{pub}}, \mB_t^{\text{priv}}, \rvb_t^{\text{top}})$ and $(\mG_t^{\text{pub}}, \mG_t^{\text{priv}}, \rvg_t^{\text{top}})$, respectively.

    We note that 
    \begin{align}
        \rvg_t 
        &= \left[ \left( \frac{\partial \gL^{(t)}}{\partial \theta_t^{\text{pub}, 1}}\right)^T, \dots, \left( \frac{\partial \gL^{(t)}}{\partial \theta_t^{\text{pub}, p^{\text{pub}}}} \right)^T,
        \Big( \frac{\partial \gL^{(t)}}{\partial \theta_t^{\text{priv}, 1}} \Big)^{T}, \dots,
        \Big( \frac{\partial \gL^{(t)}}{\partial \theta_t^{\text{priv}, p^{\text{priv}}}} \Big)^{T},
        \Big( \frac{\partial \gL^{(t)}}{\partial \omega_t}\Big)^{T}
        \right]^T\\
        &= \mW^T \nabla \gL(\mW \theta_t;D)
    \end{align}
    where 
    \begin{align*}
    \label{eq:def_loss_grad_details}
        &\frac{\partial \gL^{(t)}}{\partial \theta_t^{\text{pub}, j}} 
        = \frac{1}{n}\sum_{i=1}^{n} \frac{\partial l_{t, i}}{\partial \langle \mC^{T}\theta_t^{\text{pub}, j} , \rvx_i^{\text{pub}} \rangle} \cdot \mC \rvx_i^{\text{pub}}, \forall j\in [p^{\text{pub}}]\\
        &\frac{\partial \gL^{(t)}}{\partial \theta_t^{\text{priv}, j}} 
        = \frac{1}{n}\sum_{i=1}^{n} \frac{\partial l_{t, i}}{\partial \theta_t^{\text{priv}, j}}
        = \frac{1}{n}\sum_{i=1}^{n} \frac{\partial l_{t, i}}{\partial \langle \theta_t^{\text{priv}, j} , \rvx_i^{\text{priv}}\rangle}\cdot \rvx_i^{\text{priv}},
        \forall j\in [p^{\text{priv}}]\\
        &\frac{\partial \gL^{(t)}}{\partial \omega} = \frac{1}{n}\sum_{i=1}^{n}\frac{\partial l_{t,i}}{\partial \omega}
    \end{align*}
    and recall that $l_{t, i} = \nabla \gL_i(\Theta_t^{\text{pub}}\mC, \Theta_t^{\text{priv}}, \omega_t), \forall i\in [n], t\in[T]$.
    
    By convexity of $\gL$ (Assumption~\ref{ass:loss_glm}), 
    \begin{align}
        \gL(\mW \theta^{\text{out}}; D) - \gL(\theta^{*}; D)
        &= \gL(\frac{1}{T}\sum_{t=1}^{T} \mW \theta_t; D) - \gL(\theta^{*}; D)
        \leq \frac{1}{T}\sum_{t=1}^{T}\Big( \gL(\mW \theta_t; D) - \gL(\theta^{*}; D) \Big)\\
    \label{eq:convergence_skeleton}
        &\leq \frac{1}{T}\sum_{t=1}^{T} \langle \nabla \gL(\mW \theta_t; D), \mW\theta_t - \theta^{*} \rangle
    \end{align}

    In the following, we use $\E_{\gE}[\gA]$ to denote the conditional expectation of variable $\gA$ conditioned on the event $\gE$. Let $\Phi_t(\theta) = \E_{\rvb_1, \dots, \rvb_t}\left[ \left\| \mW \theta - \theta^* \right\|_{(\mW \mW^{T})^{-1}}^2 \right]$ where recall that $\|\cdot\|_A$ denotes the norm $\|\theta\|_A^2 = \langle \theta , A \theta \rangle$.
    
    By the update rule in Algorithm~\ref{alg:cond_dp},
    \begin{align}
        \Phi_{t}(\theta_{t+1})
        &= \E_{\rvb_1,\dots, \rvb_t}\left[ \|\mW \theta_{t+1} - \theta^{*} \|_{(\mW \mW^T)^{-1}}^2 \right]\\
        &\leq \E_{\rvb_1,\dots, \rvb_t}\left[ \| \mW \theta_t - \theta^{*}
        - \eta \mW \Big( \rvg_t + \rvb_t \Big)\|_{(\mW \mW^{T})^{-1}}^2\right]\\
        &= \E_{\rvb_1,\dots,\rvb_t}\left[ \| \mW \theta_t - \theta^{*} \|_{(\mW\mW^T)^{-1}}^2 \right] \\
        \nonumber
        &\quad - 2\eta \E_{\rvb_1,\dots,\rvb_t} \left[ (\mW (\rvg_t + \rvb_t))^{T} (\mW\mW^{T})^{-1}(\mW \theta_t - \theta^{*})\right]\\
        \nonumber
        &\quad + \eta^2 \E_{\rvb_1,\dots, \rvb_t}\left[\| \mW \Big( \rvg_t + \rvb_t \Big)\|_{(\mW\mW^{T})^{-1}}^2\right]\\
        &= \E_{\rvb_1,\dots,\rvb_t}\left[ \| \mW \theta_t - \theta^{*} \|_{(\mW \mW^{T})^{-1}}^2 \right] \\
        \nonumber
        &\quad - 2\eta \E_{\rvb_1,\dots,\rvb_t} \left[ (\mW \mW^T \nabla \gL(\mW \theta_t ; D))^{T} (\mW \mW^{T})^{-1} (\mW \theta_t - \theta^{*})\right]\\
        \nonumber
        &\quad + \eta^2 \E_{\rvb_1,\dots,\rvb_t}\left[\| \mW \Big( \mW^{T}\nabla \gL(\mW\theta_t; D) + \rvb_t \Big)\|_{(\mW \mW^{T})^{-1}}^2\right]\\
    \label{eq:convergence_glm_interm}
        &= \E_{\rvb_1,\dots, \rvb_{t-1}}\left[ \| \mW \theta_t - \theta^{*} \|_{(\mW \mW^{T})^{-1}}^2 \right]
        - 2\eta \E_{\rvb_1,\dots, \rvb_{t-1}}\left[ \langle \nabla \gL(\mW \theta_t; D), \mW \theta_t - \theta^{*} \rangle \right]\\
        \nonumber
        &\quad + \eta^2\E_{\rvb_1,\dots,\rvb_t}\left[\|\mW \mW^{T}\nabla \gL(\mW\theta_t; D)\|_{(\mW\mW^{T})^{-1}}^2\right] + \eta^2 \E_{\rvb_1,\dots, \rvb_t}\left[\|\mW \rvb_t  \|_{(\mW \mW^{T})^{-1}}^2 \right]\\
        \nonumber
        &= (*)
    \end{align}
    We proceed by bounding the last two variance terms. First, note that
    \begin{align}
    \label{eq:non_convex_ref_2_start}
        &\|\mW \mW^T \nabla \gL(\mW \theta_t; D)\|_{(\mW \mW^{T})^{-1}}^2\\
        \nonumber
        &= (\mW \mW^{T} \nabla \gL(\mW \theta_t; D))^{T} (\mW \mW^{T})^{-1} (\mW \mW^{T} \nabla \gL(\mW \theta_t; D))\\
        &= (\nabla \gL(\mW \theta_t; D))^{T} \mW \mW^{T} \nabla \gL(\mW \theta_t; D)\\
        &= \sum_{j=1}^{p^{\text{pub}}} \left[\frac{\partial \gL^{(t)}}{\partial \mC^T \theta_t^{\text{pub}, j}} \right]^{T} \mC^{T} \mC \left[\frac{\partial \gL^{(t)}}{\partial \mC^T \theta_t^{\text{pub}, j}} \right]
        + \sum_{j=1}^{p^{\text{priv}}}
        \left[\frac{\partial \gL^{(t)}}{\partial \theta_t^{\text{priv}, j}} \right]^{T}
        \left[\frac{\partial \gL^{(t)}}{\partial \theta_t^{\text{priv}, j}} \right]
        + \left[\frac{\partial \gL^{(t)}}{\partial \omega}\right]^{T}
        \left[\frac{\partial \gL^{(t)}}{\partial \omega}\right]\\
        &= \sum_{j=1}^{p^{\text{pub}}} \left\| \mC \frac{\partial \gL^{(t)}}{\partial \mC^T \theta_t^{\text{pub}, j}} \right\|^2
        + \sum_{j=1}^{p^{\text{priv}}} \left\| \frac{\partial \gL^{(t)}}{\partial \theta_t^{\text{priv}, j}} \right\|^2
        + \left\| \frac{\partial \gL^{(t)}}{\partial \omega} \right\|^2\\
        &= \sum_{j=1}^{p^{\text{pub}}} \left\| \mC \frac{1}{n}\sum_{i=1}^{n} \frac{\partial l_{t, i}}{\partial \langle \mC^{T}\theta_t^{\text{pub}, j} , \rvx_i^{\text{pub}} \rangle} \cdot \rvx_i^{\text{pub}} \right\|^2
        + \sum_{j=1}^{p^{\text{priv}}} \left\| \frac{1}{n} \sum_{i=1}^{n} \frac{\partial l_{t, i}}{\partial \langle \theta_t^{\text{priv}, j}, \rvx_i^{\text{priv}} \rangle} \cdot \rvx_i^{\text{priv}}\right\|^2
        + \left\| \frac{\partial \gL^{(t)}}{\partial \omega} \right\|^2
        \\
        &\leq \sum_{j=1}^{p^{\text{pub}}} \left\| \mC \frac{1}{n}\sum_{i=1}^{n} \frac{\partial l_{t, i}}{\partial \langle \mC^{T}\theta_t^{\text{pub}, j} , \rvx_i^{\text{pub}} \rangle} \cdot \rvx_i^{\text{pub}} \right\|^2 
        + \widehat{G}^{2} \cdot \update{R^2}
        + \overline{G}^{2}\\
        \nonumber
        &\quad \text{By Assumption~\ref{ass:lipschitzness}}\\
        &\leq \sum_{j=1}^{p^{\text{pub}}} \Big( \frac{1}{n}\sum_{i=1}^{n} \frac{\partial l_{t, i}}{\partial \langle \mC^{T}\theta_t^{\text{pub}, j} , \rvx_i^{\text{pub}} \rangle}\Big)^2 \cdot \max_{i\in [n]} \left\| \mC \rvx_i^{\text{pub}}\right\|^2 
         + \widehat{G}^{2} \cdot \update{R^2}
         + \overline{G}^{2}\\
        &\leq G^{2} \cdot \max_{i\in [n]}\left\| \mC \rvx_i^{\text{pub}}\right\|^2 
        + \widehat{G}^{2} \cdot \update{R^2}
        + \overline{G}^2 \label{eq:grad_norm_bound}
    \end{align}
where the last inequality is by Assumption~\ref{ass:lipschitzness}.
   
   Moreover, let $\rho^{(p)} \sim \gN(0, \sigma^2 \sI_{p})$ denote a random Gaussian vector of size $p$,
   \begin{align}
   \label{eq:non_convex_ref_1_start}
        &\E_{\rvb_1,\dots,\rvb_t}\left[\| \mW \rvb_t \|_{(\mW \mW^{T})^{-1}}^2\right]
        = \E_{\rvb_1,\dots,\rvb_t}\left[ \rvb_t^{T} \mW^{T} (\mW \mW^{T})^{-1}\mW \rvb_t \right]\\
        &= \sum_{j=1}^{p^{\text{pub}}} \E_{\rvb_1,\dots,\rvb_t}\left[ \rho_j^{(d^{\text{pub}})}\mC (\mC^{T}\mC)^{-1} \mC^T \rho_j^{(d^{\text{pub}})}  \right]
        + (p^{\text{priv}} d^{\text{priv}} + p) \sigma^{2}\\
        &= \sum_{j=1}^{p^{\text{pub}}} \sigma^{2} \text{Tr}(\mC (\mC^T \mC)^{-1}\mC^{T}) + (p^{\text{priv}} d^{\text{priv}} + p) \sigma^{2}\\
        &=  \sum_{j=1}^{p^{\text{pub}}} \sigma^{2} \text{Tr}((\mC^{T}\mC)^{-1} \mC^{T}\mC) + (p^{\text{priv}}d^{\text{priv}} + p) \sigma^{2}
        &\text{$\because$ Cyclic rule of trace}\\
    \label{eq:non_convex_ref_1_end}
        &= (p^{\text{pub}} d^{\text{pub}} + p^{\text{priv}} d^{\text{priv}} + p) \sigma^2 = d \sigma^2
   \end{align}

    Hence, following (*) in Eq.~\ref{eq:convergence_glm_interm},
    \begin{align}
    \label{eq:convergence_interm}
        \Phi_{t}(\theta_{t+1})
        &\leq \Phi_{t-1}(\theta_t) 
        - 2\eta \E_{\rvb_1,\dots, \rvb_{t-1}}\left[ \langle \nabla \gL(\mW \theta_t; D), \mW \theta_t - \theta^{*} \rangle \right]\\
        \nonumber
        &\quad + \eta^2\Big( G^{2} \cdot \max_{i\in [n]}\left\| \mC \rvx_i^{\text{pub}}\right\|^2 
         + \widehat{G}^{2} \cdot \update{R^2} + \overline{G}^2 + d\sigma^{2} \Big)
    \end{align}

    and therefore,
    \begin{align}
    \label{eq:recurrence}
        \E_{\rvb_1,\dots,\rvb_t}\left[\langle \nabla \gL(\mW \theta_t; D), \mW \theta_t - \theta^{*}\rangle \right]
        &\leq \frac{1}{2\eta}\Big( \Phi_{t-1}(\theta_t) - \Phi_{t}(\theta_{t+1})\Big)\\
        \nonumber
        &\quad + \frac{\eta}{2} \Big( G^{2} \cdot \max_{i\in [n]}\left\| \mC \rvx_i^{\text{pub}}\right\|^2 
        \update{+\widehat{G}^2\cdot R^2}
        + \overline{G}^2 + d\sigma^{2} \Big)
    \end{align}

    Summing from $t=1$ to $T$, by Eq.~\ref{eq:recurrence} and Eq.~\ref{eq:convergence_skeleton}, we have
    \begin{align}
        &\E\left[\gL(\mW \theta^{\text{out}}; D) \right] - \gL(\theta^{*}; D)
        \leq \frac{1}{T}\sum_{t=1}^{T}
        \E_{\rvb_1,\dots,\rvb_{t-1}}\left[\langle \nabla \gL(\mW\theta_t; D), \mW \theta_t - \theta^{*} \rangle\right]\\
        &\leq \frac{1}{2\eta T} \| \mW \theta_0 - \theta^{*} \|_{(\mW \mW^{T})^{-1}}^2
        + \frac{\eta}{2} \Big( G^{2} \cdot \max_{i\in [n]}\left\| \mC \rvx_i^{\text{pub}}\right\|^2 
        \update{+ \widehat{G}^2 \cdot R^2}
        + \overline{G}^2 + d\sigma^{2} \Big) \label{eq:convergence_glm_raw}
    \end{align}

    Setting the noise variance as $\sigma^2 = \widetilde{\gO}(\frac{M^2 T}{n^2 \eps^2})$, and following Eq.~\ref{eq:convergence_glm_raw}, we have
    \begin{align}
    \label{eq:convergence_glm_raw_2}
        &\E\left[\gL(\mW \theta^{\text{out}}; D) \right] - \gL(\theta^{*}; D)
        \leq \frac{1}{2\eta T} \| \mW \theta_0 - \theta^{*} \|_{(\mW \mW^{T})^{-1}}^2 
        + \frac{\eta}{2} \Big( M^2 + \frac{d M^2 T}{n^2 \eps^2}
        \Big)
    \end{align}

    \textbf{Setting the learning rate $\eta$.}
    To minimize the R.H.S. of Eq.~\ref{eq:convergence_glm_raw_2}, let
    \begin{align}
        &\frac{1}{2T \eta} \| \mW \theta_0 - \theta^{*} \|_{(\mW \mW^{T})^{-1}}^2 
        = \frac{\eta}{2} \Big( M^2 + \frac{d M^2 T}{n^2 \eps^2} \Big)\\
        &\eta = \frac{\| \mW \theta_0 - \theta^{*}\|_{(\mW \mW^{T})^{-1}}}{\sqrt{T M^2 (1 + \frac{dT}{n^2\eps^2} )}}
    \end{align}
    and hence,
    \begin{align}
        &\E\left[\gL(\mW \theta^{\text{out}}; D) \right] - \gL(\theta^{*}; D)
        \leq \| \mW \theta_0 - \theta^{*}\|_{(\mW \mW^{T})^{-1}}
        M\sqrt{\frac{1}{T} + \frac{d}{n^2\eps^2}}
    \end{align}
    
    Converting the above bound back to the matrix form of parameters yields the final theorem statement.

\end{proof}

\section{Convergence of Private \update{Models with Linear Input Transformations} Under Strongly Convex Losses}
\label{sec:appendix_convergence_strongly_convex}

The proofs in this section are a generalization of the proofs in~\cite{Rakhlin2012strongly_convex_sco}.

\begin{lemma}[Generalization of Lemma 2 in \cite{Rakhlin2012strongly_convex_sco}]
\label{lemma:strongly_convex_base_case}
    Suppose $\gL(\cdot; D)$ is $\mu$-strongly convex and $\mW$ is a full rank matrix. 
    For any $\theta$, if $\E[\|\nabla \gL(\mW \theta; D)\|^2_{(\mW \mW^{T})^{-1}}] \leq G^2$, then
    \begin{align}
        \E[\|\mW \theta - \theta^{*}\|_{(\mW \mW^T)^{-1}}^2] \leq \frac{4 \lambda_{max}(\mW \mW^{T}) G^2}{\mu^2 \cdot \lambda_{min}(\mW \mW^T)}
    \end{align}
    where $\theta^* = \argmin_{\theta}\gL(\theta; D)$ and $\lambda_{min}, \lambda_{max}$ denote the minimum and maximum eigenvalues respectively. 
\end{lemma}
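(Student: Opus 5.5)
Our approach mirrors Lemma 2 of \cite{Rakhlin2012strongly_convex_sco}: we combine strong convexity at the minimizer with Cauchy--Schwarz in a pair of dual norms, and then convert norms using the extreme eigenvalues of $\mW\mW^T$.

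\textbf{Step 1 (strong convexity at $\theta^*$).} Since $\theta^*$ minimizes $\gL(\cdot;D)$, we have $\nabla\gL(\theta^*;D)=0$. Strong monotonicity of the gradient then gives, for any point $u$ (we take $u=\mW\theta$),
\begin{align*}
\mu\|u-\theta^*\|^2 \le \langle \nabla\gL(u;D)-\nabla\gL(\theta^*;D), u-\theta^*\rangle = \langle \nabla\gL(u;D), u-\theta^*\rangle .
\end{align*}

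\textbf{Step 2 (Cauchy--Schwarz in dual norms).} Write $A=(\mW\mW^T)^{-1}$. The norms $\|\cdot\|_{A^{-1}}$ and $\|\cdot\|_A$ are dual, so the inner product is bounded by $\|\nabla\gL(u)\|_{A^{-1}}\,\|u-\theta^*\|_A$. To relate these to the Euclidean norm on the left, we use $\|v\|_A^2\le \lambda_{\max}(A)\|v\|^2 = \|v\|^2/\lambda_{\min}(\mW\mW^T)$. Dividing by $\|u-\theta^*\|$ yields a Euclidean bound of the form $\|u-\theta^*\|\le \mu^{-1}\cdot(\text{gradient norm})\cdot(\text{eigenvalue factor})$.

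\textbf{Step 3 (conversion to the $A$-norm).} We square the Euclidean bound and convert back to $\|u-\theta^*\|_A^2$, again using the eigenvalue bounds. Taking expectations, together with the hypothesis on $\E\|\nabla\gL(\mW\theta)\|^2_{(\mW\mW^T)^{-1}}$, gives a bound of the form $G^2$ times an eigenvalue ratio over $\mu^2$. The factor $4$ is harmless slack, matching the original lemma in Rakhlin et al., which bounds $\E[\|\nabla\|]^2$ via Jensen's inequality with constant $4$. We will also allow Jensen's inequality, $\E\|\cdot\|\le\sqrt{\E\|\cdot\|^2}$, if we prefer to work with unsquared norms.

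\textbf{Main obstacle.} The hard part is the norm bookkeeping. The hypothesis measures the gradient in $\|\cdot\|_{(\mW\mW^T)^{-1}}$ rather than in its dual norm, so each conversion between the two norms costs a factor of $\lambda_{\max}$ or $\lambda_{\min}$. We must track these factors carefully so that the product comes out to at most $\lambda_{\max}(\mW\mW^T)/\lambda_{\min}(\mW\mW^T)$, rather than the square of the condition number.

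We plan to choose the crude bounds as follows. For the gradient, use $\|g\|^2\le \lambda_{\max}(\mW\mW^T)\|g\|^2_{(\mW\mW^T)^{-1}}$. For the distance, use $\|v\|^2_{(\mW\mW^T)^{-1}}\le \|v\|^2/\lambda_{\min}(\mW\mW^T)$. Combined with the Euclidean strong-convexity inequality $\|u-\theta^*\|\le\|\nabla\gL(u)\|/\mu$, this gives exactly
\begin{align*}
\|u-\theta^*\|^2_{(\mW\mW^T)^{-1}} \le \frac{\lambda_{\max}(\mW\mW^T)}{\lambda_{\min}(\mW\mW^T)}\cdot\frac{\|\nabla\gL(u)\|^2_{(\mW\mW^T)^{-1}}}{\mu^2}.
\end{align*}
Taking expectations and absorbing the constant $4$ completes the proof.
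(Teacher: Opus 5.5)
Your final argument is correct and is essentially the paper's proof. A Euclidean strong-convexity inequality combined with Euclidean Cauchy--Schwarz gives $\|\mW\theta-\theta^*\|^2\le \kappa\,\|\nabla\gL(\mW\theta;D)\|^2/\mu^2$; the conversions $\|v\|^2_{(\mW\mW^T)^{-1}}\le \|v\|^2/\lambda_{\min}(\mW\mW^T)$ and $\|g\|^2\le\lambda_{\max}(\mW\mW^T)\,\|g\|^2_{(\mW\mW^T)^{-1}}$ then yield the condition-number factor, and you take expectations at the end.

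The only differences are minor. You get $\kappa=1$ via strong monotonicity of the gradient, whereas the paper combines the function-value strong-convexity bound with the first-order convexity bound to get $\tfrac{\mu}{2}$, hence $\kappa=4$. Also, the dual-norm Cauchy--Schwarz in your Step 2 is an unneeded detour: followed literally, it would produce the squared condition number, as you yourself noticed.
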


\begin{proof}
    By strong convexity, for any $\theta$,
    \begin{align}
        \gL(\mW \theta; D) - \gL(\theta^{*}; D) \geq \frac{\mu}{2}\| \mW \theta - \theta^{*}\|^2
    \end{align}
    and by convexity,
    \begin{align}
        \gL(\mW \theta; D) - \gL(\theta^{*}; D) \leq \langle \nabla \gL(\mW \theta; D), \mW \theta - \theta^{*}\rangle
    \end{align}
    Therefore,
    \begin{align}
        \frac{\mu}{2}\|\mW \theta - \theta^{*}\|^2 \leq \langle \nabla \gL(\mW \theta;D), \mW \theta - \theta^{*}\rangle
    \end{align}

    Combining this with the Cauchy-Schwarz inequality, we get,
    \begin{align}
        &\frac{\mu^2}{4} \|\mW \theta - \theta^{*}\|^2 \leq \|\nabla \gL(\mW \theta; D)\|^2\\
        &\|\mW \theta - \theta^{*}\|^2 \leq \frac{4}{\mu^2} \|\nabla \gL(\mW \theta; D)\|^2
    \end{align}
    and so
    \begin{align}
        &\|\mW \theta - \theta^{*}\|_{(\mW \mW^{T})^{-1}}^2
        \leq \lambda_{max}((\mW \mW^{T})^{-1}) \|\mW \theta - \theta^{*}\|^2\\
        &\leq \lambda_{max}((\mW \mW^{T})^{-1})\frac{4}{\mu^2}\|\nabla \gL(\mW \theta;D)\|^2\\
        &\leq \frac{4}{\mu^2}\frac{\lambda_{\max}((\mW \mW^{T})^{-1})}{\lambda_{min}((\mW \mW^{T})^{-1})} \|\nabla \gL(\mW \theta;D)\|_{(\mW \mW^{T})^{-1}}^2\\
        &= \frac{4}{\mu^2}\frac{\lambda_{max}(\mW \mW^{T})}{\lambda_{min}(\mW \mW^{T})} \|\nabla \gL(\mW \theta;D)\|_{(\mW \mW^{T})^{-1}}^2
    \end{align}
    Taking expectation of both sides yields the above lemma statement.    
\end{proof}

\begin{theorem}[Re-statement of Theorem~\ref{thm:convergence_strongly_convex}] Let $\mC$ be a conditioning matrix of full rank.
    Under Assumptions~\ref{ass:loss_strongly_convex} and~\ref{ass:lipschitzness}, if Algorithm~\ref{alg:cond_dp} is run with $\eta_t = \frac{1}{\mu  t}, \forall t$, and noise variance $\sigma^2 = \widetilde{O}(\frac{M^2 T}{\eps^2 n^2 })$, then it guarantees
    \begin{align}
        &\E\left[\gL(\Theta^{\text{pub-out}}\mC, \Theta^{\text{priv-out}}, \omega^{\text{out}} ; D)\right]
        -\gL^{*}
        \leq \frac{16 \beta^2 }{\mu^2}
        (\frac{1}{T} + \frac{d}{n^2\eps^2})
        \cdot K \cdot M^2
    \end{align}
    where 
    \begin{align}
        &K:= (\frac{\sigma_{max}(\mC)}{\sigma_{min}(\mC)})^2 \max\{\sigma_{max}^2(\mC), \sigma^{-2}_{min}(\mC)\}\\
        &M^2 := G^2\cdot \max_{i\in [n]}\|\mC\rvx_i^{\text{pub}}\|^2 + \widehat{G}^2\cdot \update{R^2}
        + \overline{G}^2
    \end{align}
    and $\sigma_{max}(\mC)$ and $\sigma_{min}(\mC)$ denote the maximum and minimum singular value of $\mC$, respectively.
\end{theorem}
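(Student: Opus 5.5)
We follow the Rakhlin et al.\ analysis of SGD with step size $\eta_t=1/(\mu t)$, carried out in the vectorized notation of Appendix~\ref{sec:appendix_convergence_glm}. There the parameters are $\theta_t$, the effective parameters are $\mW\theta_t$, the stochastic gradient is $\rvg_t+\rvb_t$ with $\rvg_t=\mW^T\nabla\gL(\mW\theta_t;D)$, and the potential is $\Phi_t=\E\|\mW\theta_t-\theta^*\|^2_{(\mW\mW^T)^{-1}}$. The goal is to show $\Phi_t\le c/t$ with $c$ proportional to $M^2(1+dT/(n^2\eps^2))/\mu^2$ times a condition factor of $\mW$. Smoothness then converts this distance bound into an excess-risk bound.

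\textbf{Step 1: one-step recursion.} Expanding the update exactly as in equation~(\ref{eq:convergence_glm_interm}) gives
\[
\Phi_{t+1}\le \Phi_t-2\eta_t\,\E\langle\nabla\gL(\mW\theta_t),\mW\theta_t-\theta^*\rangle+\eta_t^2\big(M^2+d\sigma^2\big).
\]
The variance term is controlled by the bounds~(\ref{eq:grad_norm_bound}) and~(\ref{eq:non_convex_ref_1_end}). Strong convexity gives $\langle\nabla\gL(z),z-\theta^*\rangle\ge\mu\|z-\theta^*\|^2$. To compare this with the weighted norm we use $\|v\|^2\ge\lambda_{\min}(\mW\mW^T)\|v\|^2_{(\mW\mW^T)^{-1}}$, which yields
\[
\Phi_{t+1}\le(1-2\mu\lambda_{\min}\eta_t)\Phi_t+\eta_t^2 V,\qquad V:=M^2+d\sigma^2 .
\]
With $\eta_t\propto1/(\mu t)$, rescaled by the appropriate eigenvalue factor, induction gives $\Phi_t\le 4V\kappa'/(\mu^2 t)$. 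The base case $t=1$ is handled by Lemma~\ref{lemma:strongly_convex_base_case}, applied with its gradient bound replaced by $V$. This is where the factor $\lambda_{\max}(\mW\mW^T)/\lambda_{\min}(\mW\mW^T)=(\sigma_{\max}(\mC)/\sigma_{\min}(\mC))^2$ enters, using $\lambda(\mW\mW^T)\in\{\sigma_i^2(\mC)\}\cup\{1\}$.

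\textbf{Step 2: from distance to loss.} By $\beta$-smoothness and $\nabla\gL(\theta^*)=0$,
\[
\gL(\mW\theta)-\gL^*\le\frac{\beta}{2}\|\mW\theta-\theta^*\|^2\le\frac{\beta}{2}\lambda_{\max}(\mW\mW^T)\,\|\mW\theta-\theta^*\|^2_{(\mW\mW^T)^{-1}} .
\]
Since $\mW\mW^T$ contains the identity blocks, $\lambda_{\max}(\mW\mW^T)=\max\{\sigma_{\max}^2(\mC),1\}$. The bound $\max\{\sigma_{\max}^2,\sigma_{\min}^{-2}\}$ in the statement comes from handling the $\lambda_{\min}$ factors in the same way. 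For the averaged output, convexity gives $\gL(\mW\theta^{\text{out}})\le\frac1T\sum_t\gL(\mW\theta_t)$, and summing $\Phi_t\lesssim 1/t$ only costs a $\log T$ factor. Alternatively, as in Rakhlin et al., one evaluates the last iterate, or a suffix average with a constant factor, which is absorbed into the $\widetilde O$ and the constant $16$.

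\textbf{Step 3: plug in the noise.} Substituting $\sigma^2=\widetilde O(M^2T/(n^2\eps^2))$ gives $V/T=M^2(1/T+d/(n^2\eps^2))$. Combining this with Steps 1 and 2 yields the factor $\frac{16\beta^2}{\mu^2}$. One $\beta/\mu$ comes from the distance-to-loss conversion and the other from the ratio between smoothness and strong convexity used when bounding the gradient via Lemma~\ref{lemma:strongly_convex_base_case}, where $\|\nabla\gL(z)\|\le\beta\|z-\theta^*\|$.

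The main obstacle is the bookkeeping of the eigenvalues of $\mW\mW^T$. Strong convexity and smoothness hold in the Euclidean norm, while the natural potential is in the $(\mW\mW^T)^{-1}$-norm, so every conversion costs a $\lambda_{\max}$ or $\lambda_{\min}^{-1}$ factor. The step size must also be chosen so that the contraction $1-2\mu\lambda_{\min}\eta_t$ closes the $c/t$ induction. I would first try taking $\eta_t=1/(\mu\lambda_{\min} t)$ and verify that the resulting constants collapse to $K$. If they do not, I would keep the crude bound and accept a $\max\{\sigma_{\max}^2,\sigma_{\min}^{-2}\}$-type factor, which is exactly the form of $K$ in the statement.
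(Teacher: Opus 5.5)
The proposal has a genuine gap, and it is in how you pass to the algorithm's output. Your recursion and induction otherwise match the paper's proof: the $(\mW\mW^T)^{-1}$-weighted potential, the step size $\eta_t=1/(\mu\lambda_{\min}(\mW\mW^T)\,t)$, the base case from Lemma~\ref{lemma:strongly_convex_base_case}, and $\Phi_t\le c/t$ with $c=4(M^2+d\sigma^2)\psi/\mu^2$, where $\psi=\max\{\lambda_{\max}/\lambda_{\min},\,\lambda_{\min}^{-2}\}$ for $\mW\mW^T$.

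Algorithm~\ref{alg:cond_dp} returns the full average $\theta^{\text{out}}=\frac1T\sum_{t=1}^T\theta_t$. Bounding the last iterate or a suffix average therefore says nothing about what the algorithm outputs. Your main route applies Jensen to the loss and then smoothness at each iterate. That gives $\frac{\beta}{2}\lambda_{\max}(\mW\mW^T)\,\frac{c}{T}\sum_{t=1}^T\frac1t$, an extra $\log T$ multiplying the whole bound, including the $M^2/T$ term. This factor cannot be ``absorbed into the $\widetilde O$''. The $\widetilde O$ appears only in the choice of $\sigma^2$, while the statement claims a log-free $1/T$ rate with an explicit constant.

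The missing step is to apply your own smoothness inequality directly at $\theta^{\text{out}}$, bounding the distance of the averaged iterate rather than the average of the losses. The paper does this with a second induction, following Rakhlin et al.'s Theorem~2. It writes $\mW\bar\theta_{t+1}-\theta^*=\frac{t}{t+1}(\mW\bar\theta_t-\theta^*)+\frac{1}{t+1}(\mW\theta_{t+1}-\theta^*)$, expands the square, and uses Cauchy--Schwarz on the cross term together with $\Phi_{t+1}\le c/(t+1)$. This yields $\E\|\mW\bar\theta_T-\theta^*\|^2_{(\mW\mW^T)^{-1}}\le 8c/T$.

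Minkowski's inequality in $L^2$ gives the same more directly: $\sqrt{\E\|\mW\theta^{\text{out}}-\theta^*\|^2_{(\mW\mW^T)^{-1}}}\le\frac1T\sum_{t=1}^T\sqrt{c/t}\le 2\sqrt{c/T}$. Smoothness then turns this $O(1/\sqrt T)$ distance into an $O(1/T)$ excess loss with no logarithm.

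The factor $\lambda_{\max}(\mW\mW^T)\,\psi$ then produces $K$ under the paper's identification $\lambda_{\max}=\sigma_{\max}^2(\mC)$, $\lambda_{\min}=\sigma_{\min}^2(\mC)$. As you correctly observe, that identification needs $\sigma_{\min}(\mC)\le 1\le\sigma_{\max}(\mC)$ whenever identity blocks are present.

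Finally, your account of a second factor of $\beta$ does not correspond to any step in your argument. The base case uses the gradient/noise bound $M^2+d\sigma^2$, not $\|\nabla\gL(z)\|\le\beta\|z-\theta^*\|$. Followed through, your steps, like the paper's (whose last line reads $16\beta/\mu^2$), give $\beta/\mu^2$. The $\beta^2$ in the statement appears to be a typo, not something to derive.
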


\begin{proof}
    For notational simplicity, we use similar terms and definitions as in the proof of the convex case (see Appendix~\ref{sec:appendix_convergence_glm}).
    We represent all model parameters as vectors. 
    The definitions of the flattened parameter vectors $\theta$, $\theta^{C}$, and the transformation matrices $\mW$ and $\mW^{-1}$ follow Eq.~\ref{eq:def_vector_param} and Eq.~\ref{eq:def_W_dag}.

    Following Eq.~\ref{eq:def_theta_C_out}, the vector version of the output of Algorithm~\ref{alg:cond_dp}, representing $(\Theta^{\text{pub-out}}, \Theta^{\text{priv-out}}, \omega^{\text{out}})$, is 
    \begin{align}
        \theta^{\text{out}} = \frac{1}{T}\sum_{t=1}^{T} \theta_t
    \end{align}
    And let the vector version of the optimum be $\theta^*$, representing $ ((\Theta^{\text{pub}})^{*}, (\Theta^{\text{priv}})^{*}, \omega^{*})$.
    and the noise vector at iteration $t\in [T]$ is $\rvb_t$.

    We use $\E_{\gE}[\gA]$ to denote the conditional expectation of variable $\gA$ conditioned on the event $\gE$. The subscript $\gE$ is dropped when the context is clear.
    Let $\Phi_t(\theta) = \E_{\rvb_1, \dots, \rvb_t}\left[ \left\| \mW \theta - \theta^* \right\|_{(\mW \mW^{T})^{-1}}^2 \right]$.
    By the update rule of Algorithm~\ref{alg:cond_dp} and following Eq.~\ref{eq:convergence_interm},
    
    \begin{align}
        \Phi_t(\theta_{t+1})
        &= \E_{\rvb_1,\dots,\rvb_t}\left[
        \| \mW\theta_{t+1} - \theta^{*} \|^2_{(\mW \mW^{T})^{-1}}
        \right]\\
        &\leq \E_{\rvb_1, \dots, \rvb_{t-1}}\left[ \| \mW \theta_t - \theta^{*}\|^2_{(\mW \mW^{T})^{-1}} \right]
        - 2\eta_t \E_{\rvb_1,\dots, \rvb_{t-1}}\left[ \langle \nabla \gL(\mW \theta_t; D), \mW \theta_t - \theta^{*} \rangle \right]\\
        \nonumber
        &\quad + \eta_t^2 \Big( \underbrace{ G^{2} \cdot \max_{i\in [n]}\left\| \mC \rvx_i^{\text{pub}}\right\|^2 
         + \widehat{G}^{2} \cdot \update{R^2}
         + \overline{G}^2 }_{:= M^2} + d\sigma^{2}
          \Big)\\
        &= \Phi_{t-1}(\theta_{t})
        - 2\eta_t \E_{\rvb_1,\dots,\rvb_{t-1}}[\langle \nabla \gL(\mW \theta_t; D), \mW \theta_t - \theta^{*}\rangle]
        + \eta_t^2 (M^2 + d\sigma^2)\\
    \nonumber
        &= (*)
    \end{align}

    By $\mu$-strong convexity (Assumption~\ref{ass:loss_strongly_convex}), 
    \begin{align}
        \langle \nabla \gL(\mW \theta_t; D), \mW \theta_t - \theta^{*}\rangle
        \geq \gL(\mW \theta_t; D) - \gL(\theta^{*}; D)
        + \frac{\mu}{2}\|\mW \theta_t - \theta^{*}\|^2
    \end{align}
    and
    \begin{align}
        \gL(\mW\theta_t; D) -\gL(\theta^{*}; D) \geq \frac{\mu}{2}\|\mW\theta_t - \theta^{*}\|^2
    \end{align}
    Let $\lambda_{min}(\mA)$ and $\lambda_{max}(\mA)$ denote the minimum and maximum eigenvalue of a positive definite matrix $\mA$, respectively. The above implies
    \begin{align}
        (*)
        &\leq \Phi_{t-1}(\theta_t) - 2 \eta_t \E_{\rvb_1,\dots,\rvb_{t-1}}\left[ \gL(\mW \theta_t; D) - \gL(\theta^{*}; D)
        + \frac{\mu}{2}\|\mW\theta_t - \theta^{*}\|^2\right]
        + \eta_t^2 (M^2 + d\sigma^2) \\
        & \leq \Phi_{t-1}(\theta_t) - 2\eta_t \E_{\rvb_1,\dots,\rvb_{t-1}}\left[ \frac{\mu}{2}\|\mW \theta_t - \theta^{*}\|^2 + \frac{\mu}{2}\|\mW\theta_t - \theta^{*}\|^2 \right] + \eta_t^2 (M^2 + d\sigma^2)\\
        &=  \E_{\rvb_1,\dots,\rvb_{t-1}}[\| \mW\theta_t - \theta^{*} \|_{(\mW\mW^T)^{-1}}^2]
        - 2\mu \eta_t \E_{\rvb_1,\dots,\rvb_{t-1}}[\|\mW \theta_t - \theta^{*}\|^2]
        + \eta_t^2(M^2 + d\sigma^2)
        \\
        &\leq (1- \frac{2 \eta_t \mu}{\lambda_{max}((\mW\mW^{T})^{-1})})
        \E_{\rvb_1,\dots,\rvb_{t-1}}\left[\|\mW\theta_t - \theta^{*}\|_{(\mW\mW^{T})^{-1}}^2\right]
        + \eta_t^2 (M^2 + d\sigma^2)\\
        &= (1- 2 \eta_t \mu \cdot \lambda_{min}((\mW\mW^{T}))
        \E_{\rvb_1,\dots,\rvb_{t-1}}\left[\|\mW\theta_t - \theta^{*}\|_{(\mW\mW^{T})^{-1}}^2\right]
        + \eta_t^2 (M^2 + d\sigma^2)
    \end{align}
    
    Setting $\eta_t = \frac{1}{\mu t \cdot \lambda_{min}(\mW \mW^T)}$, we get
    \begin{equation}
        \Phi_t(\theta_{t+1})
        \leq (1-\frac{2}{t})\Phi_{t-1}(\theta_{t}) + \frac{1}{\mu^2 \lambda^2_{min}(\mW\mW^T) t^2} (M^2 + d\sigma^2)\label{eq:induction_step}
    \end{equation}

    Note that by Lemma~\ref{lemma:strongly_convex_base_case},
    \begin{align}
        \Phi_{0}(\theta_1) 
        &= \|\mW \theta_1 - \theta^{*}\|^2_{(\mW \mW^{T})^{-1}}
        \leq \frac{4}{\mu^2}\cdot\frac{\lambda_{max}(\mW \mW^{T})}{\lambda_{min}(\mW \mW^{T})} (M^2 +  d\sigma^2)\\
    \label{eq:base_case}
        &\leq \frac{4}{\mu^2}(M^2 + d \sigma^{2}) \cdot \underbrace{ \max\{\frac{\lambda_{max}(\mW \mW^{T})}{\lambda_{min}(\mW \mW^T)}, \frac{1}{\lambda_{min}^2(\mW \mW^{T})}\} }_{:=\psi}
    \end{align}

    For simplicity, let $\psi := \max\{\frac{\lambda_{max}(\mW \mW^{T})}{\lambda_{min}(\mW \mW^T)}, \frac{1}{\lambda_{min}^2(\mW \mW^{T})} \}$.
    Let $\sigma_{max}(\mC)$ and $\sigma_{min}(\mC)$ denote the maximum and minimum singular value of the
    matrix $\mC$.
    Note that
    \begin{align}
        &\lambda_{max}(\mW \mW^{T}) = \sigma_{max}^2(\mC), \quad \lambda_{min}(\mW \mW^{T}) = \sigma_{min}^2(\mC) \\
        &\psi = \max\{\frac{\sigma_{max}^2(\mC)}{\sigma^2_{min}(\mC)}, \frac{1}{\sigma^4_{min}(\mC)}\} = \sigma^{-2}_{min}(\mC) \cdot \max\{\sigma_{max}^2(\mC), \sigma^{-2}_{min}(\mC)\}
    \end{align}

    Using Eq.~\ref{eq:base_case} as the base case, and by induction\footnote{
    The induction here follows a similar structure as the one in the proof of Lemma 1 (Appendix B.2) in~\cite{Rakhlin2012strongly_convex_sco}. The main difference lies in the constant.
    Indeed, letting $M' = \frac{4 (M^2 + d\sigma^2)\psi}{\mu^2}$, we have
\begin{align*}
\Phi_t(\theta_{t+1})
&\leq (1-\frac{2}{t})\Phi_{t-1}(\theta_{t}) + \frac{M'}{4t^2} & \text{by Eq.~\ref{eq:induction_step}}\\
&\leq (1-\frac{2}{t})\frac{M'}{t} + \frac{M'}{4t^2} & \text{by induction hypothesis}\\
\end{align*}
To complete the induction, it suffices to prove that $(1 - \frac{2}{t})\frac{1}{t} + \frac{1}{4t^2} \leq \frac{1}{t+1}$, which follows from simple algebra.}
, we can show that
    \begin{align}
    \label{eq:distance}
        \Phi_{t-1}(\theta_{t})
        \leq \frac{4 (M^2 + d\sigma^2)\psi}{\mu^2 t}
    \end{align}

Finally, to analyze average iterate convergence,
    let $\overline{\theta}_t = (\theta_1+\dots,+\theta_t)/t$,
    \begin{align}
        &\E_{\rvb_1,\dots,\rvb_t}\left[\| \mW \overline{\theta}_{t+1} - \theta^{*}\|^2_{(\mW\mW^T)^{-1}}\right]\\
        \nonumber
        &= \E_{\rvb_1,\dots,\rvb_t}\left[\| \frac{t}{t+1}\mW \bar{\theta}_{t} 
        + \frac{1}{t+1} \mW \theta_{t+1}
        - \theta^{*}\|^2_{(\mW\mW^T)^{-1}}
        \right]\\
        &= \E_{\rvb_1,\dots,\rvb_t}\left[\| \frac{t}{t+1}(\mW\overline{\theta}_t - \theta^{*})
        + \frac{1}{t+1}(\mW\theta_{t+1} - \theta^{*})\|^2\right]
        \\
        &= (\frac{t}{t+1})^2\E_{\rvb_1,\dots,\rvb_{t-1}}\left[ \| \mW \overline{\theta}_t - \theta^{*} \|_{(\mW \mW^{T})^{-1}}^2  \right]\\
    \nonumber
        &\quad + \frac{2t}{(t+1)^2}\E_{\rvb_1,\dots,\rvb_t}\left[\langle \mW\overline{\theta}_t - \theta^{*}, \mW\theta_{t+1} - \theta^{*} \rangle_{(\mW\mW^{T})^{-1}} \right]\\
        \nonumber
        &\quad + \frac{1}{(t+1)^2}\E_{\rvb_1,\dots,\rvb_t}\left[\| \mW\theta_{t+1} - \theta^{*} \|_{(\mW \mW^{T})^{-1}}^2\right]\\
        &\stackrel{\text{(a)}}{\leq} (\frac{t}{t+1})^2\E_{\rvb_1,\dots,\rvb_{t-1}}\left[ \| \mW \overline{\theta}_t - \theta^{*} \|_{(\mW \mW^{T})^{-1}}^2  \right]\\
        \nonumber
        &\quad + \frac{2}{t+1} \sqrt{\E_{\rvb_1,\dots,\rvb_{t-1}}\left[\|\mW\overline{\theta}_t - \theta^{*} \|^2_{(\mW\mW^{T})^{-1}}\right]}
        \sqrt{\E_{\rvb_1,\dots,\rvb_t}\left[\|\mW\theta_{t+1} - \theta^{*}\|^2_{(\mW\mW^{T})^{-1}}\right]}\\
        \nonumber
        &\quad + \frac{1}{(t+1)^2}\E_{\rvb_1,\dots,\rvb_{t}}\left[\| \mW\theta_{t+1} - \theta^{*} \|_{(\mW \mW^T)^{-1}}^2\right]\\
    \label{eq:induction_2}
    &\stackrel{\text{(b)}}{\leq}
        (\frac{t}{t+1})^2\E_{\rvb_1,\dots,\rvb_{t-1}}\left[ \| \mW \overline{\theta}_t - \theta^{*} \|_{(\mW \mW^{T})^{-1}}^2  \right]\\
        \nonumber
        &\quad + \frac{4 \sqrt{(M^2 +d\sigma^2)\psi}}{\mu (t+1)^{3/2}}
        \sqrt
        {\E_{\rvb_1,\dots,\rvb_{t-1}}\left[\|\mW \overline{\theta}_t - \theta^{*}\|^2_{(\mW \mW^{T})^{-1}} \right]}
        + \frac{4(M^2+d\sigma^2) \psi}{\mu^2 (t+1)^3}
    \end{align}
    where $(a)$ the last inequality uses the fact that $\E[XY] \leq \sqrt{\E[X^2]}\sqrt{\E[Y^2]]}$ and $(b)$ is by Eq.~\ref{eq:distance}.

    Using Eq.~\ref{eq:base_case} again as the base case,
    and by induction
    \footnote{
    The induction here follows a similar structure as the one in the proof of Theorem 2 (Appendix B.3) in~\cite{Rakhlin2012strongly_convex_sco}. The main difference again lies in the constant. Let $\overline{\Phi}_t(\theta_{t+1}) = \E_{\rvb_1,\dots,\rvb_{t}}[\|\mW \overline{\theta}_{t+1} - \theta^{*} \|^2]$.
    \begin{align*}
        \overline{\Phi}_t(\overline{\theta}_{t+1}) 
        &\leq (\frac{t}{t+1})^2 \Phi_{t-1}(\overline{\theta}_t) + \frac{2\sqrt{M'}}{(t+1)^{3/2}} \sqrt{\Phi_{t-1}(\overline{\theta}_t)} + \frac{M'}{(t+1)^{3}}
        &\text{by Eq.~\ref{eq:induction_2}}\\
        &= \frac{8t M'}{(t+1)^2}
        + \frac{4\sqrt{2} M'}{(t+1)^{3/2}t^{1/2}}
        + \frac{M'}{(t+1)^{3}}
        &\text{by induction hypothesis}
    \end{align*}
    With some algebraic manipulation, it can be shown that $\frac{t}{t+1} + \frac{\sqrt{2}/2}{\sqrt{t(t+1)} } + \frac{1}{8(t+1)^2}  \leq 1$ for $t\geq 2$. This implies that $\frac{8t}{(t+1)^2} + \frac{4\sqrt{2}}{(t+1)^{3/2}\sqrt{t}} + \frac{1}{(t+1)^3}\leq \frac{8}{t+1}$, which concludes the induction.
    }
    ,
    \begin{align}
        \E_{\rvb_1,\dots,\rvb_{T-1}}\left[\|\mW\overline{\theta}_T - \theta^{*}\|^2_{(\mW\mW^{T})^{-1}}\right]
        &\leq \frac{32 (M^2 + d\sigma^2) \psi }{\mu^2 T}
    \end{align}
    Recall that $\mW \theta^{\text{out}} = \mW \overline{\theta}_T $. By smoothness (Assumption~\ref{ass:loss_strongly_convex}),
    \begin{align}
        &\E\left[\gL(\mW \theta^{\text{out}};D)\right]
        -\gL^{*}
        \leq \frac{\beta}{2} \E\left[\| \mW \theta^{\text{out}} - \theta^{*} \|^2\right]\\
        &\leq \frac{\beta}{2}\frac{1}{\lambda_{min}((\mW \mW^{T})^{-1})}\E\left[\|\mW \theta^{\text{out}} - \theta^{*}\|_{(\mW \mW^{T})^{-1
        }}^2\right]\\
        &\leq \frac{\beta}{2} \psi \lambda_{max}(\mW \mW^{T}) \frac{32 (M^2 + d \sigma^2)}{\mu^2 T} 
    \label{eq:convergence_strongly_convex_raw}
        = \frac{\beta}{2} \psi \sigma^2_{max}(\mC) \frac{32 (M^2 + d \sigma^2)}{\mu^2 T} 
    \end{align}

    Setting the noise variance as $\sigma^2 = \widetilde{\gO}(\frac{M^2 T}{n^2 \eps^2})$, and following Eq.~\ref{eq:convergence_strongly_convex_raw},
    \begin{align}
        &\E\left[\gL(\mW \theta^{\text{out}};D)\right]
        -\gL^{*}
        \leq (\frac{\sigma_{max}(\mC)}{\sigma_{min}(\mC)})^2 \max\{\sigma_{max}^2(\mC), \sigma^{-2}_{min}(\mC)\} \frac{16 \beta}{\mu^2} M^2 (\frac{1}{T} + \frac{d}{n^2 \eps^2})
    \end{align}

    Converting the above bound back to the matrix form of parameters to obtain the final theorem statement.
    
\end{proof}

\section{Convergence of Private \update{Models with Linear Input Transformations} Under Non-Convex Losses}
\label{sec:appendix_convergence_non_convex}

\begin{theorem}[Re-statement of Theorem~\ref{thm:convergence_non_convex}]
    Let $\mC$ be a conditioning matrix of full rank.
    Under Assumption~\ref{ass:lipschitzness} and~\ref{ass:smoothness_semi_norm}, if Algorithm~\ref{alg:cond_dp} is run with $\eta_t = \eta = \frac{1}{\sqrt{T} \beta}, \forall t$, and noise variance $\sigma^2 = \widetilde{O}(\frac{M^2 T}{\eps^2 n^2 })$, then it guarantees
    \begin{align}
        &\mathbb{E}[\|\nabla \gL(\Theta^{\text{pub-out}}\mC, \Theta^{\text{priv-out}}, \omega^{\text{out}} ; D)\|^2]\\
        \nonumber
        &\leq \frac{\beta_0}{\sqrt{T}} \Big(\gL(\Theta_0^{\text{pub}}\mC, \Theta_0^{\text{priv}}, \omega_0; D) - \gL^{*} \Big)
        + \Big( G^2 \max_{i\in [n]}\| \mathbf{C}\mathbf{x}_i^{\text{pub}}\|^2 + \widehat{G}^2 \update{R^2} + \overline{G}^2\Big)\cdot \Big(\frac{\beta_0}{2\sqrt{T}} + \frac{d}{2n^2\epsilon^2 \sqrt{T}} \Big)
    \end{align}
    
\end{theorem}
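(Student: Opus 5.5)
We follow the standard descent-lemma argument for nonconvex SGD, carried out in the geometry induced by $\mW$. We reuse the vectorization from the proof of Theorem~\ref{thm:appendix_convergence_glm}. The iterates satisfy $\theta_{t+1}=\theta_t-\eta(\rvg_t+\rvb_t)$ with $\rvg_t=\mW^T\nabla\gL(\mW\theta_t;D)$. Consequently the effective parameter $\phi_t:=\mW\theta_t$ evolves as
\begin{align*}
\phi_{t+1}=\phi_t-\eta\,\mW\mW^T\nabla\gL(\phi_t;D)-\eta\,\mW\rvb_t .
\end{align*}
This is preconditioned noisy gradient descent, with preconditioner $\mW\mW^T$ and noise $\mW\rvb_t$. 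Assumption~\ref{ass:smoothness_semi_norm} is exactly the smoothness notion adapted to this geometry: it measures gradients in $\|\cdot\|_{\mW\mW^T}$ and displacements in the dual norm $\|\cdot\|_{(\mW\mW^T)^{-1}}$.

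\textbf{Step 1 (descent lemma).} From Assumption~\ref{ass:smoothness_semi_norm} we derive the quadratic upper bound
\begin{align*}
\gL(\phi')\le \gL(\phi)+\langle\nabla\gL(\phi),\phi'-\phi\rangle+\tfrac{\beta_0}{2}\|\phi'-\phi\|^2_{(\mW\mW^T)^{-1}}.
\end{align*}
This follows by integrating along the segment and applying the Cauchy--Schwarz inequality in the dual pair of norms $\|\cdot\|_{\mW\mW^T}$ and $\|\cdot\|_{(\mW\mW^T)^{-1}}$. We apply it with $\phi'=\phi_{t+1}$, $\phi=\phi_t$ and take expectation over $\rvb_t$. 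The cross term with the noise vanishes. The linear term gives $-\eta\|\nabla\gL(\phi_t)\|^2_{\mW\mW^T}$.

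\textbf{Step 2 (second-order term).} The quadratic term splits into a signal part and a noise part:
\begin{align*}
\tfrac{\beta_0\eta^2}{2}\Big(\|\mW\mW^T\nabla\gL(\phi_t)\|^2_{(\mW\mW^T)^{-1}}+\E\|\mW\rvb_t\|^2_{(\mW\mW^T)^{-1}}\Big).
\end{align*}
The first piece is at most $M^2$ by the computation ending in \eqref{eq:grad_norm_bound}. The second equals $d\sigma^2$ by \eqref{eq:non_convex_ref_1_start}--\eqref{eq:non_convex_ref_1_end}. Both bounds are already established in the convex proof.

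\textbf{Step 3 (telescope and tune).} We sum the one-step bounds over $t$, rearrange, divide by $\eta T$, and use $\gL(\phi_T)\ge\gL^*$. This yields
\begin{align*}
\frac1T\sum_t\E\|\nabla\gL(\phi_t)\|^2_{\mW\mW^T}\le\frac{\gL(\phi_0)-\gL^*}{\eta T}+\frac{\beta_0\eta}{2}(M^2+d\sigma^2).
\end{align*}
We then substitute $\eta=1/(\sqrt T\beta_0)$ and $\sigma^2=\widetilde O(M^2T/(n^2\eps^2))$, absorbing constants and log factors as the statement does. Finally, we pass from the average over iterates to the output point: either we interpret $\Theta^{\text{out}}$ as a uniformly random iterate, or we note that the bound holds for $\min_t$. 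We also convert $\|\cdot\|_{\mW\mW^T}$ to the Euclidean norm. For the Euclidean conversion I would either take the reported gradient to be the gradient with respect to the conditioned parameters, which is $\mW^T\nabla\gL$ and whose squared norm is exactly $\|\nabla\gL\|^2_{\mW\mW^T}$, or accept an extra spectral factor of $\mC$. I expect the former reading matches the statement.

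\textbf{Main obstacle.} The delicate points are the final bookkeeping. First, the averaged-iterate output is not directly controlled in the nonconvex case, so the guarantee must be read for a random or best iterate. Second, the stated $\sqrt T$ scaling of the $d/(n^2\eps^2)$ term arises only after plugging in $\sigma^2$ with the $T$ factor absorbed. I would follow the paper's convention in both places rather than attempt a sharper statement.
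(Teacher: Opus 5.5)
Your skeleton matches the paper's proof. Both use a descent lemma in the $\mW$-geometry, bound the signal part of the second-order term by $M^2$ via \eqref{eq:grad_norm_bound}, compute the noise part as $d\sigma^2$, telescope, and read the output as a best iterate. (The paper simply asserts some $\widehat t$ with $\theta^{\text{out}}=\theta_{\widehat t}$, so neither argument controls the averaged output named in the statement.)

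Your Step 1 is more careful than the paper's. Assumption~\ref{ass:smoothness_semi_norm} yields a descent inequality whose first-order term is the Euclidean pairing. So what you control is $\|\nabla\gL(\mW\theta_t;D)\|^2_{\mW\mW^T}=\|\mW^T\nabla\gL(\mW\theta_t;D)\|^2$. The paper reaches the Euclidean $\|\nabla\gL(\mW\theta_t;D)\|^2$ only by writing that first-order term in the $(\mW\mW^T)^{-1}$ inner product, which the assumption does not justify. Your reading of the left-hand side as the gradient with respect to the trained parameters is therefore the provable one.

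The genuine gap is the last substitution in Step 3. Deferring to ``the paper's convention'' does not close it, because the paper's proof makes the same unjustified jump. With $\eta=1/(\sqrt T\beta_0)$, the three terms of your telescoped bound are:
\begin{itemize}
\item $\frac{\beta_0}{\sqrt T}(\gL(\mW\theta_0;D)-\gL^*)$, as stated;
\item $\frac{\beta_0\eta}{2}M^2=\frac{M^2}{2\sqrt T}$, with no factor $\beta_0$, so it is dominated by the stated term only when $\beta_0\ge 1$;
\item $\frac{\beta_0\eta}{2}d\sigma^2=\frac{d\sigma^2}{2\sqrt T}$.
\end{itemize}
Theorem~\ref{thm:privacy} requires $\sigma^2$ of order $M^2T/(n^2\eps^2)$ up to logarithms. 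The last term is therefore of order $dM^2\sqrt T/(n^2\eps^2)$. That is a factor $T$ larger than the claimed $dM^2/(2n^2\eps^2\sqrt T)$, and it increases with $T$. A polynomial factor of $T$ cannot be absorbed into $\widetilde O$, and $\sigma^2$ cannot be reduced without losing privacy. What your argument actually proves is
\[
\frac{\beta_0}{\sqrt T}\big(\gL(\mW\theta_0;D)-\gL^*\big)+\frac{M^2}{2\sqrt T}+\widetilde O\Big(\frac{dM^2\sqrt T}{n^2\eps^2}\Big),
\]
not the displayed right-hand side.

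The sensible repair is to tune $\eta$ as in the convex case. Let $\Delta=\gL(\mW\theta_0;D)-\gL^*$ and balance $\Delta/(\eta T)$ against $\frac{\beta_0\eta}{2}M^2(1+\frac{dT}{n^2\eps^2})$. This gives $\sqrt{2\beta_0\Delta M^2(\frac1T+\frac{d}{n^2\eps^2})}$ up to logarithmic factors. That is a valid bound, but a different one from the statement: its privacy term does not decay with $T$.
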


\begin{proof}
Following the notations we defined in Appendix~\ref{sec:appendix_convergence_glm},
recall that the loss function is $L(\Theta^{\text{pub-out}}, \Theta^{\text{priv-out}}, \omega; \gD)$, where $\Theta^{\text{pub-out}}, \Theta^{\text{priv-out}}, \omega$ denote model parameter corresponding to the public features, the private features in the bottom embedding layer, and model parameters in the upper prediction layer, respectively. $D$ is the training dataset. Also recall that the loss $\gL$ can be decomposed into $\gL_i$'s, where each $\gL_i$ depends on on data sample $\mathbf{d}_i \in D$. See Section 3 for details.
$\theta_t$ and $\theta^{\text{out}}$ is the vector version of all parameters at iteration $t$ and in the output, respectively.

Consider a constant learning rate $\eta$, by the smoothness assumption in Assumption~\ref{ass:smoothness_semi_norm}, we have
\begin{align}
    \gL(\mW \theta_{t+1}; D)
    &\leq \gL(\mW \theta_{t}; D) + \langle \nabla \gL(\mW \theta_t; D), \mW \theta_{t+1} - \mW \theta_{t} \rangle_{(\mW \mW^{T})^{-1}} + \frac{\beta_0}{2} \| \mW\theta_{t+1} - \mW\theta_{t}\|_{(\mW\mW^{T})^{-1}}^2\\
    &= \gL(\mW \theta_{t}; D)
    -\eta \langle \nabla \gL(\mW \theta_t; D), \mW (g_t + b_t) \rangle_{(\mW \mW^{T})^{-1}} + \frac{\beta_0}{2} \eta^2 \| \mW(g_t + b_t )\|_{(\mW \mW^{T})^{-1}}^2\\
    &= \gL(\mW \theta_{t}; D)
    - \eta (\nabla \gL(\mW\theta_t;D))^T (\mW \mW^{T})^{-1} \mW(\rvg_t + \rvb_t)
    + \frac{\beta_0}{2}\eta^{2} \|\mW(\rvg_t + \rvb_t)\|^2_{(\mW \mW^{T})^{-1}}
\end{align}
Taking expectation, the above is
\begin{align}
     \E[\gL(\mW \theta_{t+1}; D)]
    &\leq \E[\gL(\mW \theta_{t}; D)]
    -\eta (\nabla \gL(\mW\theta_t;D))^T (\mW \mW^{T})^{-1} \mW \rvg_t\\
    \nonumber
    &\quad + \frac{\beta_0}{2}\eta^{2} \mathbb{E}[\| \mW^{T} \rvg_t \|_{(\mW \mW^{T})^{-1}}^2] + \frac{\beta_0}{2}\eta^2 \mathbb{E}[\| \mW\mathbf{b}_t\|_{(\mW \mW^{T})^{-1}}^2] \\
    &= \mathbb{E}[\gL(\mW \theta_{t}; D)]
    -\eta \mathbb{E}[ (\nabla \gL(\mW\theta_t;D))^T (\mW \mW^{T})^{-1} \mW \mW^{T}\nabla \gL(\mW\theta_t; D) ]\\
    \nonumber
    &\quad + \frac{\beta_0}{2} \eta^2 \E[\| \mW^T \mW \nabla \gL(\mW\theta_t; D)\|^2_{(\mW \mW^T)^{-1}}]
    + \frac{\beta_0}{2}\eta^2 \E[\| \mW \mathbf{b}_t\|^2_{(\mW \mW^T)^{-1}}]
\end{align}

By Eq.(\ref{eq:non_convex_ref_1_start}) - Eq.(\ref{eq:non_convex_ref_1_end}),
$\E[\| \mW \mathbf{b}_t\|^2_{(\mW \mW^T)^{-1}}] = d\sigma^2$.

By Eq.(\ref{eq:non_convex_ref_2_start}) - Eq.(\ref{eq:grad_norm_bound}),
$\E[\| \mW^T \mW \nabla \gL(\mW\theta_t; D)\|^2_{(\mW \mW^T)^{-1}}] \leq G^2 \max_{i\in [n]}\| \mathbf{C}\mathbf{x}_i^{\text{pub}}\|^2 + \widehat{G}^2 \update{R^2} + \overline{G}^2:= M^2$.

Hence,
\begin{align}
    \E[\gL(\mW\theta_{t+1}; D)]
    &\leq \E[\gL(\mW\theta_t; D)] - \eta \mathbb{E}[\|\nabla \gL(\mW\theta_t; D)\|^2]
    + \frac{\beta_0 \eta^2}{2} M^2 + \frac{\beta_0 \eta^2}{2} d \sigma^2\\
    \mathbb{E}[\|\nabla \gL(\mW\theta_t; D)\|^2]
    &\leq \frac{1}{\eta}\Big( \mathbb{E}[\gL(\mW \theta_{t}; D)] - \E[\gL(\mW \theta_{t+1}; D)]  \Big)
    + \frac{\beta_0 \eta}{2}(M^2 + d\sigma^2)
\end{align}
and so
\begin{align}
    \frac{1}{T} \sum_{t=0}^{T-1} \mathbb{E}[\| \nabla \gL(\mW\theta_t; D)\|^2]
    &\leq \frac{1}{T \eta}\Big(\gL(\mW\theta_0; D) - \mathbb{E}[\gL(\mW \theta_{T};D)] \Big)
    + \frac{\beta_0 \eta}{2}(M^2 + d\sigma^2)\\
    &\leq \frac{1}{T \eta}\Big(\gL(\mW\theta_0; D) - \gL^{*} \Big)
    + \frac{\beta_0 \eta}{2}(M^2 + d\sigma^2)
\end{align}

Setting the noise variance as $\sigma^2 = \widetilde{O}(\frac{M^2 T}{n^2 \eps^2})$, it follows that
\begin{align}
    \frac{1}{T} \sum_{t=0}^{T-1} \mathbb{E}[\| \nabla \gL(\mW\theta_t; D)\|^2]
    &\leq \frac{1}{T \eta}\Big(\gL(\mW\theta_0; D) - \gL(\theta^{*};D) \Big)
    + \frac{\beta_0 \eta}{2}(M^2 + d \frac{M^2 T}{n^2 \eps^2})
\end{align}
Choosing $\eta = O(\frac{1}{\sqrt{T}\beta})$, there exists $\widehat{t} \in \{0, 1,\dots, T-1\}$ such that
\begin{align}
    &\mathbb{E}[\|\nabla \gL(\mW\theta^{\text{out}}; D)\|^2]
    = \mathbb{E}[\|\nabla \gL(\mW\theta_{\widehat{t}}; D)\|]^2]\\
    \nonumber
    &\leq \frac{\beta_0}{\sqrt{T}} \Big(\gL(\mW\theta_0; D) - \gL^{*} \Big)
    + M^2\Big(\frac{\beta_0}{2\sqrt{T}} + \frac{d}{2n^2\epsilon^2 \sqrt{T}} \Big)
\end{align}
Converting the above bound back to the matrix form of parameters yields the final theorem statement.
\end{proof}

\section{Private Linear Regression}
\label{sec:appendix_priv_lin_reg}

\begin{lemma}[Block Matrix Inversion]
\label{lemma:block_matrix_inversion}
    Given a matrix $\mX = \begin{bmatrix}
        \mA & \mB\\
        \mC & \mD
    \end{bmatrix}$, 
    if $\mA^{-1}$ or $\mD^{-1}$ exists, then
    \begin{align}
        \mX^{-1} = \begin{bmatrix}
            \mA^{-1} + \mA^{-1} \mB \mS^{-1} \mC \mA^{-1}
            & - \mA^{-1} \mB \mS^{-1}\\
            - \mS^{-1} \mC \mA^{-1} & \mS^{-1}
        \end{bmatrix}
    \end{align}
    where $\mS = \mD - \mC \mA^{-1}\mB$.
\end{lemma}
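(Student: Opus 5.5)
The plan is to verify the formula directly by block multiplication, after making the hypotheses precise. As written, the displayed formula uses $\mA^{-1}$ and $\mS^{-1}$ with $\mS = \mD - \mC\mA^{-1}\mB$. So I will read the lemma as asserting that \emph{if $\mA$ is invertible and the Schur complement $\mS$ is invertible}, then $\mX$ is invertible with the stated inverse. The alternative hypothesis ``$\mD^{-1}$ exists'' leads to the symmetric formula built from the other Schur complement $\mA - \mB\mD^{-1}\mC$. That case would be handled by the same argument with the roles of the blocks swapped, and is not needed for the displayed expression.

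The cleanest route is the block $LDU$ factorization. With $\mA$ invertible, I would write
\begin{align*}
\mX = \begin{bmatrix} \sI & \mathbf{0} \\ \mC\mA^{-1} & \sI \end{bmatrix}
\begin{bmatrix} \mA & \mathbf{0} \\ \mathbf{0} & \mS \end{bmatrix}
\begin{bmatrix} \sI & \mA^{-1}\mB \\ \mathbf{0} & \sI \end{bmatrix},
\end{align*}
which is checked by multiplying out the four blocks: the top-left is $\mA$, the top-right is $\mB$, the bottom-left is $\mC$, and the bottom-right is $\mC\mA^{-1}\mB + \mS = \mD$. The two triangular factors are invertible, and their inverses are obtained simply by negating the off-diagonal blocks. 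The middle block-diagonal factor is invertible exactly when $\mS$ is (given $\mA$ invertible). Hence $\mX$ is invertible, and
\begin{align*}
\mX^{-1} = \begin{bmatrix} \sI & -\mA^{-1}\mB \\ \mathbf{0} & \sI \end{bmatrix}
\begin{bmatrix} \mA^{-1} & \mathbf{0} \\ \mathbf{0} & \mS^{-1} \end{bmatrix}
\begin{bmatrix} \sI & \mathbf{0} \\ -\mC\mA^{-1} & \sI \end{bmatrix}.
\end{align*}
Multiplying these three factors out reproduces the four claimed blocks:
\begin{itemize}
\item top-left: $\mA^{-1} + \mA^{-1}\mB\mS^{-1}\mC\mA^{-1}$;
\item top-right: $-\mA^{-1}\mB\mS^{-1}$;
\item bottom-left: $-\mS^{-1}\mC\mA^{-1}$;
\item bottom-right: $\mS^{-1}$.
\end{itemize}
As a sanity check, one can also multiply $\mX$ by the proposed inverse and confirm that each block of the product reduces to $\sI$ or $\mathbf{0}$, using $\mD - \mC\mA^{-1}\mB = \mS$.

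There is no genuine obstacle here beyond bookkeeping. The only subtle point is the hypothesis: the invertibility of $\mS$ must be assumed, or derived from that of $\mX$, since $\mA$ invertible alone does not suffice. I would state explicitly that in the intended application the matrix $\mX$ is the Gram matrix $[\mX^{\text{pub}}, \mX^{\text{priv}}]^T[\mX^{\text{pub}}, \mX^{\text{priv}}]$. When that Gram matrix is positive definite, both $\mA$ and $\mS$ are automatically invertible: $\mS$ is then the Schur complement of a positive definite matrix, hence positive definite.
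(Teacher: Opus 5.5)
The paper gives no proof of this lemma. It quotes it as the standard Schur-complement inversion formula and uses it only to derive the closed-form optimum in Lemma~\ref{lemma:lin_reg_optimum}, so your block $LDU$ argument supplies what the paper omits. The argument is correct and complete: the three factors multiply back to $\mX$, and the product of the inverted factors gives exactly the four displayed blocks.

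Your repair of the hypothesis is also the right one. The displayed formula needs both $\mA$ and $\mS = \mD - \mC\mA^{-1}\mB$ to be invertible. The paper's condition ``$\mA^{-1}$ or $\mD^{-1}$ exists'' never guarantees that $\mS^{-1}$ exists. Moreover, invertibility of $\mD$ alone does not make the $\mA^{-1}$ in the formula meaningful; that branch yields the companion formula built from $\mA - \mB\mD^{-1}\mC$.

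Your closing observation is exactly the implicit full-column-rank assumption under which the paper applies the lemma. That observation is that a positive definite Gram matrix $[\mX^{\text{pub}}, \mX^{\text{priv}}]^T[\mX^{\text{pub}}, \mX^{\text{priv}}]$ makes both $\mA$ and $\mS$ invertible.
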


\begin{lemma}[Linear Regression Optimum]
\label{lemma:lin_reg_optimum}
    Consider solving:
    \begin{align*}
        \min_{\theta^{\text{pub}}, \theta^{\text{priv}}} \|(\mX^{\text{pub}} \theta^{\text{pub}} + \mX^{\text{priv}}\theta^{\text{priv}}) - \rvy\|^2
    \end{align*}
    where $\mX^{\text{pub}} \in \R^{n \times d^{\text{pub}}}$, $\mX^{\text{priv} \in \R^{n \times d^{\text{priv}}}}$ and $\rvy\in \R^{n}$ represent the public features, private features and the labels, respectively.
    Let $\mX^{\text{pub}} = \mU \Sigma \mV^{T}$ be its SVD decomposition. 
    Then the optimal solution $(\theta^{\text{pub}})^{*}$ and $(\theta^{\text{pub}})^{*}$ is given by
    \begin{align}
        (\theta^{\text{pub}})^* 
        &= \mV \Sigma^{-1}\Big( \sI_n
        + \mU^T \mX^{\text{priv}} \Big[ (\mX^{\text{priv}})^T (\sI_n - \mU \mU^T) \mX^{\text{priv}} \Big]^{-1} (\mX^{\text{priv}})^T \mU \Big)
        \mU^T \rvy\\
         (\theta^{\text{priv}})^{*} 
         &= \Big[ (\mX^{\text{priv}})^T (\sI_n - \mU \mU^T) \mX^{\text{priv}} 
        \Big]^{-1} (\mX^{\text{priv}})^T \Big(
            -\mU \mU^{T} + \sI_n
        \Big) \rvy
    \end{align}
\end{lemma}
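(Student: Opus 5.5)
We assume, as the statement implicitly does, that $\mX^{\text{pub}}$ has full column rank, so that $\Sigma$ is invertible and $\mV$ is square orthogonal. We also assume the Schur-type matrix $\mS \coloneq (\mX^{\text{priv}})^T(\sI_n - \mU\mU^T)\mX^{\text{priv}}$ is invertible, i.e.\ no private column lies in the span of the public ones. (The identity inside the bracket of $(\theta^{\text{pub}})^*$ must then be read as $\sI_{d^{\text{pub}}}$ for the dimensions to match.) The plan is to solve the least-squares problem by eliminating $\theta^{\text{pub}}$ first, which is a variable-wise form of the block inversion in Lemma~\ref{lemma:block_matrix_inversion}.

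\textbf{Step 1: reparametrize the public block.} Set $\rvu \coloneq \Sigma\mV^T\theta^{\text{pub}}$. This is a bijection, and $\mX^{\text{pub}}\theta^{\text{pub}} = \mU\rvu$, so the objective becomes $\|\mU\rvu + \mX^{\text{priv}}\theta^{\text{priv}} - \rvy\|^2$. Since $\mU^T\mU = \sI$, for fixed $\theta^{\text{priv}}$ the minimizer is the orthogonal projection
\[
\rvu = \mU^T(\rvy - \mX^{\text{priv}}\theta^{\text{priv}}).
\]

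\textbf{Step 2: profile out $\rvu$ and solve for $\theta^{\text{priv}}$.} Let $P \coloneq \sI_n - \mU\mU^T$, which is a symmetric idempotent projector. Substituting Step~1, the residual becomes $P(\mX^{\text{priv}}\theta^{\text{priv}} - \rvy)$. The reduced problem $\min\|P\mX^{\text{priv}}\theta^{\text{priv}} - P\rvy\|^2$ has normal equations $\mS\,\theta^{\text{priv}} = (\mX^{\text{priv}})^T P\rvy$, using $P^TP = P$. Hence
\[
(\theta^{\text{priv}})^* = \mS^{-1}(\mX^{\text{priv}})^T(\sI_n - \mU\mU^T)\rvy,
\]
which is exactly the claimed formula.

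\textbf{Step 3: back-substitute.} Plugging $(\theta^{\text{priv}})^*$ into Step~1 and mapping back via $\theta^{\text{pub}} = \mV\Sigma^{-1}\rvu$ gives
\[
(\theta^{\text{pub}})^* = \mV\Sigma^{-1}\Big(\mU^T\rvy - \mU^T\mX^{\text{priv}}\mS^{-1}(\mX^{\text{priv}})^T(\sI_n - \mU\mU^T)\rvy\Big).
\]
Expanding $\sI_n - \mU\mU^T$, the $\mU\mU^T$ part produces exactly the term $\mU^T\mX^{\text{priv}}\mS^{-1}(\mX^{\text{priv}})^T\mU\mU^T\rvy$ in the stated expression. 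As a cross-check, the same formulas should come out of applying Lemma~\ref{lemma:block_matrix_inversion} to the Gram matrix of $[\mX^{\text{pub}}, \mX^{\text{priv}}]$, with $\mA = \mV\Sigma^2\mV^T$ and Schur complement $\mS$.

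\textbf{Main obstacle.} The hard step is reconciling the back-substitution with the displayed closed form. My derivation contains an additional term $-\mV\Sigma^{-1}\mU^T\mX^{\text{priv}}\mS^{-1}(\mX^{\text{priv}})^T\rvy$, and this term is absent from the statement. I would first recheck the signs carefully through the block-inverse route. If the discrepancy persists, the correct version of $(\theta^{\text{pub}})^*$ is the one from Step~3. The displayed form would then hold only in special cases, e.g.\ label DP, where $\mX^{\text{priv}}$ is empty and $(\theta^{\text{pub}})^* = \mV\Sigma^{-1}\mU^T\rvy$. That special case is what the comparison following Lemma~\ref{lemma:improvement_C_lin_reg} actually uses.
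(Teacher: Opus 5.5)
Your derivation is correct, and the discrepancy you flag is real, not a sign slip. The stated formula for $(\theta^{\text{pub}})^*$ is false whenever private features are present. The stated $(\theta^{\text{priv}})^*$ is right, and your Step~2 proves it.

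A minimal check: take $n=2$, $\mX^{\text{pub}}=(1,0)^T$, $\mX^{\text{priv}}=(1,1)^T$, $\rvy=(1,0)^T$. Then $\mU=(1,0)^T$, $\Sigma=\mV=1$ and $\mS=1$. The objective $(\theta^{\text{pub}}+\theta^{\text{priv}}-1)^2+(\theta^{\text{priv}})^2$ has the zero-loss minimizer $(1,0)$. The stated $(\theta^{\text{priv}})^*$ gives $0$, which is correct. The stated $(\theta^{\text{pub}})^*$ gives $(1+1)\cdot 1=2$, whereas your Step~3 gives $1-0=1$.

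The paper's proof follows exactly your ``cross-check'' route. It applies Lemma~\ref{lemma:block_matrix_inversion} to the Gram matrix of $[\mX^{\text{pub}},\mX^{\text{priv}}]$ with Schur complement $\mS$. It writes $(\theta^{\text{pub}})^*$ as (top-left block)$\,(\mX^{\text{pub}})^T\rvy$ plus (top-right block)$\,(\mX^{\text{priv}})^T\rvy$. It even computes the second product, $-\mV\Sigma^{-1}\mU^T\mX^{\text{priv}}\mS^{-1}(\mX^{\text{priv}})^T$, but then omits that summand when assembling the final expression. The omitted piece is precisely your extra term.

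Your elimination argument is the variable-wise form of the same Schur-complement computation. Its advantage is that it lands directly on the compact form $(\theta^{\text{pub}})^*=\mV\Sigma^{-1}\mU^T\big(\rvy-\mX^{\text{priv}}(\theta^{\text{priv}})^*\big)$, which makes the missing term hard to overlook. So commit to Step~3. The displayed version holds only when $\mU^T\mX^{\text{priv}}\mS^{-1}(\mX^{\text{priv}})^T\rvy=0$, e.g.\ with no private features, or with private columns orthogonal to the public ones.

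Two small corrections to your own write-up:
\begin{enumerate}
\item Besides $\sI_n\to\sI_{d^{\text{pub}}}$, your invertibility condition is too weak. $\mS$ is invertible iff $(\sI_n-\mU\mU^T)\mX^{\text{priv}}$ has full column rank, i.e.\ no nonzero \emph{combination} of private columns lies in the column span of $\mX^{\text{pub}}$. Equivalently, $[\mX^{\text{pub}},\mX^{\text{priv}}]$ has full column rank, which the paper's $(\mX^T\mX)^{-1}$ also needs. With public column $e_1$ and private columns $e_2$, $e_1+e_2$, no single private column is in the span, yet $\mS$ is singular.
\item The downstream damage is contained. Redefine $\widehat{\rvy}$ as $\mU^T\big(\rvy-\mX^{\text{priv}}(\theta^{\text{priv}})^*\big)$. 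This keeps $(\theta^{\text{pub}})^*=\mV\Sigma^{-1}\widehat{\rvy}$, which is the only fact about $(\theta^{\text{pub}})^*$ that the proof of Lemma~\ref{lemma:improvement_C_lin_reg} uses. In the label-DP comparison both definitions reduce to $\mU^T\rvy$.
\end{enumerate}
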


\begin{proof}
    For ease of presentation, we again write all parameters as a vector. 
    Let
    \begin{align*}
        \theta^{*} 
        &= \begin{bmatrix}
            (\theta^{\text{pub}})^{*}\\
            (\theta^{\text{priv}})^{*}
        \end{bmatrix}
        = \argmin_{\theta^{\text{pub}}, \theta^{\text{priv}}} \left\| \begin{pmatrix}
            \mX^{\text{pub}} & \mX^{\text{priv}}
        \end{pmatrix} 
        \begin{pmatrix}
            \theta^{\text{pub}}\\
            \theta^{\text{priv}}
        \end{pmatrix} - \rvy
        \right\|^2\\
        &= \argmin_{\theta}\left\| \mX \theta - \rvy\right\|^2
    \end{align*}
    
    It is well known that 
    \begin{align}
        \theta^{*} &= (\mX^{T} \mX)^{-1} \mX^{T} \rvy
        = \left( \begin{bmatrix}
            (\mX^{\text{pub}})^T\\
            (\mX^{\text{priv}})^T
        \end{bmatrix} \begin{bmatrix}
            \mX^{\text{pub}} & \mX^{\text{priv}}
        \end{bmatrix}
        \right)^{-1} 
        \begin{bmatrix}
            (\mX^{\text{pub}})^T\\
            (\mX^{\text{priv}})^T
        \end{bmatrix} \rvy\\
        &= \left( \begin{bmatrix}
            (\mX^{\text{pub}})^T \mX^{\text{pub}}
            & (\mX^{\text{pub}})^T \mX^{\text{priv}}\\
            (\mX^{\text{priv}})^T \mX^{\text{pub}}
            & (\mX^{\text{priv}})^T \mX^{\text{priv}}
        \end{bmatrix}\right)^{-1}\begin{bmatrix}
            (\mX^{\text{pub}})^T\\
            (\mX^{\text{priv}})^T
        \end{bmatrix} \rvy
    \end{align}
    By Lemma~\ref{lemma:block_matrix_inversion}, let 
    \begin{align}
    (\mX^T \mX)^{-1} 
    &:= \begin{bmatrix}
        \mA & \mB\\
        \mC & \mD
    \end{bmatrix}\\
    \end{align}
    where
    \begin{align}
    \mA &= ((\mX^{\text{pub}})^T \mX^{\text{pub}})^{-1} + ((\mX^{\text{pub}})^T \mX^{\text{pub}})^{-1} (\mX^{\text{pub}})^T \mX^{\text{priv}} \mS^{-1} (\mX^{\text{priv}})^T \mX^{\text{pub}} ((\mX^{\text{pub}})^T \mX^{\text{pub}})^{-1}\\
    \mB &=  - ((\mX^{\text{pub}})^T \mX^{\text{pub}})^{-1} (\mX^{\text{pub}})^T \mX^{\text{priv}} \mS^{-1}\\
    \mC &= - \mS^{-1} (\mX^{\text{priv}})^T\mX^{\text{pub}} ((\mX^{\text{pub}})^T \mX^{\text{pub}})^{-1}\\
    \mD &= \mS^{-1}\\
    \mS &= (\mX^{\text{priv}})^T \mX^{\text{priv}} - (\mX^{\text{priv}})^T \mX^{\text{pub}} ((\mX^{\text{pub}})^T \mX^{\text{pub}})^{-1} (\mX^{\text{pub}})^T \mX^{\text{priv}}
\end{align}

Recall that $\mX^{\text{pub}} = \mU \Sigma \mV^{T}$ is its SVD decomposition, where $\mU \in \R^{n \times d^{\text{pub}}}$, $\Sigma \in \R^{d^{\text{pub}} \times d^{\text{pub}}}$ and $\mV \in \R^{d^{\text{pub}} \times d^{\text{pub}}}$. Following the above,

\begin{align}
    &((\mX^{\text{pub}})^T \mX^{\text{pub}})^{-1} = (\mV \Sigma \mU^T \mU \Sigma \mV^{T})^{-1} = \mV \Sigma^{-2} \mV^T\\
    & ((\mX^{\text{pub}})^T \mX^{\text{pub}})^{-1} (\mX^{\text{pub}})^T = \mV \Sigma^{-2} \mV^T \mV \Sigma \mU^T = \mV \Sigma^{-1} \mU^T\\
    & \mX^{\text{pub}} ((\mX^{\text{pub}})^T \mX^{\text{pub}})^{-1} = \mU \Sigma \mV^T \mV \Sigma^{-2} \mV^T = \mU \Sigma^{-1} \mV^T\\
    & \mX^{\text{pub}} ((\mX^{\text{pub}})^T \mX^{\text{pub}})^{-1} (\mX^{\text{pub}})^T = \mU \Sigma^{-1} \mV^T \mV \Sigma \mU^T = \mU \mU^T \\
    &\mS = (\mX^{\text{priv}})^T \Big( \sI_{n} - \mX^{\text{pub}} ((\mX^{\text{pub}})^T \mX^{\text{pub}})^{-1} (\mX^{\text{pub}})^T \Big) \mX^{\text{priv}}
    = (\mX^{\text{priv}})^T \Big( \sI_{n} - \mU \mU^T \Big) \mX^{\text{priv}}
\end{align}
And
\begin{align}
    \theta^* = \begin{bmatrix}
        \mA  & \mB\\
        \mC & \mD
    \end{bmatrix} \begin{bmatrix}
        (\mX^{\text{pub}})^T\\
        (\mX^{\text{priv}})^T
    \end{bmatrix} \rvy
    = \begin{bmatrix}
        (\mA (\mX^{\text{pub}})^T + \mB (\mX^{\text{priv}})^T) \rvy\\
        (\mC (\mX^{\text{pub}})^T + \mD (\mX^{\text{priv}})^T) \rvy
    \end{bmatrix}
    = \begin{bmatrix}
        (\theta^{\text{pub}})^*\\
        (\theta^{\text{priv}})^*
    \end{bmatrix}
\end{align}

Note that
\begin{align}
    \mA (\mX^{\text{pub}})^T 
    &= ((\mX^{\text{pub}})^T \mX^{\text{pub}})^{-1} (\mX^{\text{pub}})^T \\
    \nonumber
    &\quad + ((\mX^{\text{pub}})^T \mX^{\text{pub}})^{-1} (\mX^{\text{pub}})^T \mX^{\text{priv}} \mS^{-1} (\mX^{\text{priv}})^T \mX^{\text{pub}} ((\mX^{\text{pub}})^T \mX^{\text{pub}})^{-1} (\mX^{\text{pub}})^T\\
    &= \mV \Sigma^{-1} \mU^T
    + \mV \Sigma^{-1} \mU^T \mX^{\text{priv}} \Big[ (\mX^{\text{priv}})^T (\sI_n - \mU \mU^T) \mX^{\text{priv}} \Big]^{-1} (\mX^{\text{priv}})^T \mU \mU^T\\
    \mB (\mX^{\text{priv}})^T
    &= - ((\mX^{\text{pub}})^T \mX^{\text{pub}})^{-1} (\mX^{\text{pub}})^T \mX^{\text{priv}} \mS^{-1} (\mX^{\text{priv}})^T\\
    &= - \mV \Sigma^{-1} \mU^T \mX^{\text{priv}} \Big[ (\mX^{\text{priv}})^T (\sI_n  \mU \mU^T) \mX^{\text{priv}} \Big]^{-1} (\mX^{\text{priv}})^T
\end{align}

Therefore,
\begin{align}
    (\theta^{\text{pub}})^* = \mV \Sigma^{-1}\Big( \sI_n
    + \mU^T \mX^{\text{priv}} \Big[ (\mX^{\text{priv}})^T (\sI_n - \mU \mU^T) \mX^{\text{priv}} \Big]^{-1} (\mX^{\text{priv}})^T \mU \Big)
    \mU^T \rvy
\end{align}

Similarly,
\begin{align}
    \mC (\mX^{\text{pub}})^T &= -\Big[ (\mX^{\text{priv}})^T (\sI_n - \mU \mU^T) \mX^{\text{priv}} \Big]^{-1} (\mX^{\text{priv}})^T \mU \mU^T \\
    \mD (\mX^{\text{priv}})^T &= \Big[ (\mX^{\text{priv}})^T (\sI_n - \mU \mU^T) \mX^{\text{priv}} \Big]^{-1} (\mX^{\text{priv}})^T
\end{align}

and

\begin{align}
    (\theta^{\text{priv}})^{*} = \Big[ (\mX^{\text{priv}})^T (\sI_n - \mU \mU^T) \mX^{\text{priv}} 
    \Big]^{-1} (\mX^{\text{priv}})^T \Big(
        -\mU \mU^{T} + \sI_n
    \Big) \rvy
\end{align}

\end{proof}

\begin{lemma}[Re-statement of Lemma~\ref{lemma:improvement_C_lin_reg}]
   Consider the linear regression problem defined in (\ref{eq:lin_reg_problem}), and let the singular value decomposition (SVD) of the public feature matrix be $\mX^{\text{pub}} = \mU \Sigma \mV^{T}$. Suppose $\precond$ (Algorithm~\ref{alg:cond_dp}) is run with $\eta_t, \sigma^2$ as in Theorem~\ref{thm:appendix_convergence_glm}, then its excess risk is bounded as follows. If $\mC = \widehat{\mC} = \mV \Sigma^{-1}\mV^{T}$, then
    \begin{align}
        &\E\left[\gL(\theta^{\text{pub-out}}, \theta^{\text{priv-out}}; D)\right]
        - \gL^*
        \leq \sqrt{\frac{1}{T} + \frac{d}{n^2 \eps^2}} \cdot \sqrt{ 2 \| \widehat{\rvy} \|^2 + F^2}
        \cdot
        \sqrt{
            G^2 + E^2
        }
    \end{align}
    and if $\mC = \sI_{d^{\text{pub}}}$ (corresponding to the \dpsgd\ baseline), then
    \begin{align}
        &\E\left[\gL(\theta^{\text{pub-out}}, \theta^{\text{priv-out}}; D)\right]
        - \gL^*
        \leq \sqrt{\frac{1}{T} + \frac{d}{n^2 \eps^2}}
        \cdot \sqrt{2 \|\Sigma^{-1} \widehat{\rvy}\|^2 + F^2
        }
        \cdot \sqrt{G^2 \sigma_{\text{max}}^2 + E^2}
    \end{align}    
    where $\sigma_{\text{max}}$ is the largest singular value of $\mX^{\text{pub}}$, and
    \begin{align}
        F^2 &=  2\|\theta_0^{\text{pub}}\|^2 
        + \|\theta_0^{\text{priv}} - (\theta^{\text{priv}})^{*}\|^2\\
        E^2 &= \widehat{G}^{2} \cdot \update{R^2}\\
        \widehat{\rvy} &= ( \sI_n
        + \mU^T \mX^{\text{priv}} \Big[ (\mX^{\text{priv}})^T (\sI_n - \mU \mU^T) \mX^{\text{priv}} \Big]^{-1} (\mX^{\text{priv}})^T \mU )
        \mU^T \rvy
    \end{align}
\end{lemma}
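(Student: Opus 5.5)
The plan is to specialize Theorem~\ref{thm:appendix_convergence_glm} to linear regression, where $p^{\text{pub}}=p^{\text{priv}}=1$ and $\omega=\emptyset$ (so $\overline{G}=0$), and bound the two factors $M$ and $N$ separately for $\mC=\widehat{\mC}$ and for $\mC=\sI_{d^{\text{pub}}}$. Theorem~\ref{thm:appendix_convergence_glm} gives excess risk at most $\sqrt{\frac{1}{T}+\frac{d}{n^2\eps^2}}\cdot M\cdot N$. It therefore suffices to show the following. For $\widehat{\mC}$ we want $M^2\le G^2+E^2$ and $N^2\le 2\|\widehat{\rvy}\|^2+F^2$. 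For the identity we want $M^2\le G^2\sigma_{\max}^2+E^2$ and $N^2\le 2\|\Sigma^{-1}\widehat{\rvy}\|^2+F^2$. The private part of $N^2$ is just $\|\theta_0^{\text{priv}}-(\theta^{\text{priv}})^*\|^2$ in both cases, so the work is entirely in the public block.

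\textbf{Gradient factor.} Here $M^2=G^2\max_i\|\mC\rvx_i^{\text{pub}}\|^2+E^2$.
\begin{itemize}
\item For the identity, each $\rvx_i^{\text{pub}}=(\mX^{\text{pub}})^T e_i$, so $\|\rvx_i^{\text{pub}}\|\le\sigma_{\max}$.
\item For $\widehat{\mC}$, we have $\widehat{\mC}\rvx_i^{\text{pub}}=\mV\Sigma^{-1}\mV^T\mV\Sigma\mU^T e_i=\mV\mU^T e_i$. Its norm is $\|\mU^T e_i\|\le 1$ because $\mU$ has orthonormal columns.
\end{itemize}
This gives both bounds on $M$ immediately. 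I take $\mX^{\text{pub}}$ to have full column rank, so that $\Sigma$ is invertible, as is implicit in defining $\widehat{\mC}$.

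\textbf{Initialization factor.} With vector parameters, the public term is $\|\mC^T\theta_0^{\text{pub}}-(\theta^{\text{pub}})^*\|^2_{(\mC\mC^T)^{-1}}$, i.e.\ the $\mW$-norm from the proof of Theorem~\ref{thm:appendix_convergence_glm}. I will use $\|a-b\|_A^2\le 2\|a\|_A^2+2\|b\|_A^2$.
\begin{itemize}
\item The first piece is $\|\mC^T\theta_0\|^2_{(\mC\mC^T)^{-1}}=\theta_0^T\mC(\mC\mC^T)^{-1}\mC^T\theta_0=\|\theta_0\|^2$, since $\mC$ is invertible. This produces the $2\|\theta_0^{\text{pub}}\|^2$ in $F^2$.
\item For the second piece I invoke Lemma~\ref{lemma:lin_reg_optimum}, which gives $(\theta^{\text{pub}})^*=\mV\Sigma^{-1}\widehat{\rvy}$.
\item For $\widehat{\mC}$, we have $(\widehat{\mC}\widehat{\mC}^T)^{-1}=\mV\Sigma^2\mV^T$. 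Hence $\|(\theta^{\text{pub}})^*\|^2_{\mV\Sigma^2\mV^T}=\widehat{\rvy}^T\Sigma^{-1}\mV^T\mV\Sigma^2\mV^T\mV\Sigma^{-1}\widehat{\rvy}=\|\widehat{\rvy}\|^2$.
\item For the identity, the same piece is $\|\mV\Sigma^{-1}\widehat{\rvy}\|^2=\|\Sigma^{-1}\widehat{\rvy}\|^2$.
\end{itemize}
Adding the private term yields the stated $N^2$ bounds in both cases.

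\textbf{Main obstacle.} The delicate step is bookkeeping of conventions rather than any inequality. One must check that the matrix-form norm $\|\cdot\|_{F,\mC}$ with $(\mC^T\mC)^{-1}$ matches the vector form $(\mW\mW^T)^{-1}$ with $\mC^T$ blocks. One must also check that the dimensions of $\widehat{\rvy}$ (a $d^{\text{pub}}$-vector, despite the $\sI_n$ written in its definition) are consistent. Since $\widehat{\mC}$ is symmetric, the transpose ambiguity is harmless in the conditioned case, and the identity case is trivial. Finally, one must confirm that the learning rate $\eta$ in Theorem~\ref{thm:appendix_convergence_glm} is tuned to the exact $N$, so that upper bounds on $N$ carry through; this holds if $\eta$ is chosen with the upper bound in place of $N$, since the bound is $\frac{N^2}{2\eta T}+\frac{\eta}{2}(\cdot)$, which is monotone in $N^2$.
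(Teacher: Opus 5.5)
Your proposal is correct and follows essentially the same route as the paper. You specialize the convex bound with $\overline{G}=0$, split the public initialization term via $\|a-b\|^2\le 2\|a\|^2+2\|b\|^2$, substitute $(\theta^{\text{pub}})^*=\mV\Sigma^{-1}\widehat{\rvy}$, and evaluate $\max_i\|\mC\rvx_i^{\text{pub}}\|^2$ and the weighted norms for $\mC=\widehat{\mC}$ and $\mC=\sI_{d^{\text{pub}}}$. One small slip: the weight induced by $\mW$ (whose blocks are $\mC^T$) is $(\mC^T\mC)^{-1}$, not $(\mC\mC^T)^{-1}$, and only with $(\mC^T\mC)^{-1}$ does $\mC(\mC^T\mC)^{-1}\mC^T=\sI$ follow from invertibility alone; since both matrices you use are symmetric, this does not affect your conclusions.
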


\begin{proof}
    Let $\mX^{\text{pub}} = \mU \Sigma \mV^T$ be its SGD decomposition.
    Consider the following preconditioning matrix $\widehat{\mC}$ as 
    \begin{align}
        \widehat{\mC} = 
            \mV \Sigma^{-1} \mV^T,
        \text{ and }
        (\mC)^{-1} = (\mC^{T})^{-1} = \mV \Sigma \mV^{T}
    \end{align}

    Since $\omega = \emptyset$, $\overline{G} = 0$ in this case.
    By Theorem~\ref{thm:convergence_glm}, the excess risk is bounded as follows,
    \begin{align}
        &\E\left[\gL(\theta^{\text{pub-out}}, \theta^{\text{priv-out}}; D)\right]
        - \gL^*
        \leq \sqrt{\frac{1}{T} + \frac{d}{n^2\eps^2} } \cdot \sqrt{G^2 \cdot \max_{i\in [n]}\|\mC \rvx_i^{\text{pub}}\|^2 
        + \widehat{G}^{2} \cdot \update{R^2}}\\
        \nonumber
        &\cdot \sqrt{
        \|\mC^{T} \theta_0^{\text{pub}} - (\theta^{\text{pub}})^{*}\|^2_{(\mC^T \mC)^{-1}}
        + \|\theta^{\text{priv}}_0 - (\theta^{\text{priv}})^*\|^2
        }\\
    \label{eq:convergence_bound_summary}
        &\leq \sqrt{\frac{1}{T} + \frac{d}{n^2\eps^2} } \cdot \sqrt{G^2 \cdot \max_{i\in [n]}\|\mC \rvx_i^{\text{pub}}\|^2 
        + \widehat{G}^{2} \cdot \update{R^2}}\\
        \nonumber
        &\quad \cdot \sqrt{2\| \mC^{T}\theta_0^{\text{pub}}\|_{(\mC^T \mC)^{-1}}^2
        + 2\|(\theta^{\text{pub}})^{*}\|_{(\mC^T \mC)^{-1}}^2
        + \|\theta^{\text{priv}}_0 - (\theta^{\text{priv}})^*\|^2
        }
    \end{align}

    By Lemma~\ref{lemma:lin_reg_optimum}, the vector version of $(\Theta^{\text{pub}})^*$ is given by
    \begin{align}
        (\theta^{\text{pub}})^* 
        &= \mV \Sigma^{-1} \underbrace{ \Big( \sI_n
        + \mU^T \mX^{\text{priv}} \Big[ (\mX^{\text{priv}})^T (\sI_n - \mU \mU^T) \mX^{\text{priv}} \Big]^{-1} (\mX^{\text{priv}})^T \mU \Big)}_{:= \mA}
        \mU^T \rvy\\
        &= \mV \Sigma^{-1} \mA \mU^T \rvy
    \end{align}

    Let $\sigma_{min}$ and $\sigma_{max}$ be the minimum and maximum absolute singular values of $\mX^{\text{pub}}$.

    \textbf{The \dpsgd\ Baseline.}
    Here, $\mC = \sI_{d^{\text{pub}}}$.
    \begin{align}
        &\|\mC^{T} \theta_0^{\text{pub}}\|_{(\mC^{T}\mC)^{-1}}^{2} = \|\theta_0^{\text{pub}}\|^2\\
        & \|(\theta^{\text{pub}})^{*}\|_{(\mC^T \mC)^{-1}}^2
        = \|(\theta^{\text{pub}})^{*}\|^2
        = \|\mV \Sigma^{-1} \mA \mU^{T}\rvy\|^2
        = \|\Sigma^{-1}\mA \mU^{T}\rvy\|^2
        \\
        &\max_{i\in [n]} \|\mC \rvx_i^{\text{pub}}\|^2
        = \max_{i\in [n]}\| \mV\Sigma \rvu_i\|^2
        = \max_{i\in [n]} \|\Sigma \rvu_i\|^2
        \leq \sigma_{max}^2
    \end{align}
    Following Eq.~\ref{eq:convergence_bound_summary}, the excess risk of \dpsgd\ is bounded by

    \begin{align}
         &\E\left[\gL(\theta^{\text{pub-out}}, \theta^{\text{priv-out}}; D)\right]
        - \gL^*\\
        \nonumber
        &\leq \sqrt{\frac{1}{T} + \frac{d}{n^2 \eps^2}} \cdot \sqrt{G^2 \sigma_{max}^2 
         + \widehat{G}^{2} \cdot \update{R^2}} \cdot 
        \sqrt{2 \|\theta_0^{\text{pub}}\|^2
        + 2 \|\Sigma^{-1}\mA \mU^T \rvy\|^2 + \|\theta^{\text{priv}}_0 - (\theta^{\text{priv}})^*\|^2}
    \end{align}

    \textbf{$\precond$.} Under the choice $\mC = \widehat{\mC} = \mV \Sigma^{-1}\mV^T$, we have
    \begin{align}
        &\|\mC^{T}\theta_0^{\text{pub}}\|_{(\mC^T \mC)^{-1}}^2
        = (\theta_0^{\text{pub}})^{T} \mC (\mC^T \mC)^{-1} \mC^{T} \theta_0^{\text{pub}}\\
        &\quad = (\theta_0^{\text{pub}})^{T} \mV \Sigma^{-1} \mV^T \mV \Sigma^{2}\mV^T \mV \Sigma^{-1} \mV^T \theta_0^{\text{pub}}
        = \|\theta_0^{\text{pub}}\|^2 \\
        &\|(\theta^{\text{pub}})^{*}\|_{(\mC^T \mC)^{-1}}^2
        = ((\theta^{\text{pub}})^{*})^T (\mC^{T } \mC)^{-1} (\theta^{\text{pub}})^{*}\\
        &\quad = \rvy \mU \mA \Sigma^{-1} \mV^{T} \mV^{T} \Sigma^{2} \mV^{T} \mV \Sigma^{-1} \mA \mU^{T} \rvy
        = \|\mA \mU^T \rvy\|^2\\
        &\max_{i\in [n]} \|\mC\rvx_i^{\text{pub}}\|^2
        = \max_{i\in [n]} \|\mV \Sigma^{-1}\mV^{T} \mV \Sigma \rvu_i\|^2
        = \max_{i\in [n]} \| \rvu_i \|^2 \leq 1
    \end{align}

    Following Eq.~\ref{eq:convergence_bound_summary}, the excess risk of $\precond$ is
    \begin{align}
    \label{eq:convergence_precond}
        &\E\left[\gL(\theta^{\text{pub-out}}, \theta^{\text{priv-out}}; D)\right]
        - \gL^*\\
        \nonumber
        &\leq \sqrt{\frac{1}{T} + \frac{d}{n^2\eps^2}}
        \cdot \sqrt{G^{2} 
         + \widehat{G}^{2} \cdot \update{R^2}}
        \cdot \sqrt{2\|\theta_0^{\text{pub}}\|^2
        + 2\|\mA \mU^T \rvy\|^2 + \|\theta^{\text{priv}}_0 - (\theta^{\text{priv}})^*\|^2}
    \end{align}

\end{proof}

\section{Comparison with \rronbins}
\label{sec:appendix_comparison_rronbins}

The main state-of-the-art method for regression under label DP is \rronbins~\cite{ghazi2023regression_ldp}.
Although both \precond\ and \rronbins\ aim to improve utility in label-DP settings, they differ in several key aspects.

First, \precond\ provides better privacy–utility trade-offs, particularly in low-privacy regimes, as it provides approximate DP guarantees, whereas \rronbins\ provides the more strict pure DP guarantees, whereas \precond\ satisfies approximate differential privacy.

Second, \precond\ adopts a simpler solution. \rronbins\ requires discretizing labels via binning and privately estimating a prior distribution, both of which introduce additional design choices and hyperparameters, including the allocation of the privacy budget for prior estimation, making it hard to deploy in practice. In contrast, \precond\ relies on a single conditioning step constructed from public features and integrates directly with standard DPSGD-style optimization.

Finally, the computational profiles of the two methods differ substantially. The dominant cost of \precond\ arises from computing an SVD of the public feature matrix, with complexity $(d^{\text{pub}})^3$, whereas \rronbins\ incurs a computational cost of $(n^{\text{pub}})^2$. This cubic dependence on the feature dimension can significantly limit the scalability of \rronbins\ in high-dimensional settings.

\section{More About Experiments}
\label{sec:appendix_exp}

\subsection{Baselines}
\label{subsec:appendix_baselines}
\begin{itemize}[leftmargin=6mm, topsep=0mm]
    \item \textbf{\dpsgd}~\cite{Abadi2016dpsgd}: Applying \dpsgd\ to train model using private and public features along with private labels. \dpsgd\ provides approximate $(\epsilon, \delta)$-DP guarantees.

    \item \textbf{\rronbins}~\cite{ghazi2023regression_ldp}: 
    The algorithm first estimates a prior $\Phi$ over label distributions using a portion of the privacy budget, then adds noise to the labels based on this prior to minimize the distance to the true labels.
    The model is trained using public features and privatized labels, achieving pure $\eps$-DP guarantees.
    Following~\cite{ghazi2023regression_ldp}, we allocate $0.25\eps$ for prior estimation and the remaining $0.75\eps$ for label privatization.

    \item \textbf{\wtdllp}~\cite{brahmbhatt2023ldp_aggregation}: 
    This algorithm is only
    applicable to private linear regression under label DP and provides approximate $(\eps, \delta)$-DP guarantees. 
    It first adds Gaussian noise to a random fraction $\rho \in (0, 1]$ of training labels. 
    Then, it selects a fraction $\alpha \in (0, 1)$ of training samples,
    partitions them into bags, and aggregates each bag into a single noisy feature-label pair. 
    The resulting noisy dataset is used to train the model via SGD.
    Following~\cite{brahmbhatt2023ldp_aggregation}, we set $\alpha = 0.6$ and treat $\rho \in \{0.01, 0.5, 1\}$ as a tunable hyperparameter.

\end{itemize}

\subsection{Additional Experiment Details}
\label{subsec:appendix_add_exp_details}

\textbf{Hardware.} Most experiments were conducted on AWS g6e.48xlarge\footnote{\url{https://aws.amazon.com/ec2/instance-types/g6e/}} instances equipped with NVIDIA L40S Tensor Core GPUs.

\subsection{Datasets Spectra}
\label{subsec:appendix_dataset_spectra}

We showed the singular spectrum of the \texttt{synthetic}, \texttt{wine} and \texttt{energy} dataset in Figure~\ref{fig:spectra_synthetic} and Figure~\ref{fig:spectrum_datasets}, respectively. Here, we present the spectra of the remaining datasets:
\texttt{Boston\_housing}, \texttt{CA\_housing}, \texttt{supercond}, \texttt{TomsHardware}, \texttt{wave} and \texttt{Criteo Sponsored Search Conversion} in Figure~\ref{fig:spectra_datasets}.

\begin{figure}[h]
    \centering
    \includegraphics[width=0.28\linewidth]{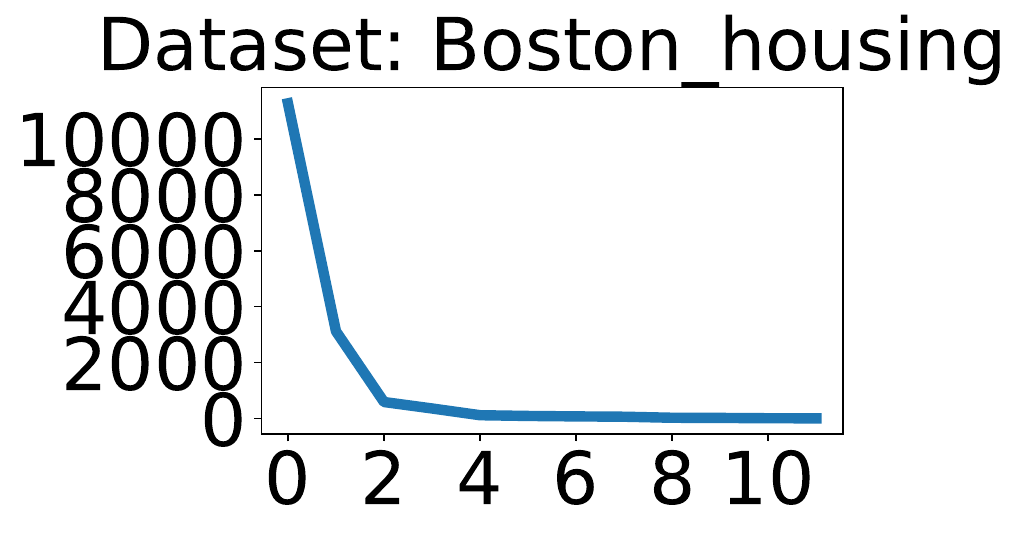}
    \includegraphics[width=0.23\linewidth]{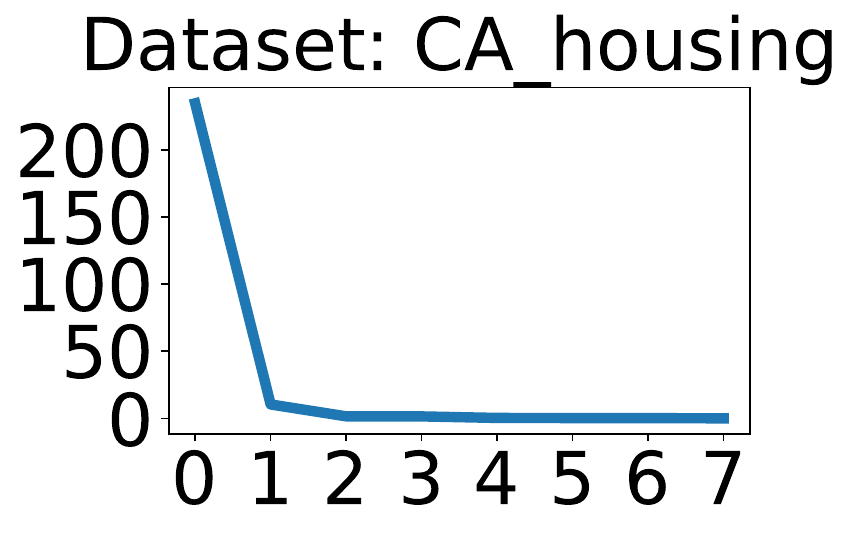}
    \includegraphics[width=0.24\linewidth]{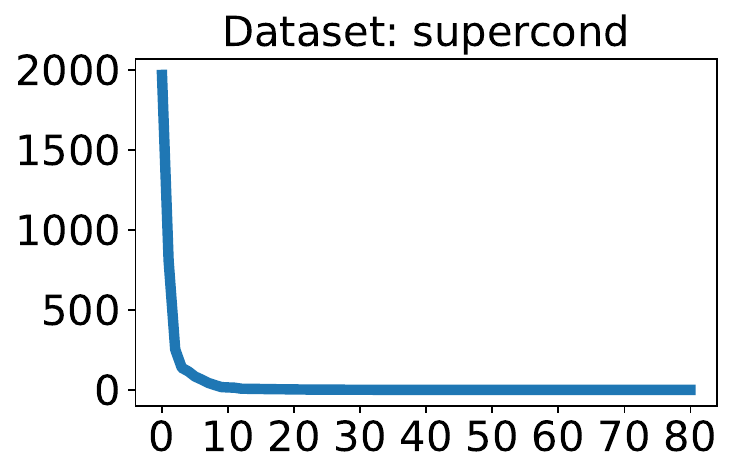}\\
    \includegraphics[width=0.24\linewidth]{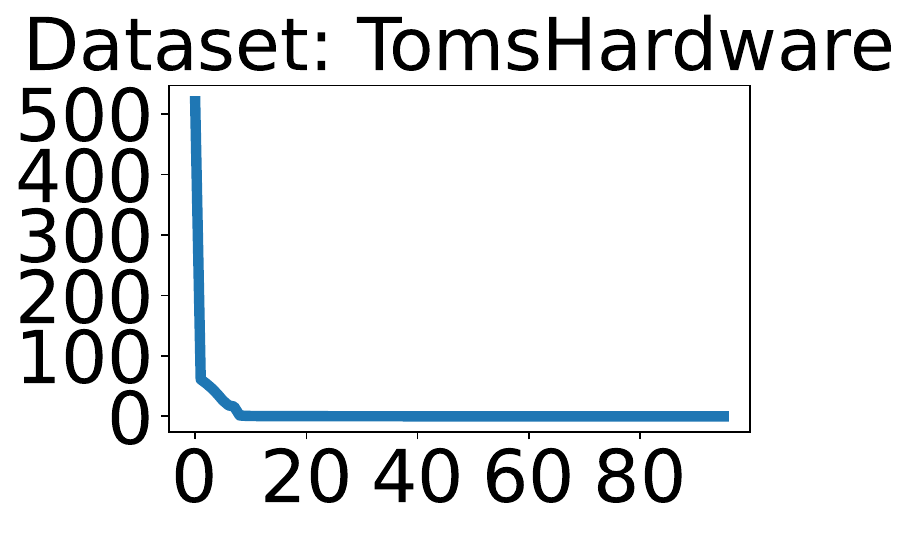}
    \includegraphics[width=0.26\linewidth]
    {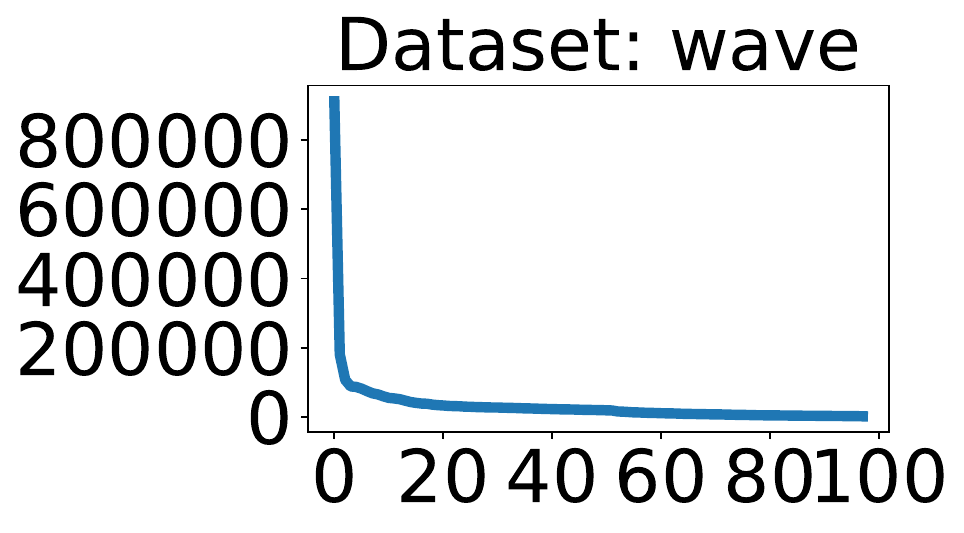}
    \includegraphics[width=0.35\linewidth]{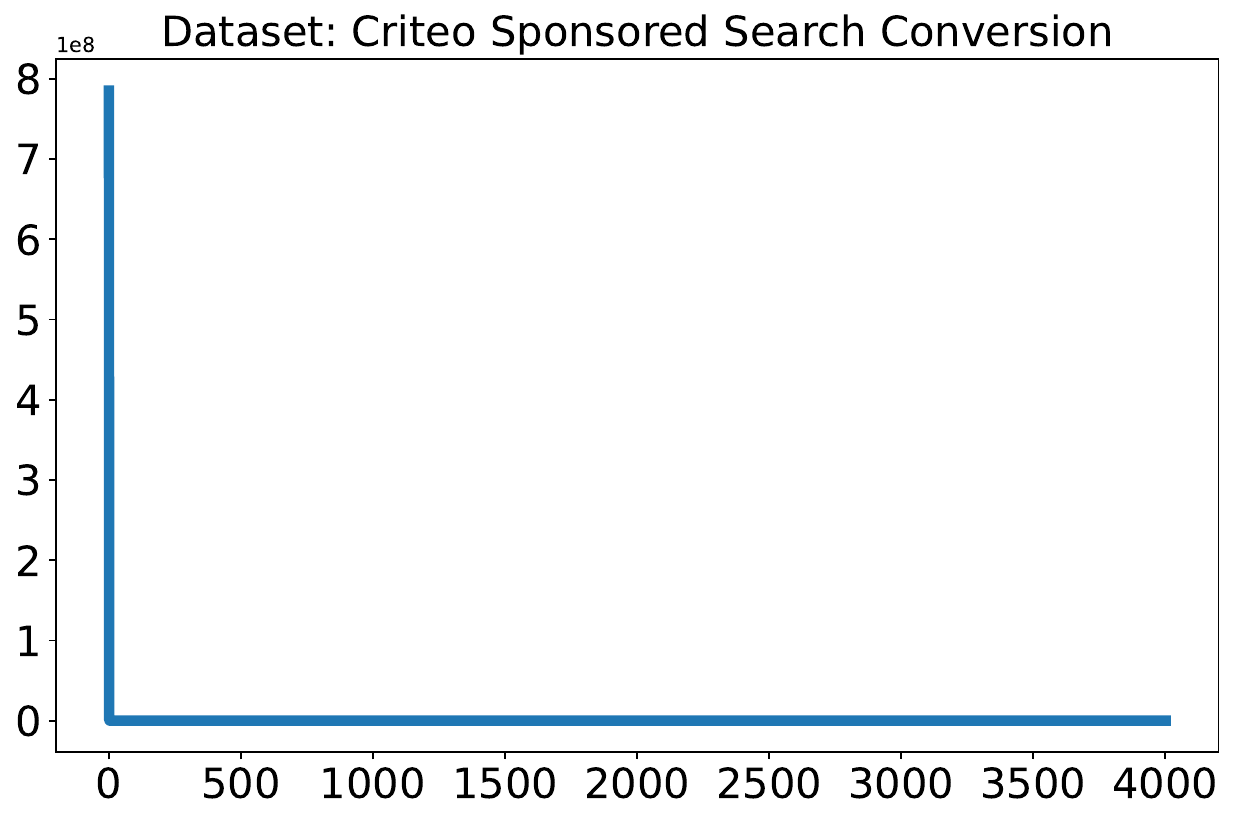}
    
    \caption{Singular value spectra of datasets used in the experiments.}
    \label{fig:spectra_datasets}
\end{figure}

\subsection{Full Experiment Results}
\label{subsec:appendix_full_exp_res}

We note that the baseline \textsf{Weighted-LLP}~\cite{brahmbhatt2023ldp_aggregation} requires the number of samples aggregated per bag to scale at least on the order of
$1/\eps^4$ and inversely with the label range. As a result, the method becomes impractical for datasets with limited sample sizes, in high-privacy regimes with small $\epsilon$, or when the label range is large. For instance, \textsf{Weighted-LLP} is not applicable to the \texttt{Criteo Sponsored Search Conversion} dataset due to its large label range of 400.

In Table~\ref{tab:private_linear_regression_real_data_res_table}, \ref{tab:private_mlp_real_data_res_table}, \ref{tab:criteo_search_linear_model_5_runs} and \ref{tab:criteo_search_mlp_5_runs} we report the complete numerical results of our experiments and include \textsf{Weighted-LLP} only in settings where it applies.

\begin{table}[h]
    \centering
    \begin{adjustbox}{width=0.6\linewidth}
    
    \begin{tabular}{|c|c|c|c|c|c|}
    \hline
         Dataset & $\eps$  & \dpsgd\ & \rronbins\ & \wtdllp\ & \precond\ \\
    \hline
        \multirow{6}{*}{\shortstack{\texttt{Boston\_housing} \\ ($\times 10^{-2}$)}} 
        & 0.25 & 1.3503 (0.0169) & 1.3846 (0.0436) & -- & 1.3255 (0.0015) \\
        & 0.5  & 1.3450 (0.0102) & 1.3566 (0.0368) & 22.9958  & 1.3196 (0.0002) \\
        & 1    & 1.3412 (0.0091) & 1.3492 (0.0163) & 9.2981 & 1.3196 (0.0004) \\
        & 2    & 1.3274 (0.0042) & 1.3573 (0.0278) & 4.1275 & 1.3193 (0.0006) \\
        & 4    & 1.3272 (0.0040) & 1.3556 (0.0348) & 3.7440 & 1.3193 (0.0001) \\
        & $\infty$ & 1.3267 (0.0087) & -- & -- & 1.3190 (0.0001) \\
    \hline
        \multirow{6}{*}{\texttt{wine}} 
        & 0.25 & 0.7671 (0.0336) & 0.7513 (0.0386) & -- & 0.7437 (0.0484) \\
        & 0.5  & 0.7411 (0.0169) & 0.7514 (0.0205) & -- & 0.7225 (0.0240) \\
        & 1    & 0.7192 (0.0052) & 0.7448 (0.0160) & -- & 0.6663 (0.0102) \\
        & 2    & 0.6946 (0.0133) & 0.6837 (0.0022) & 4.6771 & 0.6457 (0.0067) \\
        & 4    & 0.6833 (0.0149) & 0.6324 (0.0003) & 1.7021 & 0.6321 (0.0003) \\
        & $\infty$ & 0.6802 (0.0053) & -- & -- & 0.6319 (0.0001) \\
    \hline
        \multirow{6}{*}{\shortstack{\texttt{energy} \\ ($\times 10^{-3}$)}} 
        & 0.25 & 8.2643 (0.2053) & 8.5292 (0.0064) & 3027.8 & 8.0378 (0.1372) \\
        & 0.5  & 8.2027 (0.1870) & 8.4731 (0.0023) & 55.5 & 7.9668 (0.1036) \\
        & 1    & 8.1793 (0.1011) & 8.0962 (0.0018) & 17.9 & 7.6883 (0.0832) \\
        & 2    & 8.0678 (0.0872) & 7.7093 (0.0006) & 11.6 & 7.5157 (0.0425) \\
        & 4    & 8.0339 (0.0206) & 7.3350 (0.0005) & 11.5 & 7.3320 (0.0581) \\
        & $\infty$ & 7.9260 (0.0080) & -- & -- & 7.2562 (0.0071) \\
    \hline
        \multirow{6}{*}{\texttt{CA\_housing}} 
        & 0.25 & 1.2643 (0.0281) & 1.2951 (3.17e-5) & -- & 0.9184 (0.0497) \\
        & 0.5  & 1.2561 (0.0357) & 1.2605 (1.62e-5) & -- & 0.7611 (0.0104) \\
        & 1    & 1.2316 (0.0232) & 1.1347 (3.56e-6) & 3.1290 & 0.6927 (0.0127) \\
        & 2    & 1.1881 (0.0284) & 0.8966 (1.82e-6) & 0.6694 & 0.6570 (0.0111) \\
        & 4    & 1.1321 (0.0215) & 0.6500 (6.48e-6) & 0.6635 & 0.6552 (0.0097) \\
        & $\infty$ & 1.0641 (0.0011) & -- & -- & 0.6480 (0.0004) \\
    \hline
    
    \end{tabular}
    \end{adjustbox}
    \caption{Validation MSEs (mean and std.\ in parenthesis over 5 runs) for private linear regression on real-world datasets under varying privacy budgets $\epsilon$.}
    \label{tab:private_linear_regression_real_data_res_table}
\end{table}

\begin{table}[h]
    \centering
    \begin{adjustbox}{width=0.6\linewidth}
    \begin{tabular}{|c|c|c|c|c|}
    \hline
         Dataset & $\eps$  & \dpsgd\ & \rronbins\ & \switch\ \\
    \hline
        \multirow{5}{*}{\texttt{supercond}} 
        & $0.25$ & $0.0181\,(0.0014)$ & $0.0330\,(0.0001)$ & $0.0172\,(0.0009)$ \\
        & $0.5$  & $0.0147\,(0.0011)$ & $0.0314\,(0.0001)$ & $0.0147\,(0.0008)$ \\
        & $1$    & $0.0122\,(0.0007)$ & $0.0249\,(0.0002)$ & $0.0114\,(0.0005)$ \\
        & $2$    & $0.0101\,(0.0006)$ & $0.0137\,(0.0003)$ & $0.0095\,(0.0005)$ \\
        & $4$    & $0.0093\,(0.0004)$ & $0.0065\,(0.0003)$ & $0.0086\,(0.0007)$ \\
    \cline{3-5}
          & $\infty$ & 0.0055 (0.0002) & -- & 0.0036 (0.0000) \\
    \hline
        \multirow{6}{*}{\texttt{TomsHardware}} 
          & $0.25$ & 0.0945 (0.0108) & 1.9202 (0.0308) & 0.0870 (0.0052) \\
          & $0.5$  & 0.0838 (0.0071) & 1.9251 (0.0366) & 0.0818 (0.0025) \\
          & $1$    & 0.0799 (0.0099) & 1.9071 (0.0666) & 0.0793 (0.0074) \\
          & $2$    & 0.0781 (0.0041) & 1.2699 (0.0026) & 0.0748 (0.0089) \\
          & $4$    & 0.0764 (0.0072) & 0.3229 (0.0004) & 0.0719 (0.0034) \\
    \cline{3-5}
          & $\infty$ & 0.0212 (0.0002) & -- & 0.0182 (0.0002) \\
    \hline
        \multirow{5}{*}{\texttt{wave}} 
        & $0.25$ & $0.0597\,(0.0089)$ & $0.0149\,(0.0002)$ & $0.0127\,(0.0012)$ \\
        & $0.5$  & $0.0132\,(0.0015)$ & $0.0142\,(0.0002)$ & $0.0088\,(0.0016)$ \\
        & $1$    & $0.0099\,(0.0009)$ & $0.0114\,(0.0001)$ & $0.0043\,(2.30e\!-\!04)$ \\
        & $2$    & $0.0080\,(0.0012)$ & $0.0066\,(0.0002)$ & $0.0030\,(4.85e\!-\!05)$ \\
        & $4$    & $0.0056\,(0.0009)$ & $0.0025\,(0.0002)$ & $0.0028\,(8.23e\!-\!05)$ \\
    \cline{3-5}
          & $\infty$ & 0.0019 (0.0003) & -- & 0.0012 (0.0002) \\
    \hline
    \end{tabular}
    \end{adjustbox}
    \caption{Validation MSEs (mean and std.\ in parenthesis over 5 runs) for private models with linear input transformations and a two-layer MLP prediction layer (16, 8 hidden units) under varying privacy budgets $\epsilon$. }
    \label{tab:private_mlp_real_data_res_table}
\end{table}

\begin{table}[h]
    \centering
    \begin{adjustbox}{width=0.5\textwidth}
    \begin{tabular}{|c|c|c|c|}
    \hline
        $\eps$ & \dpsgd & \precond & \rronbins (reported) \\
    \hline
        0.25 & 6842.63 (15.63) & 9677.99 (97.21) & -- \\
        0.5  & 6780.06 (107.82) & 6998.62 (103.04) & 10901.33 (36.54) \\
        1    & 6673.56 (10.84) & 5630.08 (46.49) & 9744.08 (37.59) \\
        2    & 6589.69 (4.86) & 5335.86 (37.92) & 7294.93 (34.03) \\
        4    & 6565.11 (34.31) & 5297.63 (12.28) & 4769.61 (25.01) \\
        $\infty$ & 5287.43 (3.87) & 4726.78 ($1.98 \times 10^{-5}$) & -- \\
    \hline
    \end{tabular}
\end{adjustbox}
    \caption{Validation MSEs (mean and std.\ in parenthesis over 5 runs) on \texttt{Criteo Sponsored Search Conversion} using linear models under varying privacy budgets $\epsilon$. Results for \rronbins\ are taken from~\cite{ghazi2023regression_ldp} when available.}
\label{tab:criteo_search_linear_model_5_runs}
\end{table}

\begin{table}[h]
    \centering
    \begin{adjustbox}{width=0.5\textwidth}
    \begin{tabular}{|c|c|c|c|}
    \hline
        $\eps$ & \dpsgd & \switch & \rronbins (reported) \\
    \hline
        0.25 & 8533.74 (2001.51) & 6975.90 (11.63) & -- \\
        0.5  & 8073.85 (883.22) & 6959.93 (15.74) & 10901.33 (36.54) \\
        1    & 7472.65 (1022.24) & 6935.00 (47.36) & 9744.08 (37.59) \\
        2    & 6933.28 (324.31) & 6529.16 (45.53) & 7294.93 (34.03) \\
        4    & 6444.03 (405.99) & 6365.98 (75.15) & 4769.61 (25.01) \\
        $\infty$ & 4296.44 (87.26) & 4128.41 (23.31) & -- \\
    \hline
    \end{tabular}
    \end{adjustbox}
    \caption{Validation MSEs (mean and std.\ in parenthesis over 5 runs) on \texttt{Criteo Sponsored Search Conversion} using models with linear input transformations and a two-layer MLP prediction head (128, 64 hidden units) under varying privacy budgets $\epsilon$. Results for \rronbins\ are taken from~\cite{ghazi2023regression_ldp} when available.}
    \label{tab:criteo_search_mlp_5_runs}
\end{table}


\end{document}